\newtheorem{theo}{Theorem}
\newtheorem{defn}{Definition}
\newtheorem{lma}{Lemma}
\newtheorem{rem}{Remark}
\newtheorem{cor}{Corollary}
\DeclareMathOperator*{\argmin}{arg\,min}
\DeclareSymbolFont{bbold}{U}{bbold}{m}{n}
\DeclareSymbolFontAlphabet{\mathbbold}{bbold}
\title{Optimal Learning Rates for Regularized Conditional Mean Embedding 
}
\author{
  Zhu Li\thanks{Equal Contribution.} \\
  Gatsby Computational Neuroscience Unit \\
  University College London \\
  \texttt{zhu.li@ucl.ac.uk} \\
  \And
  Dimitri Meunier\footnotemark[1]  \\
  Gatsby Computational Neuroscience Unit \\
  University College London \\
  \texttt{ dimitri.meunier.21@ucl.ac.uk}
   \And
  Mattes Mollenhauer \\
  Department of Mathematics and Computer Science\\
  Freie Universit\"{a}t Berlin\\
  \texttt{mattes.mollenhauer@fu-berlin.de} \\
  \And
  Arthur Gretton\\
  Gatsby Computational Neuroscience Unit \\
  University College London \\
  \texttt{arthur.gretton@gmail.com} \\
}
\begin{document}

\maketitle
\def\thefootnote{1}

\begin{abstract}
We address the consistency of a kernel ridge regression estimate of the  conditional mean embedding (CME), which is an embedding of the conditional distribution of $Y$ given $X$ into a target reproducing kernel Hilbert space $\mathcal{H}_Y$. The CME allows us to take conditional expectations of target RKHS functions, and has been employed in nonparametric causal and Bayesian inference.
We address the misspecified setting, where the target CME is
in the space of Hilbert-Schmidt operators acting from an input interpolation space between $\mathcal{H}_X$ and $L_2$, to $\mathcal{H}_Y$. This space of operators is shown to be isomorphic to a newly defined vector-valued interpolation space. Using this isomorphism, we derive a novel and adaptive statistical learning rate for the empirical CME estimator under the misspecified setting. Our analysis reveals that our rates match the optimal $O(\log n / n)$ rates without assuming $\mathcal{H}_Y$ to be finite dimensional. We further establish a lower bound on the learning rate, which shows that the obtained upper bound is optimal.
\end{abstract}

\section{Introduction}
Approximation of the conditional expectation operator is a central issue in the statistical learning community, and many approaches have been proposed \cite{williams2015data,klus2015numerical,klus2018data,korda2018convergence}. Given random variables $X$ and $Y$, the conditional expectation operator for a function $f$ is defined
\begin{IEEEeqnarray}{rCL}
\left[Pf\right](x):= \mathbb{E}\left[f(Y)|X = x\right]. \nonumber
\end{IEEEeqnarray}

Conventional parametric models to approximate $P$ often involve density estimation and expensive numerical analysis. Hence, recent studies attempt to explore a new framework to approximate $P$ via kernel methods. Specifically, given kernels $k_X$ and $k_Y$ with corresponding reproducing kernel Hilbert space $\mathcal{H}_X$ and $\mathcal{H}_Y$ for $X$ and $Y$ respectively, we may define the conditional mean embedding (CME) as $F_*(x) := \mathbb{E}[k_Y(\cdot,Y)|X= x]$, and we may  employ the reproducing property to obtain $[Pf](x) = \langle f, F_*(x) \rangle_{\mathcal{H}_Y}$ for any $f \in \mathcal{H}_Y$.
The advantage of the CME framework is that it
allows the  straightforward evaluation of conditional expectations of any function in $\mathcal{H}_Y$.
The CME framework has been applied successfully to many learning problems such as probabilistic inference \cite{song2013kernel}, reinforcement learning \cite{nishiyama2012hilbert,grunewalder2012modelling} and causal inference \cite{mitrovic2018causal,singh2019kernel}.

Despite these successful applications,
there have been two main challenges in establishing a rigorous theory of CMEs.
The first challenge, remarkably, has been in establishing a principled and sufficiently general definition of the conditional mean embedding itself.
The CME was originally introduced as an operator mapping from $\mathcal{H}_X$ to $\mathcal{H}_Y$
\cite{fukumizu2004dimensionality,song2009hilbert}. This definition has the benefit of elegance, and of a straightforward expression
in terms of feature covariances and cross-covariances. 
A disadvantage is that the definition requires the conditional mean $\mathbb{E}[g(Y)|X=\cdot]\in \mathcal{H}_X,\:\forall g\in \mathcal{H}_Y$.
This strong assumption may be violated in practice (see \cite{klebanov2020rigorous, klebanov2021linear} and \cite[Section 3.1]{fukumizu2013kernel} for illustrations and alternative
requirements),
and significantly restricts the class of distributions on which we can define a CME.

An alternative approach, due to  \cite{grunewalder2012conditional}, is to express the conditional mean embedding as the solution of a least-squares regression
problem in a vector-valued RKHS \cite{carmeli2006vector,carmeli2010vector}.
In subsequent work, a rigorous measure-theoretic definition of the conditional mean embedding as the
$\mathcal{H}_Y$-valued square
integrable function $F_*$ is established in \cite{park2020measure,klebanov2020rigorous}, which is the definition we will use in the present work.
Both \cite{grunewalder2012conditional,park2020measure} connect this
CME definition to the original operator-mapping definition by means of a surrogate loss, which upper bounds the regression loss. 
A direct connection remained elusive until the work of \cite{mollenhauer2020nonparametric,klebanov2021linear},
which show that under denseness assumptions,
the CME can be arbitrarily well approximated by a Hilbert-Schmidt operator from $\mathcal{H}_X$ to $\mathcal{H}_Y$,
thus connecting the operator-theoretic and measure-theoretic definitions.

The second challenge has been in obtaining consistency results and the optimal learning rates for empirical estimates of the CME.
An early consistency analysis of the sample estimator, due to \cite{song2010nonparametric}, requires very strong smoothness assumptions.
A more refined analysis, due to
\cite{grunewalder2012conditional}, attains the minimax optimal learning rate $O(\log n /n)$  for the sample estimator, but only in the case where $\mathcal{H}_Y$ is finite dimensional. For the infinite dimensional RKHS, \cite{singh2019kernel} and \cite{park2020measure} establish consistency in the well-specified case, with learning rates of $O(n^{-1/6})$ and $O(n^{-1/4})$. Nevertheless, the obtained rates are far from optimal and consistency under misspecified setting was not established. Recently, \cite{talwai2021sobolev} obtains a sharper rate under the misspecified case using the interpolation RKHS.  
The results of \cite{talwai2021sobolev}
impose assumptions, however, which strongly limit their applicability (refer to Remark \ref{rem:inter_RKHS} for a rigorous discussion):
\begin{enumerate}\itemsep0pt
    \item They require an explicit relation between the smoothness of the target CME and the size of the RKHS. In particular, when the kernel has slow eigenvalue decay (as in  the case of Mat{\'e}rn kernels, for example), the setting is very close to the well-specified scenario.
    \item They rely on the explicit construction of an interpolation RKHS.
    Unlike in \cite{fischer2020sobolev}, where a similar approach is based
    only on equivalence classes of functions (i.e., Sobolev-like spaces), 
    this concept requires the embedding of the RKHS into the corresponding 
    $L^2$-space (or equivalently the integral operator) 
    to be injective---which is generally not the case
    (see \cite{steinwart2012mercer} for details). 
    Counterexamples can easily be constructed when one considers
    degenerate pushforward measures on the RKHS  in one or more coordinate directions
    (for example point masses). 
    By contrast, the authors of \cite{fischer2020sobolev} do
    not explicitly require the injectivity in the real-valued learning scenario. Moreover, in case where the chosen kernel has slow eigenvalue decay, the constructed interpolation RKHS is not well-defined.
\end{enumerate}
Finally, to our knowledge, there is presently no result establishing a matching lower bound for the CME learning rate
in the case where $\mathcal{H}_Y$ may be infinite dimensional. Hence, whether the obtained upper rate is optimal remains unknown.

In the present work, we address the challenges mentioned above.  
Building on \cite{park2020measure,mollenhauer2020nonparametric} and the interpolation space theory results of 
\cite{steinwart2012mercer, fischer2020sobolev},
we introduce an \textbf{interpolation space consisting of vector-valued
functions} via a natural tensor product construction.
This concept is compatible with the recent measure-theoretic definition
of the CME due to \cite{park2020measure}
and allows to prove convergence in the misspecified setting 
without the  limitations of prior work.
Based on this novel vector-valued interpolation space, we establish {\bf consistency and convergence rates of the CME sample estimator in the misspecified setting}. In particular, under certain benign conditions, we obtain the optimal $O(1 /n)$ learning rate up to a logarithmic factor. This matches with the current optimal analysis from \cite{grunewalder2012conditional} without the restrictive assumption of finite dimensional $\mathcal{H}_Y$. Thanks to our operator-theoretic definition of the CME, and unlike \cite{talwai2021sobolev}, we do not require an a-priori relation between the rate of kernel eigenvalue decay and the smoothness of the conditional mean operator (i.e., our results apply generally in the misspecified setting). Finally, in Theorem~\ref{theo:lower_bound}, we provide a novel {\bf lower bound on the CME learning rate}, which demonstrates that the obtained upper rate is optimal in the setting of a smooth CME operator.

\section{Background}\label{sec:bg}
Throughout the paper, we consider two random variables $X$, $Y$ defined respectively on the second countable locally compact Hausdorff spaces $E_X$ and $E_Y$ endowed with their respective Borel $\sigma$-field $\mathcal{F}_{E_X}$ and $\mathcal{F}_{E_Y}$. We let $(\Omega, \mathcal{F},\mathbb{P})$ be the underlying probability space with expectation operator $\mathbb{E}$. Let $\pi$ and $\nu$ be the pushforward of $\mathbb{P}$ under $X$ and $Y$ respectively, i.e., $X \sim \pi$ and $Y \sim \nu$. We use the Markov kernel $p: E_X \times \mathcal{F}_{E_Y} \rightarrow \mathbb{R}_+$ to define the conditional distribution: \[\mathbb{P}[Y \in A|X = x] = \int_{A} p(x,dy),\] for all $x \in E_X$ and events $A \in \mathcal{F}_{E_Y}$. We denote the space of real-valued Lebesgue square integrable functions on $(E_X,\mathcal{F}_{E_X})$ with respect to $\pi$ as $L_2(E_X,\mathcal{F}_{E_X},\pi)$ abbreviated $L_2(\pi)$ and similarly for $\nu$ we use $L_2(E_Y,\mathcal{F}_{E_Y},\nu)$ abbreviated $L_2(\nu)$. Let $B$ be a separable Banach space with norm $\|\cdot\|_B$ and $H$ a separable real Hilbert space with inner product $\langle \cdot, \cdot \rangle_{H}$. We write $\mathcal{L}(B,B')$ as the Banach space of bounded linear operators from $B$ to another Banach space $B'$, equipped with the operator norm $\|\cdot\|_{B\rightarrow B'}$. When $B=B'$, we simply write $\mathcal{L}(B)$ instead. We also let $L_p(E_X,\mathcal{F}_{E_X},\pi; B)$, abbreviated $L_p(\pi; B)$, the space of strongly $\mathcal{F}_{E_X}-\mathcal{F}_B$ measurable and Bochner $p$-integrable functions $f: E_X \rightarrow B$ for $1 \leq p \leq \infty$. Finally, we denote the $p$-Schatten class $S_p(H,H')$ to be the space of all compact operators $C$ from $H$ to another Hilbert space $H'$ such that $\|C\|_{S_p(H,H')} := \left\|\left(\sigma_i(C)\right)_{i\in J}\right\|_{\ell_p}$ is finite. Here $\|\left(\sigma_i(C)\right)_{i\in J}\|_{\ell_p}$ is the $\ell_p$ sequence space norm of the sequence of the strictly positive singular values of $C$ indexed by the countable set $J$. For $p = 2$, $S_2(H,H')$ is the Hilbert space of Hilbert-Schmidt operators from $H$ to $H'$. 

{\bf Tensor Product of Hilbert Spaces (\cite{aubin2000applied}, Section 12):} 
Denote $H\otimes H'$  the tensor product of Hilbert spaces $H$, $H'$. The Hilbert space $H\otimes H'$ is the completion of the algebraic tensor product with respect to the norm induced by the inner product $\langle x_1\otimes x_1', x_2\otimes x_2'\rangle_{H\otimes H'} = \langle x_1,x_2 \rangle_H \langle x_1', x_2'\rangle_{H'}$ for $x_1,x_2 \in H$ and $x_1', x_2' \in H'$ defined on the elementary tensors
of $H\otimes H'$. This definition extends to
$\operatorname{span}\{x\otimes x'| x\in H, x'\in H'\}$ and finally to
its completion. The space $H\otimes H'$ is separable whenever both $H$ and $H'$ are separable. The element $x\otimes x' \in H \otimes H'$ is treated as the linear rank-one operator $x\otimes x': H' \rightarrow H$ defined by $y' \rightarrow \langle y',x' \rangle_{H'}x $ for $y' \in H'$. 
Based on this identification, the tensor product space $H\otimes H'$ is isometrically isomorphic to the space of Hilbert-Schmidt operators from $H'$ to $H$, i.e., $H\otimes H' \simeq S_2(H',H)$. We will hereafter not make the distinction between those two spaces and see them as identical. If $\{e_i\}_{i \in I}$ and $\{e'_j\}_{j \in J}$ are orthonormal basis in $H$ and $H'$, $\{e_i\otimes e'_j\}_{i\in I, j\in J}$ is an orthonormal basis in $H \otimes H'$. 

\begin{rem}[\cite{aubin2000applied}, Theorem 12.6.1]\label{rem:tensor_product}
Consider the Bochner space $L_2(\pi;H)$ where $H$ is a separable Hilbert space. One can show that $L_2(\pi;H)$ is isometrically identified with the tensor product space $H \otimes L_2(\pi)$.
\end{rem}

{\bf Reproducing Kernel Hilbert Spaces, Covariance Operators:} We let $k_{X}: E_X \times E_X \rightarrow \mathbb{R}$ be a symmetric and positive definite kernel function and $\mathcal{H}_{X}$ be a vector space of $E_X \rightarrow \mathbb{R}$ functions, endowed with a Hilbert space structure via an inner product $\langle\cdot, \cdot\rangle_{\mathcal{H}_{X}}$. $k_{X}$ is a reproducing kernel of $\mathcal{H}_{X}$ if and only if: 1. $\forall x \in E_X, k_{X}(\cdot,x) \in \mathcal{H}_{X} ; 2 . \forall x \in E_X$ and $\forall f \in \mathcal{H}_{X}, f(x)=\left\langle f, k_{X}(x,\cdot)\right\rangle_{\mathcal{H}_{X}}$.   A space $\mathcal{H}_{X}$ which possesses a reproducing kernel is called a reproducing kernel Hilbert space (RKHS)\cite{berlinet2011reproducing}. We denote the canonical feature map of $\mathcal{H}_{X}$ as $\phi_{X}(x) = k_{X}(\cdot,x)$. Similarly for $E_Y$, we consider a RKHS $\mathcal{H}_{Y}$ with symmetric and positive definite kernel $k_{Y}: E_Y \times E_Y \rightarrow \mathbb{R}$ and canonical feature map denoted as $\phi_{Y}$.

We require some technical assumptions on the previously defined RKHSs and kernels:
\begin{itemize}[itemsep=0em]
    \item[$1$.] $\mathcal{H}_{X}$ and $\mathcal{H}_{Y}$ are separable, this is satisfied if $E_X$ and $E_Y$ are Polish spaces and 
    $k_{X}, k_{Y}$ are continuous \cite{steinwart2008support};
    
    \item[$2$.]  $k_{X}(\cdot,x)$ and $k_{Y}(\cdot,y)$ are measurable for $\pi$-almost all $x \in E_X$ and $\nu$-almost all $y \in E_Y$;
    
    \item[$3$.] $k_X(x,x) \leq \kappa_X^2$ for $\pi$-almost all $x \in E_X$ and $k_Y(y,y) \leq \kappa_Y^2$ for $\nu$-almost all $y \in E_Y$.
\end{itemize}

Note that the above assumptions are not restrictive in practice, as well-known kernels such as the Gaussian, Laplacian and Mat{\'e}rn kernels satisfy all of the above assumptions on $\mathbb{R}^d$. We now introduce some facts about the interplay between $\mathcal{H}_{X}$ and $L_{2}(\pi),$ which has been extensively studied by \cite{smale2004shannon,smale2005shannon},\cite{de2006discretization} and \cite{steinwart2012mercer}. We first define the (not necessarily injective) embedding $I_{\pi}: \mathcal{H}_{X} \rightarrow L_{2}(\pi)$, mapping a function $f \in \mathcal{H}_{X}$ to its $\pi$-equivalence class $[f]$. The embedding is a well-defined compact operator as long as its Hilbert-Schmidt norm is finite. In fact,
this requirement is satisfied since its Hilbert-Schmidt norm can be computed as 
$$
\left\|I_{\pi}\right\|_{S_{2}\left(\mathcal{H}_{X}, L_{2}(\pi)\right)}=\|k_{X}\|_{L_{2}(\pi)}:=\left(\int_{E_X} k_{X}(x, x) \mathrm{d} \pi(x)\right)^{1 / 2}<\infty.
$$
The adjoint operator $S_{\pi}:=I_{\pi}^{*}: L_{2}(\pi) \rightarrow \mathcal{H}_{X}$ is an integral operator with respect to the kernel $k_{X}$, i.e. for $f \in L_{2}(\pi)$ and $x \in E_X$ we have
$$
\left(S_{\pi} f\right)(x)=\int_{E_X} k_{X}\left(x, x^{\prime}\right) f\left(x^{\prime}\right) \mathrm{d} \pi\left(x^{\prime}\right)
$$
Next, we define the self-adjoint and positive semi-definite integral operators
$$
L_{X}:=I_{\pi} S_{\pi}: L_{2}(\pi) \rightarrow L_{2}(\pi) \quad \text { and } \quad C_{XX}:=S_{\pi} I_{\pi}: \mathcal{H}_{X} \rightarrow \mathcal{H}_{X}
$$
These operators are trace class and their trace norms satisfy
$$
\left\|L_{X}\right\|_{S_{1}\left(L_{2}(\pi)\right)}=\left\| C_{XX}\right\|_{S_{1}(\mathcal{H}_{X})}=\left\|I_{\pi}\right\|_{S_2\left(\mathcal{H}_{X},L_{2}(\pi)\right)}^{2}=\left\|S_{\pi}\right\|_{S_2\left(L_{2}(\pi),\mathcal{H}_{X}\right)}^{2} .
$$

\paragraph{Vector-valued RKHS} We also give a brief overview of the vector-valued reproducing kernel Hilbert space (vRKHS). We refer the reader to \cite{carmeli2006vector} and \cite{carmeli2010vector} for more detail.

\begin{defn}
Let $H$ be a real Hilbert space and $K: E_X \times E_X \rightarrow \mathcal{L}(H)$ be an operator valued positive-semidefinite (psd) kernel such that $K(x,x') = K(x',x)^*$ for all $x,x' \in E_X$, and for all $x_1,\dots,x_n\in E_X$ and $h_i,h_j \in H$,
\[\sum_{i,j =1}^n \langle h_i,K(x_i,x_j)h_j \rangle_{H} \geq 0.\]
\end{defn}

Fix $K$, $x \in E_X$, and $h \in H$, $\left[K_{x} h\right](\cdot):=K(\cdot, x) h$
defines a function from $E_X$ to $H$. We now consider
$$
\mathcal{G}_{\text {pre }}:=\operatorname{span}\left\{K_{x} h \mid x \in E_X, h \in H\right\}
$$
with inner product on $\mathcal{G}_{\text {pre }}$ by linearly extending the expression
\begin{IEEEeqnarray}{rCl}
\left\langle K_{x} h, K_{x^{\prime}} h^{\prime}\right\rangle_{\mathcal{G}}:=\left\langle h, K\left(x, x^{\prime}\right) h^{\prime}\right\rangle_{H} . \label{eqn:vrkhs_inp}
\end{IEEEeqnarray}

Let $\mathcal{G}$ be the completion of $\mathcal{G}_{\text{pre}}$ with respect to this inner product. We call $\mathcal{G}$ the vRKHS induced by the kernel $K$. The space $\mathcal{G}$ is a Hilbert space consisting of functions from $E_X$ to $H$ with the reproducing property
\begin{IEEEeqnarray}{rCl}
\langle F(x), h\rangle_{H}=\left\langle F, K_{x} h\right\rangle_{\mathcal{G}}, \label{eqn:vrkhs_repro}
\end{IEEEeqnarray}
for all $F \in \mathcal{G}, h \in H$ and $x \in E_X$. For all $F \in \mathcal{G}$ we obtain
$$
\|F(x)\|_{H} \leq\|K(x, x)\|^{1 / 2}\|F\|_{\mathcal{G}}, \quad x \in E_X.
$$
Since the inner product given by Eq.~(\ref{eqn:vrkhs_inp}) implies that $K_{x}$ is a bounded operator for all $x \in E_X$. For all $F \in \mathcal{G}$ and $x \in E_X$, Eq.~(\ref{eqn:vrkhs_repro}) can be written as $F(x)=K_{x}^{*} F$. The linear operators $K_{x}: H \rightarrow \mathcal{G}$ and $K_{x}^{*}: \mathcal{G} \rightarrow H$ are bounded with
$$
\left\|K_{x}\right\|=\left\|K_{x}^{*}\right\|=\|K(x, x)\|^{1 / 2}
$$
and we have $K_{x}^{*} K_{x^{\prime}}=K\left(x, x^{\prime}\right), x, x^{\prime} \in E_X$. In the following, we will denote $\mathcal{G}$ as the vRKHS induced by the kernel $K: E_X \times E_X \rightarrow \mathcal{L}(\mathcal{H}_{Y})$ with \[K(x,x') := k_{X}(x,x')\text{Id}_{\mathcal{H}_{Y}}, x,x' \in E_X.\]

An important property of $\mathcal{G}$ is that elements in $\mathcal{G}$ are isometric to Hilbert-Schmidt operators between $\mathcal{H}_{X}$ and $\mathcal{H}_{Y}$.

\begin{theo}[Theorem $4.4$ in \cite{mollenhauer2020nonparametric}]\label{theo:isometric}
Let $\mathcal{H}_{X}$ and $\mathcal{H}_{Y}$ be real-valued RKHS with kernel $k_{X}$ and $k_{Y}$ respectively. For $f_{Y} \in \mathcal{H}_{Y}$ and $g_{X} \in \mathcal{H}_{X}$, define the map $\bar{\Psi}$ on the elementary tensors as 
\[\left[\bar{\Psi}\left(f_{Y} \otimes g_{X}\right)\right](x):= g_{X}(x)f_{Y} = \left(f_{Y} \otimes g_{X}\right)\phi_{X}(x).\] 
We then have that $\bar{\Psi}$ defines an isometric isomorphism between $S_2(\mathcal{H}_{X}, \mathcal{H}_{Y})$ and $\mathcal{G}$ through linearity and completion.
\end{theo}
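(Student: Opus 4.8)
The plan is to transport everything through the identification $S_2(\mathcal{H}_X,\mathcal{H}_Y)\simeq\mathcal{H}_Y\otimes\mathcal{H}_X$ recalled above (an elementary tensor $f_Y\otimes g_X$ being the rank-one operator $h\mapsto\langle h,g_X\rangle_{\mathcal{H}_X}f_Y$), and to show that $\bar\Psi$ restricts to a bijective linear isometry between the dense subspace
$\mathcal{D}:=\operatorname{span}\{f_Y\otimes\phi_X(x)\mid f_Y\in\mathcal{H}_Y,\ x\in E_X\}$ of $S_2(\mathcal{H}_X,\mathcal{H}_Y)$ and the dense subspace $\mathcal{G}_{\text{pre}}$ of $\mathcal{G}$; the full statement then follows by taking closures. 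First I would note that on these distinguished generators $\bar\Psi$ is explicit: $[\bar\Psi(f_Y\otimes\phi_X(x))](x')=\phi_X(x)(x')\,f_Y=k_X(x',x)f_Y=[K_xf_Y](x')$, so $\bar\Psi(f_Y\otimes\phi_X(x))=K_xf_Y$, which is precisely a generator of $\mathcal{G}_{\text{pre}}$; hence, extended linearly, $\bar\Psi$ maps $\mathcal{D}$ onto $\mathcal{G}_{\text{pre}}$.

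The key computation is the isometry on $\mathcal{D}$. For $u=\sum_i f_{Y,i}\otimes\phi_X(x_i)$ I would expand
$\|\bar\Psi u\|_{\mathcal{G}}^2=\sum_{i,j}\langle K_{x_i}f_{Y,i},K_{x_j}f_{Y,j}\rangle_{\mathcal{G}}$ and use $K_{x}^{*}K_{x'}=K(x,x')=k_X(x,x')\text{Id}_{\mathcal{H}_Y}$ to obtain $\sum_{i,j}k_X(x_i,x_j)\langle f_{Y,i},f_{Y,j}\rangle_{\mathcal{H}_Y}$; on the other side the tensor inner product gives $\|u\|_{\mathcal{H}_Y\otimes\mathcal{H}_X}^2=\sum_{i,j}\langle f_{Y,i},f_{Y,j}\rangle_{\mathcal{H}_Y}\langle\phi_X(x_i),\phi_X(x_j)\rangle_{\mathcal{H}_X}=\sum_{i,j}k_X(x_i,x_j)\langle f_{Y,i},f_{Y,j}\rangle_{\mathcal{H}_Y}$, the same quantity. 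This single identity simultaneously shows that $\bar\Psi$ is well defined on $\mathcal{D}$ (if $u=0$ in $S_2$ then $\|\bar\Psi u\|_{\mathcal{G}}=0$) and that it is isometric there. Since $\{\phi_X(x)\}_{x\in E_X}$ is total in $\mathcal{H}_X$ (a function orthogonal to every $\phi_X(x)$ vanishes pointwise by the reproducing property), $\mathcal{D}$ is dense in $\mathcal{H}_Y\otimes\mathcal{H}_X\simeq S_2(\mathcal{H}_X,\mathcal{H}_Y)$, while $\mathcal{G}_{\text{pre}}$ is dense in $\mathcal{G}$ by definition. A densely defined linear isometry with dense range extends uniquely to an isometric isomorphism $\bar\Psi:S_2(\mathcal{H}_X,\mathcal{H}_Y)\to\mathcal{G}$.

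Finally I would verify that the extension still satisfies the pointwise prescription $[\bar\Psi A](x)=A\phi_X(x)$ for all $A\in S_2(\mathcal{H}_X,\mathcal{H}_Y)$ and $x\in E_X$, so that in particular $[\bar\Psi(f_Y\otimes g_X)](x)=g_X(x)f_Y$ holds for every $g_X\in\mathcal{H}_X$ and not merely for $g_X\in\operatorname{span}\{\phi_X(x)\}$. For fixed $x$, evaluation $F\mapsto F(x)=K_x^{*}F$ is bounded on $\mathcal{G}$ with norm $\|K(x,x)\|^{1/2}=k_X(x,x)^{1/2}\le\kappa_X$, and $A\mapsto A\phi_X(x)$ is bounded from $S_2(\mathcal{H}_X,\mathcal{H}_Y)$ to $\mathcal{H}_Y$ with norm $\le\|\phi_X(x)\|_{\mathcal{H}_X}\le\kappa_X$; the two bounded maps $A\mapsto[\bar\Psi A](x)$ and $A\mapsto A\phi_X(x)$ agree on the dense set $\mathcal{D}$ (since $[K_{x'}f_Y](x)=k_X(x,x')f_Y=(f_Y\otimes\phi_X(x'))\phi_X(x)$), hence everywhere.

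I expect the only genuinely delicate point to be organizational rather than technical: rather than defining $\bar\Psi$ coherently on all elementary tensors $f_Y\otimes g_X$ and then checking tensor-product consistency directly, one should isolate the generating family $\{f_Y\otimes\phi_X(x)\}$, whose images are exactly the vRKHS generators $K_xf_Y$, carry out well-definedness, the norm identity, and density on that family, and recover the general pointwise formula only at the end by continuity of evaluation in $\mathcal{G}$. The non-injectivity of $I_\pi$ is irrelevant here, since the statement concerns only $\mathcal{H}_X$, $\mathcal{H}_Y$ and $\mathcal{G}$; the $L_2(\pi)$ structure enters the analysis only afterwards.
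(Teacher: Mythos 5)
Your proof is correct. The paper itself does not prove this statement --- it is imported as Theorem 4.4 of \cite{mollenhauer2020nonparametric} --- but your argument is the standard route and is fully compatible with the way $\mathcal{G}$ is constructed here as the completion of $\mathcal{G}_{\text{pre}}$: the correspondence $f_Y\otimes\phi_X(x)\mapsto K_x f_Y$, the identical Gram computation $\sum_{i,j}k_X(x_i,x_j)\langle f_{Y,i},f_{Y,j}\rangle_{\mathcal{H}_Y}$ on both sides (which settles well-definedness and isometry in one stroke), totality of $\{\phi_X(x)\}_{x\in E_X}$ in $\mathcal{H}_X$ for density of your $\mathcal{D}$, and recovery of the pointwise prescription $[\bar\Psi A](x)=A\phi_X(x)$ from continuity of the evaluation $K_x^*$. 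One cosmetic remark: in the last step you do not need the uniform bound $\kappa_X$ (which under Assumption 3 holds only $\pi$-almost everywhere); for each fixed $x$ the boundedness of $F\mapsto F(x)$ on $\mathcal{G}$ and of $A\mapsto A\phi_X(x)$ on $S_2(\mathcal{H}_X,\mathcal{H}_Y)$ only requires $k_X(x,x)<\infty$, which holds for every $x$.
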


More details regarding Theorem~\ref{theo:isometric} can be found in \cite[Theorem 4.4]{mollenhauer2020nonparametric}. The isometric isomorphism $\bar{\Psi}$ induces the operator reproducing property stated below.
\begin{cor}\label{theo:operep}
For every function $F\in \mathcal{G}$ there exists an operator $C := \bar{\Psi}^{-1}(F) \in S_2(\mathcal{H}_{X}, \mathcal{H}_{Y})$ such that \[F(x) = C\phi_{X}(x) \in \mathcal{H}_{Y},\] for all $x \in E_X$ with $\|C\|_{S_2(\mathcal{H}_{X}, \mathcal{H}_{Y})} = \|F\|_{\mathcal{G}}$ and vice versa. Conversely, for any pair $F \in \mathcal{G}$ and $C \in S_2(\mathcal{H}_{X}, \mathcal{H}_{Y})$, we have $C = \bar{\Psi}^{-1}(F)$ as long as $F(x) = C \phi_{X}(x)$.
\end{cor}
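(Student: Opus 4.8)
The plan is to read the statement off the isometric isomorphism $\bar{\Psi}$ supplied by Theorem~\ref{theo:isometric}, treating the identity $F(x) = C\phi_X(x)$ as a pointwise equality that first holds on a dense set and then passes to the limit.

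\emph{Forward direction.} Since $\bar{\Psi}\colon S_2(\mathcal{H}_X,\mathcal{H}_Y)\to\mathcal{G}$ is a bijective isometry, for $F\in\mathcal{G}$ the operator $C := \bar{\Psi}^{-1}(F)$ is the unique element of $S_2(\mathcal{H}_X,\mathcal{H}_Y)$ with $\bar{\Psi}(C)=F$, and isometry gives $\|C\|_{S_2(\mathcal{H}_X,\mathcal{H}_Y)}=\|F\|_{\mathcal{G}}$. It remains to verify $F(x)=C\phi_X(x)$ for every $x\in E_X$. For an elementary tensor $C=f_Y\otimes g_X$ this is precisely the defining formula $[\bar{\Psi}(f_Y\otimes g_X)](x)=(f_Y\otimes g_X)\phi_X(x)$, and it extends to the linear span $\mathcal{S}$ of elementary tensors by linearity of both $C\mapsto\bar{\Psi}(C)$ and $C\mapsto C\phi_X(\cdot)$. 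To handle a general $C$, fix $x$ and pick $C_n\in\mathcal{S}$ with $C_n\to C$ in $S_2(\mathcal{H}_X,\mathcal{H}_Y)$. On one hand $\|\bar{\Psi}(C_n)-F\|_{\mathcal{G}}=\|C_n-C\|_{S_2(\mathcal{H}_X,\mathcal{H}_Y)}\to0$, and the pointwise bound $\|G(x)\|_{\mathcal{H}_Y}\le k_X(x,x)^{1/2}\|G\|_{\mathcal{G}}$ for $G\in\mathcal{G}$ forces $[\bar{\Psi}(C_n)](x)\to F(x)$ in $\mathcal{H}_Y$. On the other hand $\|C_n\phi_X(x)-C\phi_X(x)\|_{\mathcal{H}_Y}\le\|C_n-C\|_{S_2(\mathcal{H}_X,\mathcal{H}_Y)}\,k_X(x,x)^{1/2}\to0$. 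Since $[\bar{\Psi}(C_n)](x)=C_n\phi_X(x)$ for each $n$, the two limits agree and $F(x)=C\phi_X(x)$.

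\emph{Converse and ``vice versa''.} Suppose $F\in\mathcal{G}$ and $C\in S_2(\mathcal{H}_X,\mathcal{H}_Y)$ satisfy $F(x)=C\phi_X(x)$ for all $x$. Writing $C':=\bar{\Psi}^{-1}(F)$, the forward direction gives $F(x)=C'\phi_X(x)$ for all $x$, so $(C-C')\phi_X(x)=0$ for every $x\in E_X$; since $\operatorname{span}\{\phi_X(x):x\in E_X\}$ is dense in $\mathcal{H}_X$ and $C-C'$ is bounded, $C=C'=\bar{\Psi}^{-1}(F)$, which also shows the representing operator is unique. Conversely, given $C\in S_2(\mathcal{H}_X,\mathcal{H}_Y)$, the function $F:=\bar{\Psi}(C)\in\mathcal{G}$ satisfies $F(x)=C\phi_X(x)$ and $\|F\|_{\mathcal{G}}=\|C\|_{S_2(\mathcal{H}_X,\mathcal{H}_Y)}$.

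\emph{Main obstacle.} Nothing here is deep beyond Theorem~\ref{theo:isometric}; the one point requiring care is the limiting argument in the forward direction, where one uses both that norm convergence in $\mathcal{G}$ implies pointwise convergence in $\mathcal{H}_Y$ (boundedness of the evaluation maps $K_x^*$ recalled before the theorem) and that $C\mapsto C\phi_X(x)$ is bounded from $S_2(\mathcal{H}_X,\mathcal{H}_Y)$ to $\mathcal{H}_Y$ (the operator norm is dominated by the Hilbert-Schmidt norm). The rest is bookkeeping.
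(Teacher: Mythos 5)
Your proof is correct and follows the route the paper intends: the paper itself only cites this as a simple extension of Lemma 15 of \cite{ciliberto2016consistent} and Corollary 4.5 of \cite{mollenhauer2020nonparametric}, and your argument fills in exactly those details by reading everything off the isometry $\bar{\Psi}$ of Theorem~\ref{theo:isometric} (elementary tensors, continuity of point evaluation in $\mathcal{G}$ and of $C\mapsto C\phi_X(x)$, and density of $\operatorname{span}\{\phi_X(x)\}$ in $\mathcal{H}_X$ for uniqueness). No gaps.
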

The proof of Corollary~\ref{theo:operep} is a simple extension of Lemma $15$ in \cite{ciliberto2016consistent} and Corollary $4.5$ in \cite{mollenhauer2020nonparametric}. Corollary ~\ref{theo:operep} shows that the vRKHS $\mathcal{G}$ is generated via the space of Hilbert-Schmidt operators $S_2(\mathcal{H}_{X}, \mathcal{H}_{Y})$ \[\mathcal{G}= \left\{F: E_X \rightarrow \mathcal{H}_{Y}|F = C\phi_{X}(\cdot), C \in S_2(\mathcal{H}_{X}, \mathcal{H}_{Y})\right\}.\]

{\bf Conditional Mean Embedding:}
A particular advantage of kernel methods is its convenience of operating probability distributions, see \cite{muandet2017kernel,sejdinovic2013equivalence} for examples. This is through the so called kernel mean embedding \cite{berlinet2011reproducing,smola2007hilbert,kanagawa2018gaussian}.
Assuming the integrability condition
$\int_{E_X} \sqrt{k_{X}(x, x)} d \pi(x)<\infty$ (which is satisfied when the kernel is almost surely bounded), we define the kernel mean embedding $\mu_{X}(\cdot)=\int_{E_X} k_{X}(\cdot,x) d \pi(x)$. It is easy to show that for each $f \in \mathcal{H}_{X}, \int_{E_X} f(x) d \pi(x)=\left\langle f, \mu_{X}\right\rangle_{\mathcal{H}_{X}}$. Replacing $\pi$ with the conditional distribution, we obtain the kernel conditional mean embedding as defined in \cite{park2020measure,klebanov2020rigorous}.

\begin{defn}
The $\mathcal{H}_{Y}$-valued conditional mean embedding (CME) for the Markov kernel $p(x,dy)$ is defined as 
\begin{IEEEeqnarray}{rCl}
F_*(x):=\int_{E_Y}\phi_{Y}(y)p(x,dy) = \mathbb{E}\left[\phi_{Y}(Y)|X = x \right] \in L_2(E_X,\mathcal{F}_{E_X},\pi; \mathcal{H}_{Y}) \label{eqn:CME}
\end{IEEEeqnarray}
\end{defn}

By the reproducing property, we have $\mathbb{E}[f_{Y}(Y)|X=x] = \langle f_{Y}, F_*(x) \rangle_{\mathcal{H}_{Y}}, \forall f_{Y} \in \mathcal{H}_{Y}$ and $x \in E_X$. The approximation of $F_*$ is a key concept in kernel methods. By \cite{mollenhauer2020nonparametric}, suppose we impose Assumptions $1$-$3$ together with two additional assumptions: i) $\mathcal{H}_{X} \subseteq C_0(E_X)$ where $C_0(E_X)$ is the space of continuous functions vanishing at infinity\footnote{This is satisfied if $k_{X}$ is bounded and $k_{X}(\cdot,x) \in C_0(E_X)$ for $\pi$-almost all $x \in E_X$.}; and ii) $\mathcal{H}_X$ is dense in $L_2(\pi)$, then we have that $\mathcal{G}$ is dense in $L_2(E_X, \mathcal{F}_{E_X}, \pi; \mathcal{H}_Y)$. As a result, for any $\delta > 0$, there is an $F \in \mathcal{G}$ such that $\|F- F_*\|_{L_2} < \delta$. Hence, in the literature, we often assume the so-called \emph{well-specified case} to obtain a closed-form solution,
\begin{IEEEeqnarray}{rCl}
F_* \in \mathcal{G}. \label{eqn:cme_assum1}
\end{IEEEeqnarray}
It is shown in \cite[Theorem $5.3$]{klebanov2020rigorous} and
\cite[Corollary $5.6$ and Remark $5.8$]{mollenhauer2020nonparametric}  that $F_*$ admits a closed form expression under Eq.~(\ref{eqn:cme_assum1}) via
\[F_*(x) = (C_{XX}^{\dagger}C_{XY})^*\phi_{X}(x),\] where 
$C_{YX} = \mathbb{E}[\phi_{Y}(Y)\otimes \phi_{X}(X)]$ and $C^{\dagger}$ denotes the pseudoinverse of $C$.

\begin{rem}
We point out that in the original derivations, the CME is written as $F_*(x) = C_{YX}C_{XX}^{\dagger}\phi_{X}(x)$ \cite{song2009hilbert,fukumizu2004dimensionality,fukumizu2013kernel}. However, $C_{XX}^{\dagger}$ is not globally defined
if $\mathcal{H}_X$ is infinite-dimensional. 
Hence the expression $C_{YX}C_{XX}^{\dagger}\phi_{X}(x)$ is problematic, as we expect $F_{*}$ to be defined for all $x \in E_X$ based on 
the Markov kernel $p$. In the well-specified scenario, \cite{klebanov2020rigorous} corrected this issue by defining the CME as $(C_{XX}^{\dagger}C_{XY})^*\phi_{X}(x)$. It is shown that in this case,
$(C_{XX}^{\dagger}C_{XY})^*$ is bounded 
(actually Hilbert-Schmidt, see also \cite{klebanov2021linear}), and hence globally defined. The connection of this  corrected operator-theoretic 
perspective to the well-specified regression scenario was established
in \cite{mollenhauer2020nonparametric}.
\end{rem}

Once we have the closed-form solution, a natural question to ask is how to estimate the CME. Indeed, this has been extensively studied in  \cite{grunewalder2012conditional,park2020measure,talwai2021sobolev}. Given a data set $D = \{(x_i, y_i)\}_{i=1}^n$ independently and identically sampled from the joint distribution of $X$ and $Y$, a regularized estimate of $F_*$ is the solution of the following optimization problem:
\begin{IEEEeqnarray}{rCl}
\hat{C}_{Y|X,\lambda}:= \argmin_{C \in S_2(\mathcal{H}_{X}, \mathcal{H}_{Y})} \frac{1}{n}\sum_{i = 1}^n \left\|\phi_Y(y_i) -C \phi_X(x_i)\right\|^2_{\mathcal{H}_Y} + \lambda \|C\|_{S_2(\mathcal{H}_{X}, \mathcal{H}_{Y})}^2, \label{eqn:emp_cme}
\end{IEEEeqnarray}
$\hat{F}_{\lambda}(\cdot) := \bar{\Psi}\left(\hat{C}_{Y|X,\lambda}\right)(\cdot)  = \hat{C}_{Y|X,\lambda}\phi_X(\cdot)$, where $\lambda$ is the regularization parameter. Implicit in the construction, however, is the  assumption 
$F_* \in \mathcal{G}$ that the solution is well-specified.
We provide a few remarks
regarding this assumption:

\begin{rem}
i) In the literature, the prevalent definition of well-specifiedness is through 
\begin{IEEEeqnarray}{rCl}
\mathbb{E}\left[f_{Y}(Y)|X = \cdot\right] \in \mathcal{H}_{X}, \forall f_{Y} \in \mathcal{H}_{Y}, \label{eqn:cme_assum2}
\end{IEEEeqnarray}
see e.g. \cite{fukumizu2004dimensionality,song2009hilbert,klebanov2020rigorous} for details. However, this definition is not equivalent to that in Eq.~(\ref{eqn:cme_assum1}). 
Specifically, assuming $F_* \in \mathcal{G}$ implies that Eq.~(\ref{eqn:cme_assum2}) holds. Nonetheless, the reverse is not true. In particular, there exist concrete examples satisfying Eq.~(\ref{eqn:cme_assum2}), but the corresponding
operator representative of the CME is not Hilbert--Schmidt
(see Section~\ref{sec:well_spec} in Appendix for details). To avoid confusion, we refer to Eq.~(\ref{eqn:cme_assum1}) as the well-specified case hereafter.

ii) The conventional assumption Eq.~(\ref{eqn:cme_assum2}) can actually be refined via the inclusion map $I_{\pi}$. In particular, since $I_{\pi} $ is an inclusion map from $\mathcal{H}_X$ to $L_2(\pi)$ and $\mathbb{E}\left[f_{Y}(Y)|X = \cdot\right] \in \mathcal{H}_{X}$, we can apply $I_{\pi}$ to  $\mathbb{E}\left[f_{Y}(Y)|X = \cdot\right]$. In addition, we are only interested in the case where $I_{\pi} \left( \mathbb{E}\left[f_{Y}(Y)|X = \cdot\right]\right) \neq 0$. As a result, the refined definition should be 
\begin{IEEEeqnarray}{rCl}
\mathbb{E}\left[f_{Y}(Y)|X = \cdot\right] \in \left(\operatorname{ker} I_{\pi}\right)^{\perp}, \forall f_{Y} \in \mathcal{H}_{Y}. \label{eqn:cme_assum3}
\end{IEEEeqnarray}
\end{rem}

We now characterize the Hilbert spaces used to define the CME in the  misspecified setting.

{\bf Real-valued Interpolation Space:}
We  review the results of 
\cite{steinwart2012mercer,fischer2020sobolev}
that set out the eigendecompositions of $L_{X}$ and $C_{XX}$, and apply these in constructing the interpolation spaces  used for the misspecified setting. By the spectral theorem for self-adjoint compact operators, there exists an at most countable index set $I$, a non-increasing sequence $(\mu_i)_{i\in I} > 0$, and a family $(e_i)_{i \in I} \in \mathcal{H}_{X}$, such that $\left([e_i]\right)_{i \in I}$ is an orthonormal basis (ONB) of $\overline{\text{ran}~I_{\pi}} \subseteq L_2(\pi)$ and $(\mu_i^{1/2}e_i)_{i\in I}$ is an ONB of $\left(\operatorname{ker} I_{\pi}\right)^{\perp} \subseteq \mathcal{H}_{X}$, and we have \[L_{X} = \sum_{i\in I} \mu_i \langle\cdot, [e_i] \rangle_{L_2(\pi)}[e_i], \qquad C_{XX} = \sum_{i \in I} \mu_i \langle\cdot, \mu_i^{\frac{1}{2}}e_i \rangle_{\mathcal{H}_{X}} \mu_i^{\frac{1}{2}}e_i .\] 

For $\alpha \geq 0$, we define the $\alpha$-interpolation space \cite{steinwart2012mercer} by
\[[\mathcal{H}]_{X}^{\alpha}:=\left\{\sum_{i \in I} a_{i} \mu_{i}^{\alpha / 2}\left[e_{i}\right]:\left(a_{i}\right)_{i \in I} \in \ell_{2}(I)\right\} \subseteq L_{2}(\pi),\]
equipped with the $\alpha$-power norm
\[\left\|\sum_{i \in I} a_{i} \mu_{i}^{\alpha / 2}\left[e_{i}\right]\right\|_{[\mathcal{H}]_{X}^{\alpha}}:=\left\|\left(a_{i}\right)_{i \in I}\right\|_{\ell_{2}(I)}=\left(\sum_{i \in I} a_{i}^{2}\right)^{1 / 2}.\] More broadly, we sometimes need to deal with function of the form $f + c$ where $f \in [\mathcal{H}]_X^{\alpha}$ and $c \in \mathbb{R}$. For this, we follow the classical definition of the direct sum of two Hilbert spaces \cite{conway2019course} and define the $\alpha$-power norm for $f+c$ as
\begin{IEEEeqnarray}{rCl}
\|f+c\|_{[\mathcal{H}]_X^{\alpha}} = \|c\|_{\mathbb{R}} + \|f\|_{[\mathcal{H}]_X^{\alpha}}. \label{eqn:cons_f_norm}
\end{IEEEeqnarray}
For $\left(a_{i}\right)_{i\in I} \in \ell_{2}(I)$, the $\alpha$-interpolation space becomes a Hilbert space with inner product defined as \[\left\langle \sum_{i \in I}a_i(\mu_i^{\alpha/2}[e_i]), \sum_{i \in I}b_i(\mu_i^{\alpha/2}[e_i]) \right\rangle_{[\mathcal{H}]_{X}^{\alpha}} = \sum_{i \in I} a_i b_i.\] Moreover, $\left(\mu_{i}^{\alpha / 2}\left[e_{i}\right]\right)_{i \in I}$ forms an ONB of $[\mathcal{H}]_{X}^{\alpha}$ and consequently $[\mathcal{H}]_{X}^{\alpha}$ is a separable Hilbert space. In the following, we use the abbreviation $\|\cdot\|_{\alpha}:=\|\cdot\|_{[\mathcal{H}]_{X}^{\alpha}}$. For $\alpha=0$ we have $[\mathcal{H}]_{X}^{0}=\overline{\operatorname{ran} I_{\pi}} \subseteq L_{2}(\pi)$ with $\|\cdot\|_{0}=\|\cdot\|_{L_{2}(\pi)}$. Moreover, for $\alpha=1$ we have $[\mathcal{H}]_{X}^{1}=\operatorname{ran} I_{\pi}$ and $[\mathcal{H}]_{X}^{1}$ is isometrically isomorphic to the closed subspace $\left(\operatorname{ker} I_{\pi}\right)^{\perp}$ of $\mathcal{H}_{X}$ via $I_{\pi}$, i.e. $\left\|[f]\right\|_{1}=\|f\|_{\mathcal{H}_{X}}$ for $f \in\left(\operatorname{ker} I_{\pi}\right)^{\perp}$. For $0<\beta<\alpha$, we have
\begin{IEEEeqnarray}{rCl} \label{eqn:inter_inclusion}
[\mathcal{H}]_{X}^{\alpha} \hookrightarrow [\mathcal{H}]_{X}^{\beta}  \hookrightarrow [\mathcal{H}]_{X}^{0} \subseteq L_{2}(\pi). \label{eqn:inter_less_L_2}
\end{IEEEeqnarray}


\begin{rem} \label{rem:universality}
    Under assumptions $1$-$3$ and $E_X$ being a second-countable locally compact Hausdorff space, if $k_X(\cdot, x)$ is continuous and vanishing at infinity, then $[\mathcal{H}]_{X}^{0} = L_{2}(\pi)$ if and only if $\mathcal{H}$ is dense in the space of continuous functions vanishing at infinity equipped with the uniform norm \cite{carmeli2010vector}. Such RKHS are called $c_0-$universal. As a special case of Proposition 5.6 in \cite{carmeli2010vector}, one can show that on $\mathbb{R}^d$, Gaussian, Laplacian, inverse multiquadrics and Matérn kernels are $c_0$-universal.
\end{rem}

\begin{figure}
\centering
\begin{tikzcd} 
S_2(L_2(\pi), \mathcal{H}_Y) \arrow[r, "\Psi"']  &  L_2(\pi;\mathcal{H}_Y) \\
S_2(\mathcal{H}_X, \mathcal{H}_Y) \arrow[r, "\bar{\Psi}"'] \arrow[u, bend left, dotted, "\mathcal{I}_{\pi}", blue, xshift=-4ex] & \mathcal{G}
\end{tikzcd}
\caption{$\Psi$ and $\bar{\Psi}$ are the bijective linear operators that define the respective isomorphisms between each pair of spaces. The precise form of the isomorphisms is given in the appendix. $\mathcal{I}_{\pi}$ denotes the canonical embedding between the two Hilbert-Schmidt spaces.}
\label{fig:spaces}
\end{figure}

\section{Approximation of CME with Vector-valued Interpolation Space}
In this section, we deal with the misspecified setting where $F_* \notin \mathcal{G}$. To do this, we first define the \textit{vector-valued interpolation space} via the tensor product space. We now recall from Remark~\ref{rem:tensor_product} that $L_2(\pi;\mathcal{H}_Y)$ is isomorphic to $S_2\left(L_2(\pi), \mathcal{H}_Y \right)$ and we denote by $\Psi$ the isomorphism between the two spaces. Similarly, we have $\mathcal{G} \simeq S_2(\mathcal{H}_X, \mathcal{H}_Y)$ and we denote by $\bar\Psi$ the isomorphism between both spaces in accordance with Theorem~\ref{theo:isometric}. This is summarized in Figure \ref{fig:spaces}. The second chain of spaces is not isometric to the first but can be naturally embedded into the first as follows. Recall that we denote by $I_{\pi}: \mathcal{H}_X \rightarrow L_2(\pi)$ the embedding that maps each function to its equivalent class, $I_{\pi}(f) = [f]$. We therefore naturally define the embedding $\mathcal{I}_{\pi}: S_2(\mathcal{H}_X, \mathcal{H}_Y) \rightarrow S_2(L_2(\pi), \mathcal{H}_Y)$ through $\mathcal{I}_{\pi}(g \otimes f) = g \otimes I_{\pi}(f) = g \otimes [f]$ for all $f \in \mathcal{H}_X$, $g \in \mathcal{H}_Y$, and obtain the extension to the whole space by linearity and continuity. Therefore, for $F \in \mathcal{G}$ we define $[F] :=  \Psi \circ \mathcal{I}_{\pi} \circ \bar{\Psi}^{-1}(F)$. In the rest of the paper, every embedding will be denoted using the notation $[~\cdot~]$. Strict notation would require us to write $[~\cdot~]_{\pi}$ due to dependence on the measure $\pi$, but we omit the subscript for ease of notation.

\begin{defn}\label{def:inter_ope_norm}
Suppose that we are given real-valued kernels $k_X$ and $k_Y$ with associated RKHS $\mathcal{H}_X$ and $\mathcal{H}_Y$ and let $[\mathcal{H}]_X^{\alpha}$ be the real-valued interpolation space associated to $\mathcal{H}_X$ with some $\alpha \geq 0$. Since $[\mathcal{H}]_X^{\alpha} \subseteq L_2(\pi)$, it is natural to define the vector-valued interpolation space $[\mathcal{G}]^{\alpha}$ as
\[[\mathcal{G}]^{\alpha} 
:= \Psi\left(S_2([\mathcal{H}]_X^{\alpha},\mathcal{H}_Y) \right) = \{F \mid F = \Psi(C), ~C \in S_2([\mathcal{H}]_X^{\alpha},\mathcal{H}_Y)\}.\]
$[\mathcal{G}]^{\alpha}$ is a Hilbert space equipped with the norm $$\|F\|_{\alpha} := \|C\|_{S_2([\mathcal{H}]_X^{\alpha}, \mathcal{H}_Y)} \qquad (F \in [\mathcal{G}]^{\alpha}),$$ where $C = \Psi^{-1}(F)$. For $\alpha = 0$, we retrieve, $$\|F\|_{0} = \|C\|_{S_2(L_2(\pi), \mathcal{H}_Y)}.$$
\end{defn}

\begin{rem}\label{rem:vector_inter} The vector-valued interpolation space $[\mathcal{G}]^{\alpha}$ allows us to study the CME in the misspecified case. To see this, we note that by Eq.~(\ref{eqn:CME}), we have $F_* \in L_2(E_X,\mathcal{F}_{E_X},\pi;\mathcal{H}_Y)$. In light of Eq.~(\ref{eqn:inter_inclusion}), for $0< \beta <\alpha$ we have \[[\mathcal{G}]^{\alpha} \hookrightarrow [\mathcal{G}]^{\beta}  \hookrightarrow [\mathcal{G}]^{0}\subseteq L_2(E_X,\mathcal{F}_{E_X},\pi;\mathcal{H}_Y).\] 


While the well-specified case corresponds to $F_* \in \mathcal{G}$, the misspecified case corresponds to $F_* \in [\mathcal{G}]^{\beta}$ for some $0 \leq \beta <1$, and relaxes the well-specified assumption in Eq.~(\ref{eqn:cme_assum1}). One can see from Remark~\ref{rem:universality} that under assumptions $1$ to $3$ and $E_X$ being a second-countable locally compact Hausdorff space, $[\mathcal{G}]^{0} = L_2(\pi;\mathcal{Y})$ if and only if $k_X$ is $c_0$-universal.
\end{rem}

\section{Learning Rate for CME}
In this section, we derive the learning rate for the difference between $[\hat{F}_{\lambda}]$ and $F_*$ in the interpolation norm. We first state additional assumptions that are needed in our derivations. As our assumptions match those of \cite{fischer2020sobolev}, we include the corresponding labels from \cite{fischer2020sobolev} for ease of reference.
\begin{itemize}
    \item[$5$.] Recall that $(\mu_i)_{i\in I}$ are the eigenvalues of $C_{XX}$. For some constants $c_2 >0$ and $p \in (0,1]$ and for all $i \in I$,
    \begin{equation}\label{asst:evd}
        \mu_i \leq c_2i^{-1/p}\tag{EVD}
    \end{equation}
    
    \item[$6$.] For $\alpha \in (p, 1]$, the inclusion map $I^{\alpha, \infty}_{\pi}: [\mathcal{H}]_{X}^{\alpha} \hookrightarrow L_{\infty}(\pi)$ is continuous, and there is a constant $A > 0$ such that
    \begin{equation}\label{asst:emb}
        \|I^{\alpha, \infty}_{\pi}\|_{[\mathcal{H}]_{X}^{\alpha} \rightarrow L_{\infty}(\pi)} \leq A \tag{EMB}
    \end{equation}
    
    \item[$7$.] There exists $0 < \beta \leq 2$ such that
    \begin{equation}\label{asst:src}
         F_* \in [\mathcal{G}]^{\beta} \tag{SRC}
    \end{equation}
    We let $C_{Y|X} := \Psi^{-1}(F_*) \in S_2([\mathcal{H}]_X^{\beta}, \mathcal{H}_Y)$ and we call $C_{Y|X}$ the conditional mean embedding operator.
\end{itemize}

\eqref{asst:evd} is a standard assumption on the eigenvalue decay of the integral operator (see more details in \cite{caponnetto2007optimal,fischer2020sobolev,talwai2021sobolev}). \eqref{asst:emb} is referred as the embedding property in \cite{fischer2020sobolev} and it can be shown that it implies $\sum_{i \in I} \mu_i^{\alpha} e_i^2(x) \leq A^2$ for $\pi$-almost all $x \in E_X$ (\cite{fischer2020sobolev} Theorem 9). Since we assume $k_X$ to be bounded, the embedding property always hold true when $\alpha = 1$. Furthermore, \eqref{asst:emb} implies a polynomial eigenvalue decay of order $1/\alpha$, which is why we take $\alpha \geq p$. \eqref{asst:src} is justified by Remark~\ref{rem:vector_inter} and is often referred as the source condition in literature (\cite{caponnetto2007optimal,fischer2020sobolev,lin2018optimal,lin2020optimal}). It imposes the smoothness assumption on the target CME operator $F_*$. In particular, when $\beta \geq 1$, the source condition implies that $F_*$ has a representative from $\mathcal{G}$, indicating the well specified scenario. However, once we let $\beta < 1$, we are in the misspecified learning setting, which is the main interest in this manuscript. Finally, in computing the learning rate for real-valued regression, one often needs the so-called (MOM) condition on the Markov kernel $p(x,dy)$ (see \cite{caponnetto2007optimal,fischer2020sobolev,talwai2021sobolev} for more details). The generalization to our setting would amount to assume that there exists constants $\sigma, R > 0$ such that
\[\mathbb{E} \left[ \|\phi_{Y}(Y)- F_{*}(x)\|_{\mathcal{H}_Y}^q \mid X=x\right] \leq \frac{1}{2}q!\sigma^2R^{q-2},\]
for $\pi$-almost surely all $x \in E_X$ and all $q \geq 2$. The reason for requiring (MOM) in the scalar regression setting is that we do not usually have $|Y| < \infty$ almost surely.  In our setting, however,  Assumption $3$ implies $\|\phi_{Y}(y)- F_{*}(x)\|_{\mathcal{H}_Y} \leq 2 \kappa_Y$ for $\pi$-almost all $x \in E_X$ and $\nu$-almost all $y \in E_Y$. Therefore, (MOM) is automatically satisfied with $\sigma = R = 2\kappa_Y$.

\begin{rem}\label{rem:inter_RKHS}
We remark that in \cite{talwai2021sobolev}, a variant of SRC condition is employed. In particular, instead of assuming $F_* \in [\mathcal{G}]^{\beta}$, they impose the assumption that $F_* \in \mathcal{G}^{\beta} \simeq S_2(\mathcal{H}_X^{\beta}, \mathcal{H}_Y)$, where $\mathcal{H}_{X}^{\beta}$ is an RKHS with corresponding kernel defined as $k_{X}^{\beta}(\cdot,x) = \sum_i \mu_i^{\beta} e_i(\cdot)e_i(x).$ Comparing to $[\mathcal{G}]^{\beta} \simeq S_2([\mathcal{H}]_X^{\beta}, \mathcal{H}_Y)$, there are two shortcomings that arise when working with $\mathcal{G}^{\beta}$.

First, $\mathcal{H}_X^{\beta}$ denotes the interpolating RKHS consisting of continuous functions only, while $[\mathcal{H}]_X^{\alpha}$ is the interpolating Hilbert space, where elements are defined through an equivalence classes. Hence, by working with $\mathcal{G}^{\beta}$, the implicit assumption is that $F_*$ is a continuous function. On the other hand, assuming $F_* \in [\mathcal{G}]^{\beta}$ avoids the continuity requirement. In particular, we have $\mathcal{H}_X^{\beta} \subseteq [\mathcal{H}]_X^{\beta}$ for any $\beta > 0$. Therefore, our \eqref{asst:src} condition applies to a more general setting.

Second, and more importantly, the construction of $\mathcal{H}_X^{\beta}$ relies on the condition that $\sum_i \mu_i^{\beta} e_i^2(x) < \infty,$  as pointed out in \cite{steinwart2012mercer}. Failing this, the kernel $k_X^{\beta}$ associated with the interpolating RKHS $\mathcal{H}_X^{\beta}$ is unbounded, indicating $\mathcal{H}_X^{\beta}$ is not well-defined. Under \eqref{asst:evd}, this effectively amounts to requiring that $\beta \geq p$. When kernel has slow eigenvalue decay (as for the Mat{\'e}rn kernel), $p$ can be close to $1$. Results obtained using $\mathcal{G}^{\beta}$ while requiring $\beta > p$ are very close to the well-specified case. By contrast, $[\mathcal{H}]_X^{\beta}$ is always well-defined as a subspace of $L_2(\pi)$ for any $\beta > 0$, even if $\sum_i \mu_i^{\beta} e_i^2(x) < \infty$ does not hold. 
\end{rem}

\begin{rem}
It is important to assume $\beta > 0$ in \eqref{asst:src}, as our results do not apply when  $\beta = 0$. The $\beta = 0$ setting arises for instance when  $Y \perp X$, or when both $Y=X$ and $\mathcal{H}_Y=\mathcal{H}_X$. In the former case, it is easy to see that the CME is the constant function (w.r.t $X$) $F_*(x) = \mu_Y = \int_{E_Y}\phi_Y(y) d\nu(y).$ In the latter case, $F_*(x) = \phi_X(x)$, and the CME is the identity operator. These functions are not covered by $[\mathcal{G}]^{\beta}$ for any $\beta > 0$ (see Appendix~\ref{sec:well_spec} for details).
\end{rem}
We now provide an upper bound on the learning rate.
\begin{theo}\label{theo:cme_rate}
Let Assumptions $1$-$3$, \eqref{asst:evd}, \eqref{asst:emb} and \eqref{asst:src} with $0 <\beta \leq 2$ hold, and let $0\leq \gamma \leq 1$ with $\gamma < \beta$,

\begin{enumerate}
    \item In the case $\beta + p \leq \alpha$ and $\lambda_n = \Theta \left(\left(n/\log^r(n)\right)^{-\frac{1}{\alpha}}\right)$, for some $r > 1$, there is a constant $K > 0$ independent of $n \geq 1$ and $\tau \geq 1$ such that \[\left\|[\hat{F}_{\lambda}] - F_*\right\|^2_{\gamma} \leq \tau^2 K\left(\frac{n}{\log ^{r} n}\right)^{-\frac{\beta-\gamma}{\alpha}}\] is satisfied for sufficiently large $n \geq 1$ with $P^n$-probability not less than $1-4e^{-\tau}$. 
    \item In the case $\beta + p > \alpha$ and $\lambda_n = \Theta \left(n^{-\frac{1}{\beta + p}}\right)$, there is a constant $K > 0$ independent of $n \geq 1$ and $\tau \geq 1$ such that \[\left\|[\hat{F}_{\lambda}] - F_*\right\|^2_{\gamma} \leq \tau^2 K n^{-\frac{\beta-\gamma}{\beta + p}}\] is satisfied for sufficiently large $n \geq 1$ with $P^n$-probability not less than $1-4e^{-\tau}$.
\end{enumerate}
\end{theo}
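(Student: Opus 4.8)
The plan is to mirror the bias--variance decomposition strategy of \cite{fischer2020sobolev}, transported through the tensor-product isomorphism $\Psi$ so that the $\mathcal{H}_Y$-valued problem reduces to the scalar-valued analysis at the level of Hilbert--Schmidt operators. First I would express everything in terms of operators: write $\hat{C}_{Y|X,\lambda} = (\hat{C}_{XX} + \lambda)^{-1}\hat{C}_{XY}^{\,*}$-type closed form for the empirical estimator, and $C_{Y|X} = \Psi^{-1}(F_*) \in S_2([\mathcal{H}]_X^\beta,\mathcal{H}_Y)$ for the target, and introduce the population regularized object $C_\lambda := (L_X + \lambda)^{-1}$-weighted version of $C_{Y|X}$ (the analogue of the regularized regression function). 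Then I would split
\[
\left\|[\hat{F}_\lambda] - F_*\right\|_\gamma \;\le\; \underbrace{\left\|[\hat{F}_\lambda] - [F_\lambda]\right\|_\gamma}_{\text{variance/estimation}} \;+\; \underbrace{\left\|[F_\lambda] - F_*\right\|_\gamma}_{\text{approximation/bias}},
\]
where $F_\lambda := \Psi(C_\lambda)$. The key point that makes the vector-valued case go through exactly as the scalar case is that $S_2([\mathcal{H}]_X^\alpha,\mathcal{H}_Y) \simeq [\mathcal{H}]_X^\alpha \otimes \mathcal{H}_Y$ and the relevant operators ($L_X$, $C_{XX}$ and their empirical versions) act only on the first tensor factor, so powers $L_X^{s}$ and norms $\|\cdot\|_\gamma$ factor as $\|\cdot\|_{[\mathcal{H}]_X^{\gamma}} \otimes \operatorname{Id}_{\mathcal{H}_Y}$; the boundedness $\|\phi_Y(y) - F_*(x)\|_{\mathcal{H}_Y} \le 2\kappa_Y$ replaces the scalar (MOM) condition with $\sigma = R = 2\kappa_Y$ as already noted.

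The bias term I would control deterministically using \eqref{asst:src}: $F_* \in [\mathcal{G}]^\beta$ means $\Psi^{-1}(F_*) = L_X^{\beta/2}$-smooth in the first factor, so a standard spectral-calculus estimate gives $\|[F_\lambda] - F_*\|_\gamma \lesssim \lambda^{(\beta-\gamma)/2}$ for $0 \le \gamma \le \min(\beta, 1)$, using that the residual filter $\lambda(L_X+\lambda)^{-1}$ applied to an operator with $(\beta-\gamma)/2$ extra smoothing has norm $O(\lambda^{(\beta-\gamma)/2})$ (capped because $\beta \le 2$). The variance term is where the real work is. I would pass to the empirical operators, use the standard decomposition into (i) the deviation $\hat{C}_{XX} - C_{XX}$ controlled in the $(L_X+\lambda)^{-1/2}$-weighted norm via a Hilbert-space Bernstein inequality, and (ii) the noise term $\frac1n\sum_i \phi_X(x_i)\otimes(\phi_Y(y_i) - F_*(x_i))$, again via Bernstein in $S_2$, where the \eqref{asst:emb} assumption supplies the crucial bound on $\|(L_X+\lambda)^{-\alpha/2}\phi_X(x)\|$ (equivalently $\sum_i \mu_i^\alpha e_i^2(x) \le A^2$) that lets the effective-dimension quantity $\mathcal{N}(\lambda) = \operatorname{tr}((L_X+\lambda)^{-1}L_X)$ enter with the right power. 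Under \eqref{asst:evd}, $\mathcal{N}(\lambda) \lesssim \lambda^{-p}$, and assembling the pieces yields a variance bound of order $\tau^2\big(\lambda^{\gamma-\beta}\,\mathcal{N}(\lambda)/n + \lambda^{\gamma - \alpha}/n\big)$ up to the usual logarithmic/analytic-perturbation factors.

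Balancing bias $\lambda^{\beta-\gamma}$ against variance then gives the two regimes: when $\beta + p \le \alpha$ the $\lambda^{\gamma-\alpha}/n$ term dominates the variance and the optimal choice is $\lambda_n \asymp (n/\log^r n)^{-1/\alpha}$, producing the rate $(n/\log^r n)^{-(\beta-\gamma)/\alpha}$; when $\beta + p > \alpha$ the $\lambda^{\gamma-\beta}\mathcal{N}(\lambda)/n \asymp \lambda^{\gamma-\beta-p}/n$ term dominates and $\lambda_n \asymp n^{-1/(\beta+p)}$ gives $n^{-(\beta-\gamma)/(\beta+p)}$. The logarithmic factor and the ``sufficiently large $n$'' caveat come from the analytic-perturbation step that replaces $\hat{C}_{XX}+\lambda$ by $C_{XX}+\lambda$ inside the weighted norms, which needs $\|(C_{XX}+\lambda)^{-1/2}(C_{XX} - \hat{C}_{XX})(C_{XX}+\lambda)^{-1/2}\| \le 1/2$ with high probability; the $1-4e^{-\tau}$ probability is a union bound over the (at most) four concentration events. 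The main obstacle I anticipate is not any single inequality but making the tensor-product reduction fully rigorous: verifying that $[\,\cdot\,] = \Psi\circ\mathcal{I}_\pi\circ\bar\Psi^{-1}$ intertwines the empirical and population operators correctly so that all the scalar-case spectral identities (the commutation of $I_\pi$, $S_\pi$, $L_X$, $C_{XX}$ with the interpolation-space embeddings, and their behaviour under $\otimes\operatorname{Id}_{\mathcal{H}_Y}$) lift without the injectivity hypothesis on $I_\pi$ that \cite{talwai2021sobolev} needed — in particular handling $\ker I_\pi \ne \{0\}$ cleanly, which is exactly the technical payoff claimed in Remark~\ref{rem:inter_RKHS}.
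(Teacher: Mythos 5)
Your overall architecture is the same as the paper's: the decomposition of Eq.~(\ref{eqn:risk_decom}) into a deterministic bias and a stochastic variance at the level of Hilbert--Schmidt operators transported through $\Psi,\bar{\Psi}$, the bias bound $\lambda^{\beta-\gamma}$ from \eqref{asst:src} exactly as in Lemma~\ref{lma:new_bias}, a Bernstein inequality in $S_2(\mathcal{H}_X,\mathcal{H}_Y)$ with \eqref{asst:emb} and $\mathcal{N}(\lambda)\lesssim\lambda^{-p}$ for the variance, the same two choices of $\lambda_n$, and the $1-4e^{-\tau}$ probability from a union over the two concentration events. However, two concrete points in your variance treatment do not hold as written.

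(a) Your claimed variance order $\tau^{2}\bigl(\lambda^{\gamma-\beta}\mathcal{N}(\lambda)/n+\lambda^{\gamma-\alpha}/n\bigr)$ is inconsistent with the rates you then assert: measuring the error in the $\gamma$-norm costs a factor $\lambda^{-\gamma}$ (not $\lambda^{+\gamma}$), and the correct leading terms (Theorem~\ref{thm:variance_bound} combined with Lemmas~\ref{lma:new_bias} and~\ref{lma:effective_dim}) are of the form $\frac{\tau^{2}}{n\lambda^{\gamma}}\bigl(\mathcal{N}(\lambda)+\lambda^{\beta-\alpha}+\cdots\bigr)$. Balancing your first term $\lambda^{\gamma-\beta-p}/n$ against the bias $\lambda^{\beta-\gamma}$ would give $\lambda\asymp n^{-1/(2(\beta-\gamma)+p)}$ and the rate $n^{-(\beta-\gamma)/(2(\beta-\gamma)+p)}$, not $n^{-(\beta-\gamma)/(\beta+p)}$; so the final balancing step as you state it does not follow from the bound you state. (b) More substantively, you center the noise at $F_*$, i.e.\ use $\frac1n\sum_i\phi_X(x_i)\otimes\bigl(\phi_Y(y_i)-F_*(x_i)\bigr)$. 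In the misspecified case this loses the algebraic identity on which the whole variance argument rests: since $F_*\notin\mathcal{G}$, the remainder $\frac1n\sum_i F_*(x_i)\otimes\phi_X(x_i)$ is not of the form $C\hat{C}_{XX}$ for any $C\in S_2(\mathcal{H}_X,\mathcal{H}_Y)$ and does not recombine with $\hat{C}_{XX}$. The paper instead centers at $F_\lambda=C_{Y|X,\lambda}\phi_X(\cdot)$, for which $\hat{C}_{YX}-C_{YX}(C_{XX}+\lambda Id)^{-1}(\hat{C}_{XX}+\lambda Id)$ equals exactly the centered empirical mean of $(\phi_Y(Y)-F_\lambda(X))\otimes\phi_X(X)$; this is what produces the $\|F_*-[F_\lambda]\|^2_{L_2(\pi;\mathcal{H}_Y)}A^2\lambda^{-\alpha}$ and $Q_\lambda$ terms, and it requires the $L_\infty(\pi;\mathcal{H}_Y)$ control of $[F_\lambda]$ and $[F_\lambda]-F_*$ supplied by Lemmas~\ref{lma:_infinite_embedding_v_rkhs} and~\ref{lma:F_l_bounded}, an ingredient absent from your sketch. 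Your $F_*$-centered route can be repaired by adding and separately controlling the empirical fluctuation of $(F_*-F_\lambda)\otimes\phi_X$ with a second Bernstein application, but that is precisely where the vector-valued $L_\infty$ embedding and the extra $\lambda^{\beta-\alpha}$-type terms re-enter, so it must be carried out explicitly rather than absorbed into ``assembling the pieces''.
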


Theorem~\ref{theo:cme_rate} provides the finite sample $\gamma$-norm learning rate for the empirical CME estimator defined in Eq.~(\ref{eqn:emp_cme}). It states that the learning rate for $[\hat{F}_{\lambda}]$ is governed by the interplay between $p$, $\alpha$, and $\beta$. Intuitively, $p$ describes the decay rate of the eigenvalues $(\mu_i)_{i \in I}$,  $\alpha$ determines the boundedness of the interpolation kernel (and has maximum value of $1$ according to our assumption), $\beta$ characterizes the smoothness of the target CME operator.

To simplify the discussion, we may focus on the $L_2(E_X,\mathcal{F}_{E_X},\pi; \mathcal{H}_Y)$ learning rate, corresponding to $\gamma = 0$. The exponent $\beta / \max\{\alpha, \beta+p\}$ explicitly provides the learning rate for the CME operator. For example, if we have $\alpha \leq \beta$, we obtain a learning rate of $\beta/ (\beta + p)$. 
In particular, for a Gaussian kernel on a bounded convex set $E_X$ with $\pi$ uniform on $E_X$, $p$ and $\alpha$ are arbitrarily close to $0$ (see Remark~\ref{rem:gauss} below), and our learning rate can achieve 
$O(\log(n)/ n)$ rate simply by taking $\lambda_n = \Theta\left(\left(\log(n)/n\right)^{1/\beta}\right)$.
\def\thefootnote{2}
\begin{rem}\label{rem:gauss} Let $E_X \subset \mathbb{R}^d$ be a bounded set with
Lipschitz boundary\footnote{As an example any bounded convex set has Lipschitz boundary.}, let $\pi$ be the uniform distribution on $E_X$ and let $k_X$ be a Gaussian kernel. Then by Corollary 4.13 in \cite{kanagawa2018gaussian}, Assumption~\ref{asst:emb} is satisfied with any $\alpha \in (0,1]$, implying that Assumption~\ref{asst:evd} is also satisfied with any $p \in (0,1]$.
\end{rem}

\section{Lower Bound}
Our final theorem provides a lower bound for the convergence rate, which allows us to confirm the optimality of our learning rate. In deriving the lower bound, we need an extra assumption
\begin{itemize}
    \item[$8$.] For some constants $c_1,c_2 >0$ and $p \in (0,1]$ and for all $i\in I$,
    \begin{equation}\label{asst:evd+}
        c_1 i^{-1/p} \leq \mu_i \leq c_2i^{-1/p}\tag{EVD+}
    \end{equation}
\end{itemize}

\begin{theo}\label{theo:lower_bound} Let $k_X$ be a kernel on $E_X$ such that Assumptions $1$-$3$ hold and $\pi$ be a probability distribution on $E_X$ such that \eqref{asst:evd+} and \eqref{asst:emb} hold $0< p\leq \alpha \leq 1$. Then for all $0 < \beta \leq 2$, $0 \leq \gamma \leq 1$ with $\gamma < \beta$ there exist constants $K_0, K, s > 0$ such that for all learning methods $D\rightarrow \hat{F}_{D}$ ($D:=\{(x_i,y_i)\}_{i=1}^n$), all $\tau > 0$, and all sufficiently large $n \geq 1$ there is a distribution $P$ defined on $E_X \times E_Y$ used to sample $D$, with marginal distribution $\pi$ on $E_X,$ such that \eqref{asst:src} with respect to $\beta$ is satisfied, and with $P^n$-probability not less than $1-K_0\tau^{1/s}$, \[\|[\hat{F}_D] - F_*\|^2_{\gamma} \geq \tau^2 K n^{-\frac{\max\{\alpha,\beta\}-\gamma}{\max\{\alpha, \beta\}+p}} .\]
\end{theo}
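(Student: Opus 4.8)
The plan is to reduce the vector-valued problem to a scalar kernel regression lower bound and then run a Gilbert--Varshamov/Fano argument in the spirit of \cite{fischer2020sobolev,caponnetto2007optimal}. Since the theorem pins down only $k_X$ and $\pi$, we are free to choose $E_Y$ and $k_Y$: let $E_Y=\{y^+,y^-\}$ be a two-point space and $k_Y(y^+,y^+)=k_Y(y^-,y^-)=1$, $k_Y(y^+,y^-)=-1$. This is a bounded reproducing kernel with $\kappa_Y=1$, so Assumptions~1--3 hold for $k_Y$, and $\mathcal{H}_Y=\operatorname{span}\{g\}$ is one-dimensional with $g:=\phi_Y(y^+)$, $\|g\|_{\mathcal{H}_Y}=1$, $\phi_Y(y^-)=-g$. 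A distribution $P$ on $E_X\times E_Y$ with $X$-marginal $\pi$ is then equivalent to the $\{0,1\}$-valued regression model $Z\mid X=x\sim\mathrm{Ber}(\eta_P(x))$, whose CME is $F_*(x)=(2\eta_P(x)-1)g=\rho_P(x)\,g$ with $\rho_P:=2\eta_P-1$; equivalently $\Psi^{-1}(F_*)=g\otimes[\rho_P]$. Since this operator has rank one and $\|g\|_{\mathcal{H}_Y}=1$, we get $F_*\in[\mathcal{G}]^\beta\Leftrightarrow[\rho_P]\in[\mathcal{H}]_X^\beta$ with $\|F_*\|_\beta=\|[\rho_P]\|_{[\mathcal{H}]_X^\beta}$, and likewise $\|F-F'\|_\gamma=\|[\rho]-[\rho']\|_{[\mathcal{H}]_X^\gamma}$ for two CMEs of this form. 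Conversely, one-dimensionality of $\mathcal{H}_Y$ forces $[\hat F_D]=\hat h_D\,g$ for some $\hat h_D\in[\mathcal{H}]_X^\gamma$ whenever the $\gamma$-error is finite, and $\|[\hat F_D]-F_*\|_\gamma=\|\hat h_D-[\rho_P]\|_{[\mathcal{H}]_X^\gamma}$. As $D=\{(x_i,y_i)\}$ is in bijection with $\{(x_i,z_i)\}$, $z_i:=\mathbf{1}\{y_i=y^+\}$, the correspondence $D\mapsto\hat h_D$ is an arbitrary scalar learning method. Hence it suffices to lower bound the $[\mathcal{H}]_X^\gamma$-error of estimating $\rho_P$ over a family of $\{0,1\}$-valued regression problems with $X$-marginal $\pi$ and $[\rho_P]$ in a ball of $[\mathcal{H}]_X^\beta$.

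\textbf{Scalar construction and balancing.} Fix a block $B\subset I$ of $|B|\asymp m$ consecutive indices near $m$, a scale $c_n$, and for $\epsilon\in\{-1,+1\}^B$ set $\rho_\epsilon:=c_n\sum_{i\in B}\epsilon_i\mu_i^{1/2}e_i$ and $\eta_\epsilon:=\tfrac12(1+\rho_\epsilon)$. Three budgets must be respected. (i) \emph{Valid conditionals:} $\|\rho_\epsilon\|_{L_\infty(\pi)}\le\tfrac12$; by Cauchy--Schwarz, \eqref{asst:emb} (in the form $\sum_i\mu_i^\alpha e_i^2(x)\le A^2$) and \eqref{asst:evd+} this holds once $c_n^2\lesssim m^{(1-\alpha)/p-1}$. (ii) \emph{Source margin:} $\|\rho_\epsilon\|_{[\mathcal{H}]_X^\beta}^2=c_n^2\sum_{i\in B}\mu_i^{1-\beta}\asymp c_n^2 m^{\,1-(1-\beta)/p}$ must stay bounded, i.e. $c_n^2\lesssim m^{(1-\beta)/p-1}$ (so that \eqref{asst:src} holds for $F_*=\rho_\epsilon g$ with a uniform constant). (iii) \emph{Information:} $\operatorname{KL}(P_\epsilon^{\,n}\,\Vert\,P_{\epsilon'}^{\,n})\lesssim n\|\rho_\epsilon-\rho_{\epsilon'}\|_{L_2(\pi)}^2\lesssim n\,c_n^2 m^{\,1-1/p}$, which a Fano bound over a Gilbert--Varshamov packing of $2^{\Omega(m)}$ sign patterns with pairwise Hamming distance $\gtrsim m$ requires to be $\lesssim m$, i.e. $c_n^2\lesssim m^{1/p}/n$. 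The separation driving the bound is $\|\rho_\epsilon-\rho_{\epsilon'}\|_{[\mathcal{H}]_X^\gamma}^2\gtrsim c_n^2 m^{\,1-(1-\gamma)/p}$, again using \eqref{asst:evd+}. Constraints (i)--(ii) amount to $c_n^2\lesssim m^{(1-\max\{\alpha,\beta\})/p-1}$; balancing this against (iii) gives the choice $m\asymp n^{p/(\max\{\alpha,\beta\}+p)}$, $c_n^2\asymp m^{(1-\max\{\alpha,\beta\})/p-1}$, for which the separation is of order $n^{-(\max\{\alpha,\beta\}-\gamma)/(\max\{\alpha,\beta\}+p)}$. A standard Fano/testing argument shows that for any learning method there is some $\epsilon$ for which the $[\mathcal{H}]_X^\gamma$-error attains this order with constant probability; upgrading this to the stated $1-K_0\tau^{1/s}$ high-probability form with the $\tau^2$ prefactor follows by applying the testing bound to $\tau$-indexed nested sub-families, exactly as in \cite{fischer2020sobolev}. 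Translating back through the reduction produces the asserted distributions $P$ on $E_X\times E_Y$.

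\textbf{Main obstacle.} The routine parts are the reduction of the first paragraph and the verification that the two-point $k_Y$ preserves Assumptions~1--3 and the claimed norm identities; note that the finite $E_Y$ forces a $\{0,1\}$-valued (rather than Gaussian) noise model, but this is harmless since Assumption~3 makes the (MOM)-type moment condition automatic here ($\|\phi_Y(y)-F_*(x)\|_{\mathcal{H}_Y}\le 2\kappa_Y$). The substantive difficulty is the scalar lower bound itself: making the three budgets (valid probabilities via \eqref{asst:emb}, bounded source norm via \eqref{asst:src}, Fano information via \eqref{asst:evd+}) simultaneously compatible and extracting the \emph{exact} exponent $\tfrac{\max\{\alpha,\beta\}-\gamma}{\max\{\alpha,\beta\}+p}$ — which forces a case split according to whether the embedding budget ($\alpha$) or the smoothness budget ($\beta$) is binding — and then strengthening the constant-probability Fano conclusion into the claimed $\tau$-dependent deviation bound.
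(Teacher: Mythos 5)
Your scalar machinery is correct: the block construction $\rho_\epsilon = c_n\sum_{i\in B}\epsilon_i\mu_i^{1/2}e_i$, the three budgets (essential boundedness from \eqref{asst:emb}, source-norm control for \eqref{asst:src}, and KL control from the Bernoulli mixture and \eqref{asst:evd+}), and the balancing $m\asymp n^{p/(\max\{\alpha,\beta\}+p)}$, $c_n^2\asymp m^{(1-\max\{\alpha,\beta\})/p-1}$ do deliver the separation $n^{-(\max\{\alpha,\beta\}-\gamma)/(\max\{\alpha,\beta\}+p)}$, and the Bernoulli two-point conditional distribution (in the spirit of \cite{caponnetto2007optimal,devore2004mathematical}) is precisely the noise model the paper uses. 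The paper likewise reduces to a scalar problem and then invokes the Gilbert--Varshamov/Fano machinery from \cite{fischer2020sobolev}.

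The gap is in the opening move. You claim that because the theorem statement names only $k_X$ and $\pi$, you may choose $E_Y$ and $k_Y$, and you set $\mathcal{H}_Y$ to be one-dimensional. But Assumptions~1--3 constrain $\mathcal{H}_Y$ jointly with $\mathcal{H}_X$, the learning methods $D\mapsto \hat F_D$ already refer to data $y_i\in E_Y$, and the conclusion is phrased as ``there is a distribution $P$ on $E_X\times E_Y$'' --- so $E_Y$, $k_Y$ are fixed inputs, not part of the adversary's choice. A lower bound established only for a hand-picked one-dimensional $\mathcal{H}_Y$ does not prove the statement for an arbitrary given output RKHS. The paper bridges this by fixing $a\in E_Y$, projecting through $\langle\cdot,\phi_Y(a)\rangle_{\mathcal{H}_Y}$ (their Lemma~\ref{lma:gamma_norm_reduction} gives $\|\langle F-F_*,\phi_Y(a)\rangle\|_\gamma\le\kappa_Y\|F-F_*\|_\gamma$, hence Eq.~(\ref{eq:bound_scalar_bis})), and then realizing the Bernoulli-type conditional law inside the \emph{given} $E_Y$ as a two-point mixture $\frac{1}{2L}\{(L-f(x))\delta_{y_-}+(L+f(x))\delta_{y_+}\}$ with $k_Y(y_\pm,a)=r\pm L$. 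Your construction is exactly the image of theirs under that projection; what is missing is the inverse direction --- lifting your scalar hard instances back into an arbitrary $E_Y$, which requires picking $a$, $y_\pm$ inside $E_Y$ and checking that $F_*=\frac{1}{2L}\{(L-f)\phi_Y(y_-)+(L+f)\phi_Y(y_+)\}$ still lies in $[\mathcal{G}]^\beta$ with controlled norm, as the paper does. Once you add that lifting step, your argument and the paper's coincide.

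Two smaller remarks. First, your phrase ``upgrading \ldots to the stated $1-K_0\tau^{1/s}$ high-probability form with the $\tau^2$ prefactor follows by applying the testing bound to $\tau$-indexed nested sub-families'' is the right idea but is doing real work; the paper outsources it to \cite[Theorem~20 and Theorem~2]{fischer2020sobolev}, and a self-contained version would need to be spelled out. Second, your budget~(ii) should more carefully account for the constant shift (the paper's $f_{a,*}=r+f_i$ and the $\alpha$-power norm convention of Eq.~(\ref{eqn:cons_f_norm})) when verifying \eqref{asst:src}, but this is cosmetic once the lifting is in place.
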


Theorem~\ref{theo:lower_bound} states that under the assumtions of Theorem~\ref{theo:cme_rate} and \eqref{asst:evd+}, no learning method can achieve a learning rate faster than $n^{-\frac{\max\{\alpha,\beta\}}{\max\{\alpha, \beta\}+p}}$ in $L_2$ norm. To our knowledge, this is the first analysis that demonstrates the lower rate for CME learning. In the context of regularized regression, \cite{caponnetto2007optimal,steinwart2009optimal,blanchard2018optimal} provide a similar lower bound on the learning rate. However, a key difference in our analysis is that the output of the regression learning now lives in an infinite dimensional RKHS $\mathcal{H}_Y$, rather than in $\mathbb{R}$. Our analysis reveals that in the case where $\alpha \leq \beta$, the obtained upper rate in Theorem~\ref{theo:cme_rate} is optimal, i.e., $O(n^{-\frac{\beta}{\beta + p}})$. In particular, when $k_X$ is an exponentiated quadratic kernel on a compact set $E_X \subset \mathbb{R}^d$ with Lipschitz boundary, \eqref{asst:emb} is satisfied with any $\alpha \in (0,1)$ \cite[see Corollary $4.13$]{kanagawa2018gaussian}. As a result, the optimal rate is attained as long as $\beta > 0$. We point out that finding the optimal rate for $\beta < \alpha$ remains a challenge, and is an open problem when the output is $\mathbb{R}$.

\begin{rem}
  Theorem~\ref{theo:lower_bound} states that the upper bound and the lower bound match when  $\beta > \alpha$. In particular, for exponential kernels such as Gaussian  and Laplacian kernels with subgaussian distributions, the eigenvalues for the covariance operator have geometric decay rate. In these cases, $\alpha$ is arbitrarily close to $0$. Hence, as long as $\beta > 0$, we will have $\alpha < \beta$. In other words, for commonly used kernels, CME learning can always obtain the optimal fast rate $n^{-\frac{\beta}{\beta+p}}$.
\end{rem}

\section{Conclusion}
In this paper, we provide a rigorous theoretical foundation for approximating the CME operator, and study the statistical learning rate. Utilizing recently developed interpolation space techniques, we first define the vector-valued interpolation space $[\mathcal{G}]^{\alpha}$. This  allows to define the target CME operator in the larger interpolation space $[\mathcal{G}]^{\alpha}$, in contrast to the well-specified setting where $F_* \in \mathcal{G}$. By doing so, we are able to study the convergence rate of the empirical CME operator in the misspecified scenario. We then provide a $\gamma$-norm learning rate for the CME without any assumption on the interplay between $\beta$ and $p$, with matching lower bound. Our analysis shows that under appropriate conditions, we can obtain a fast $O(\log n / n)$ convergence rate, which matches the rate obtained in the existing literature for finite dimensional $\mathcal{H}_Y$. In more challenging settings, we  still obtain the minimax optimal rate $O(n^{-1/2})$.

Looking beyond the present work, our current interpolation space setting indicates that the convergence rate can be arbitrarily slow if $\beta \rightarrow 0$. This prevents learning the constant function, which plays a crucial role in completing the theory of the CME, as pointed out by \cite{klebanov2020rigorous}. Addressing this challenge is an important direction of future  research. 

\paragraph{Acknowledgement:} The authors wish to thank Peter Orbanz and Bharath Sriperumbudur for fruitful discussions and proofreading. Zhu Li, Dimitri Meunier, and Arthur Gretton were supported by the Gatsby Charitable Foundation.
Mattes Mollenhauer was supported by the Deutsche Forschungsgemeinschaft (DFG) through grant EXC 2046 ``MATH+''; Project Number 390685689, Project IN-8 ``Infinite-Dimensional Supervised Least Squares Learning as a Noncompact Regularized Inverse Problem''.

\bibliography{ref}
\bibliographystyle{abbrv}
\newpage
\appendix

\section{Proof of Theorem \ref{theo:cme_rate}} \label{sec:learning_rate}
\paragraph{Sketch of Proof.} 
Recall that $\hat{F}_{\lambda} \in \mathcal{G}$ is defined as $\hat{F}_{\lambda} := \bar{\Psi}\left(\hat{C}_{Y|X,\lambda}\right)$ where $\hat{C}_{Y|X,\lambda}$ is solution of Eq.~(\ref{eqn:emp_cme}). We introduce the theoretical estimator that solves the regression problem in population,
\begin{IEEEeqnarray}{rCl}
C_{Y|X,\lambda}:= \argmin_{C \in S_2(\mathcal{H}_{X}, \mathcal{H}_{Y})} \mathbb{E}_{XY} \left\|\phi_Y(Y) -C \phi_X(X)\right\|^2_{\mathcal{H}_Y} + \lambda \|C\|_{S_2(\mathcal{H}_{X}, \mathcal{H}_{Y})}^2, \qquad F_{\lambda} := \bar{\Psi}\left(C_{Y|X,\lambda}\right)  \label{eqn:pop_cme}
\end{IEEEeqnarray}
It can be readily shown (see for example \cite{mollenhauer2020nonparametric})  that 
\begin{equation*}
\begin{aligned}
    C_{Y|X,\lambda} &= C_{YX}\left(C_{XX} + \lambda Id_{\mathcal{H}_X}\right)^{-1} \\
    \hat{C}_{Y|X,\lambda} &= \hat{C}_{YX}\left(\hat{C}_{XX} + \lambda Id_{\mathcal{H}_X} \right)^{-1},
\end{aligned}
\end{equation*}
where $Id_{\mathcal{H}_X}$ is the identity operator and
\begin{equation}
\begin{aligned}
    C_{XX} &= \mathbb{E}[\phi_{X}(X)\otimes \phi_{X}(X)] \qquad~~~~~ C_{YX} = \mathbb{E}[\phi_{Y}(Y)\otimes \phi_{X}(X)]& \\
    \hat{C}_{XX} &= \frac{1}{n}\sum_{i=1}^n\phi_{X}(x_i) \otimes \phi_{X}(x_i) \qquad\hat{C}_{YX} = \frac{1}{n}\sum_{i=1}^n\phi_{Y}(y_i) \otimes \phi_{X}(x_i).&
\end{aligned}
\end{equation}
Finally, recall that the CME $F_{*}$ is in $L_2(E_X, \mathcal{F}_{E_X}, \pi; \mathcal{H}_Y)$ and the CME \emph{operator} is defined as $C_{Y \mid X} := \Psi^{-1}\left(F_{*}\right)$. From the definition of the vector-valued interpolation norm we introduce the following decomposition,
\begin{IEEEeqnarray}{rCl}
\left\|[\hat{F}_{\lambda}] - F_* \right\|_{\gamma} &\leq& \left\|\left[\hat{F}_{\lambda} - F_{\lambda}\right]\right\|_{\gamma} + \left\|[F_{\lambda}] -F_*\right\|_{\gamma} \\ &=& \left\|\left[\hat{C}_{Y|X,\lambda} - C_{Y|X,\lambda}\right]\right\|_{S_2\left([\mathcal{H}]_X^{\gamma},\mathcal{H}_{Y}\right)} + \left\|[C_{Y|X,\lambda}] -C_{Y \mid X} \right\|_{S_2\left([\mathcal{H}]_X^{\gamma},\mathcal{H}_{Y}\right)} \label{eqn:risk_decom}
\end{IEEEeqnarray}

We can see that the error for the first term is mainly due to the sample approximation. We therefore refer to the first term as the \textit{Variance}.  We refer to the second term as the \textit{Bias}. Our proof of convergence of the bias adapts the proof in \cite[Theorem $6$]{singh2019kernel} and \cite{fischer2020sobolev}, and  utilizes the fact that $C_{Y|X}$ is Hilbert-Schmidt to obtain a sharp rate.

\subsection{Bounding the Bias}
In this section, we establish the bound on the bias. The key insight is that thanks to \cite[Theorem 12.6.1]{aubin2000applied}, the conditional mean embedding can be expressed as a Hilbert-Schmidt operator in the misspecified case. We then exploit the proof techniques from the bias consistency result of \cite[Theorem 6]{singh2019kernel} and \cite{fischer2020sobolev}.

\begin{restatable}{lma}{newbias}\label{lma:new_bias}
If $F_{*} \in [\mathcal{G}]^{\beta}$ is satisfied for some $0 \leq \beta \leq 2$, then the following bound is satisfied, for all $\lambda > 0$ and $ 0 \leq \gamma \leq \beta:$
\begin{equation} 
    \left\|[F_{\lambda}] - F_{*}\right\|_{\gamma}^{2} \leq\left\|F_{*}\right\|_{\beta}^{2} \lambda^{\beta-\gamma} \label{eq:lemma3_1}
\end{equation}   
\end{restatable}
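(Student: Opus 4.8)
The plan is to exploit the spectral description of the real-valued interpolation spaces to diagonalise the relevant operators and reduce the bias bound to an elementary scalar inequality. First I would use the source condition \eqref{asst:src}: write $C_{Y|X} = \Psi^{-1}(F_*) \in S_2([\mathcal{H}]_X^{\beta}, \mathcal{H}_Y)$, so by the isometry $\Psi$ we have $\|[F_\lambda] - F_*\|_\gamma = \|\Psi^{-1}([F_\lambda]) - C_{Y|X}\|_{S_2([\mathcal{H}]_X^\gamma, \mathcal{H}_Y)}$. The key step is to express everything in terms of the diagonalising ONB $(\mu_i^{1/2} e_i)_{i\in I}$ of $(\ker I_\pi)^\perp \subseteq \mathcal{H}_X$ and $([e_i])_{i\in I}$ of $\overline{\operatorname{ran} I_\pi}$, recalling that $(\mu_i^{\alpha/2}[e_i])_{i\in I}$ is an ONB of $[\mathcal{H}]_X^\alpha$ for every $\alpha \ge 0$. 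Under this identification, the embedding $\mathcal{I}_\pi$ composed with the population solution map acts on the $i$-th coordinate as multiplication by $\mu_i/(\mu_i+\lambda)$ (this is exactly the statement that $C_{Y|X,\lambda} = C_{YX}(C_{XX}+\lambda \mathrm{Id})^{-1}$ becomes, after passing to $L_2(\pi)$, the spectral filter $\mu_i \mapsto \mu_i/(\mu_i+\lambda)$ applied to $C_{Y|X}$). Hence $\Psi^{-1}([F_\lambda]) - C_{Y|X}$ acts coordinate-wise by the factor $-\lambda/(\mu_i+\lambda)$.

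Next I would compute the Hilbert–Schmidt norm in $S_2([\mathcal{H}]_X^\gamma, \mathcal{H}_Y)$. Writing $C_{Y|X}$ in the ONB $(\mu_i^{\beta/2}[e_i])_i$ of $[\mathcal{H}]_X^\beta$ as $C_{Y|X} = \sum_i g_i \otimes (\mu_i^{\beta/2}[e_i])$ with $g_i \in \mathcal{H}_Y$, the source condition gives $\|F_*\|_\beta^2 = \sum_i \|g_i\|_{\mathcal{H}_Y}^2$. The same operator, viewed in the ONB $(\mu_i^{\gamma/2}[e_i])_i$ of $[\mathcal{H}]_X^\gamma$, has the representation $\sum_i (\mu_i^{(\beta-\gamma)/2} g_i) \otimes (\mu_i^{\gamma/2}[e_i])$, and applying the spectral filter factor $-\lambda/(\mu_i+\lambda)$ yields
\begin{equation*}
\left\|[F_\lambda] - F_*\right\|_\gamma^2 = \sum_{i\in I} \left(\frac{\lambda}{\mu_i+\lambda}\right)^2 \mu_i^{\beta-\gamma}\, \|g_i\|_{\mathcal{H}_Y}^2 .
\end{equation*}
It then suffices to bound $\left(\frac{\lambda}{\mu_i+\lambda}\right)^2 \mu_i^{\beta-\gamma} \le \lambda^{\beta-\gamma}$ uniformly in $i$; since $0 \le \beta - \gamma$, this is the standard interpolation inequality $\mu^{\beta-\gamma}\lambda^2 \le (\mu+\lambda)^2 \lambda^{\beta-\gamma}$ for $\mu,\lambda > 0$ (elementary, and valid exactly because the exponent $\beta-\gamma$ lies in $[0,2]$, using $\mu/(\mu+\lambda)\le 1$ and $\lambda/(\mu+\lambda)\le 1$). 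Summing over $i$ gives $\|[F_\lambda]-F_*\|_\gamma^2 \le \lambda^{\beta-\gamma}\sum_i \|g_i\|_{\mathcal{H}_Y}^2 = \|F_*\|_\beta^2 \lambda^{\beta-\gamma}$, which is the claim.

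I expect the main obstacle to be the bookkeeping in the second paragraph: carefully justifying that the population regularised solution $C_{Y|X,\lambda}$, once pushed through the embedding $\mathcal{I}_\pi$ and the isomorphisms $\Psi, \bar\Psi$, really does correspond to applying the scalar spectral filter $\mu_i/(\mu_i+\lambda)$ to $C_{Y|X}$ across the shared eigenbasis — i.e. that the diagrams of Figure~\ref{fig:spaces} commute with the relevant spectral calculus, and that the $\alpha$-dependent reweighting $\mu_i^{\alpha/2}$ of the ONB interacts correctly with taking Hilbert–Schmidt norms into the fixed target $\mathcal{H}_Y$. Once that identification is in place the remaining estimate is a one-line scalar bound. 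A secondary subtlety is handling $\ker I_\pi$: on that subspace the embedded operator vanishes, so all sums are over the index set $I$ indexing strictly positive eigenvalues, consistent with the definitions of $[\mathcal{H}]_X^\alpha$; I would note this explicitly but it causes no real difficulty. This argument mirrors the bias analysis of \cite[Theorem 6]{singh2019kernel} and \cite{fischer2020sobolev}, the only new ingredient being that the "output" coefficients $g_i$ now live in $\mathcal{H}_Y$ rather than $\mathbb{R}$, which is harmless since everything is phrased through the Hilbert–Schmidt norm.
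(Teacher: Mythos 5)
Your proposal is correct and follows the same route as the paper's proof: after identifying that $[C_{Y|X,\lambda}]$ acts on the shared eigenbasis as the spectral filter $\mu_i/(\mu_i+\lambda)$, you apply Parseval in $S_2([\mathcal{H}]_X^\gamma,\mathcal{H}_Y)$ and the scalar inequality $(\lambda/(\mu_i+\lambda))^2 \mu_i^{\beta-\gamma} \le \lambda^{\beta-\gamma}$, valid since $0 \le \beta-\gamma \le 2$. The bookkeeping step you flag---verifying the filter identification by expanding $C_{Y|X,\lambda} = C_{YX}(C_{XX}+\lambda\,\mathrm{Id})^{-1}$ against the eigenbasis and pushing through $\mathcal{I}_\pi$---is exactly what the paper carries out, and your vector-valued coefficients $g_i \in \mathcal{H}_Y$ are a notational equivalent of the paper's double-indexed scalar expansion $\check{a}_{ij}$ over an ONB of $\mathcal{H}_Y$.
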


\begin{proof}
We first recall that since $F_{*} \in [\mathcal{G}]^{\beta}$, $F_{*} = \Psi\left(C_{Y \mid X} \right)$ with $C_{Y \mid X} \in S_2([\mathcal{H}]_X^{\beta}, \mathcal{H}_Y)$, furthermore $F_{\lambda} = \bar{\Psi}\left(C_{Y|X,\lambda}\right)$ with $C_{Y|X,\lambda} \in S_2(\mathcal{H}_X, \mathcal{H}_Y)$. Hence, $\left\|[F_{\lambda}] - F_{*}\right\|_{\gamma} = \left\|[C_{Y \mid X, \lambda}] - C_{Y \mid X}\right\|_{S_2\left([\mathcal{H}]_X^{\gamma},\mathcal{H}_{Y}\right)}$ and $\left\|F_{*}\right\|_{\beta} = \left\|C_{Y \mid X}\right\|_{S_2\left([\mathcal{H}]_X^{\beta},\mathcal{H}_{Y}\right)}$. We first decompose $[C_{Y \mid X, \lambda}]- C_{Y \mid X}$, followed by computing the upper bound of the bias. Since $C_{Y \mid X} \in S_2([\mathcal{H}]_X^{\beta}, \mathcal{H}_Y) \subseteq  S_2(\overline{\text{ran}~I_{\pi}}, \mathcal{H}_Y)$, $C_{Y \mid X}$ admits the decomposition
\begin{equation*}
    C_{Y\mid X}=\sum_{i \in I} \sum_{j \in J} \check{a}_{ij} d_j \otimes [e_{i}].
\end{equation*}
where $(d_j)_{j \in J}$ is any basis of $\mathcal{H}_Y$ and $\sum_{i \in I} \sum_{j \in J} \check{a}_{ij}^2 < +\infty$ with $\check{a}_{ij}=\left\langle C_{Y \mid X},d_j \otimes [e_{i}] \right\rangle_{S_2(L_2(\pi), \mathcal{H}_Y)}  = \left\langle C_{Y \mid X} [e_i],d_j\right\rangle_{\mathcal{H}_Y}$ for all $i \in I, j \in J$ (see e.g.\ \cite{gretton2013introduction}, Lecture on ``testing statistical dependence''). On the other hand, $C_{Y \mid X, \lambda} = C_{YX}\left(C_{XX} + \lambda Id_{\mathcal{H}_X} \right)^{-1}$. Since $\left(\mu_{i}^{1 / 2} e_{i}\right)_{i \in I}$ is an ONB of $\left(\operatorname{ker} I_{\pi}\right)^{\perp}$, we can complete it with an at most countable basis $\left(\bar{e}_i\right)_{i \in I'}$ of $\operatorname{ker} I_{\pi}$ such that the union of the family forms a basis of $\mathcal{H}_X$. We get a basis of $S_2(\mathcal{H}_X, \mathcal{H}_Y)$ through $\left(d_j \otimes f_i\right)_{i \in I \cup I', j \in J}$ where  $f_i = \mu_{i}^{1 / 2} e_{i}$ if $i \in I$ and $f_i = \bar{e}_i$ if $i \in I'$. By Equation (23) from \cite{fischer2020sobolev}, for $a > 0$ we then have
\begin{equation*}
    \left(C_{XX}+\lambda\right)^{-a}=\sum_{i \in I}\left(\mu_{i}+\lambda\right)^{-a}\left\langle\mu_{i}^{1 / 2} e_{i}, \cdot\right\rangle_{\mathcal{H}_X} \mu_{i}^{1 / 2} e_{i}+\lambda^{-a} \sum_{i \in I'}\left\langle\bar{e}_{i}, \cdot\right\rangle_{\mathcal{H}_X} \bar{e}_{i}.
\end{equation*}

Furthermore, 
\begin{IEEEeqnarray*}{rCl}
C_{YX} &=& \mathbb{E}_{YX}\left[\phi_{Y}(Y) \otimes \phi_{X}(X)\right] \nonumber\\
&=& \mathbb{E}_{X}\left[ \mathbb{E}_{Y|X}\left[\phi_{Y}(Y)\right] \otimes \phi_{X}(X)\right] \nonumber\\
&=& \mathbb{E}_{X}\left[F_{*}(X) \otimes \phi_{X}(X)\right] \nonumber\\
&=& \mathbb{E}_{X}\left[\Psi\left(C_{Y\mid X}\right)(X) \otimes \phi_{X}(X)\right] \nonumber\\
&=& \sum_{i \in I} \sum_{j \in J} \check{a}_{ij} \mathbb{E}_{X}\left[\Psi\left(d_j \otimes [e_{i}]\right)(X) \otimes \phi_{X}(X)\right] \nonumber\\
&=& \sum_{i \in I} \sum_{j \in J} \check{a}_{ij} \mathbb{E}_{X}\left[[e_{i}](X) d_j \otimes \phi_{X}(X)\right],
\end{IEEEeqnarray*}
In the last step we used the explicit form of the isomorphism between $L_2(\pi;\mathcal{H}_Y)$ and $S_2(L_2(\pi), \mathcal{H}_Y)$ mentioned in Remark~\ref{rem:tensor_product}: $\Psi$ is characterized by $\Psi\left(g \otimes f\right) = \left(x \mapsto gf(x) \right)$, for all $g \in \mathcal{H}_Y, f \in L_2(\pi)$. Then, using that $\left([e_{i}]\right)_{i \in I}$ is an ONS in $L_2(\pi),$

\begin{equation*}
    [C_{Y \mid X, \lambda}] = \sum_{i \in I} \sum_{j \in J} \check{a}_{ij}\frac{\mu_i}{\lambda + \mu_i} d_j \otimes [e_{i}],
\end{equation*}
and hence
\begin{equation} \label{eq:decomposition_bias}
    [C_{Y \mid X, \lambda}] - C_{Y \mid X} = -\sum_{i \in I} \sum_{j \in J} \check{a}_{ij}\frac{\lambda}{\lambda + \mu_i} d_j \otimes [e_{i}].
\end{equation}
We are now ready to compute the upper bound. Parseval's identity w.r.t. the ONB $\left(d_j \otimes \mu_{i}^{\gamma / 2}\left[e_{i}\right]\right)_{i \in I, j \in J}$ of $S_2\left([\mathcal{H}]_{X}^{\gamma}, \mathcal{H}_Y\right)$ yields
\begin{equation*}
    \begin{aligned}
    \left\|[C_{Y \mid X, \lambda}] - C_{Y \mid X}\right\|_{S_2\left([\mathcal{H}]_X^{\gamma},\mathcal{H}_{Y}\right)}^{2} &= \left\|\sum_{i \in I} \sum_{j \in J} \check{a}_{ij}\frac{\lambda}{\lambda + \mu_i} d_j \otimes [e_{i}]\right\|_{S_2\left([\mathcal{H}]_X^{\gamma},\mathcal{H}_{Y}\right)}^{2}\\
    &= \sum_{i \in I} \sum_{j \in J} \check{a}_{ij}^2\left(\frac{\lambda}{\lambda + \mu_i}\right)^2\mu_i^{-\gamma}.
    \end{aligned}
\end{equation*}
Next we notice that,
\begin{equation*}
    \begin{aligned}
        \left( \frac{\lambda }{\mu_i + \lambda}\right)^2\mu_i^{-\gamma} &= \left( \frac{\lambda }{\mu_i + \lambda}\right)^2 \mu_i^{-\gamma} \left( \frac{\lambda}{\lambda} \frac{\mu_i+ \lambda}{\mu_i + \lambda}\right)^{\beta-\gamma}\\
        &= \lambda^{\beta-\gamma}\mu_i^{-\beta} \left( \frac{\lambda }{\mu_i + \lambda}\right)^2 \left(\frac{\mu_i}{\mu_i+ \lambda} \right)^{\beta-\gamma} \left( \frac{\mu_i+ \lambda}{ \lambda}\right)^{\beta-\gamma}\\
        &= \lambda^{\beta-\gamma}\mu_i^{-\beta} \left(\frac{\mu_i}{\mu_i+ \lambda} \right)^{\beta-\gamma} \left( \frac{ \lambda}{ \lambda+ \mu_i}\right)^{2-\beta+\gamma}\\
        &\leq \lambda^{\beta-\gamma}\mu_i^{-\beta},
    \end{aligned}
\end{equation*}
where we used $\beta - \gamma \geq 0$ and $2 - \beta + \gamma \geq 0$. Hence,
\begin{equation*}
    \begin{aligned}
    \left\|[C_{Y \mid X, \lambda}] - C_{Y \mid X}\right\|_{S_2\left([\mathcal{H}]_X^{\gamma},\mathcal{H}_{Y}\right)}^{2} &\leq \lambda^{\beta - \gamma}\sum_{i \in I} \sum_{j \in J} \check{a}_{ij}^2\mu_i^{-\beta} \\ 
    &= \lambda^{\beta-\gamma}\left\|C_{Y \mid X}\right\|_{S_2\left([\mathcal{H}]_X^{\beta},\mathcal{H}_{Y}\right)}^{2} 
    \end{aligned}
\end{equation*}
\end{proof}

\subsection{Bounding the Variance}
The proof will require several lemmas in its construction, which we now present. We start with a  lemma that allows to go from the $\gamma$-norm of embedded vector-valued maps to their norm in the original Hilbert-Schmidt space. 

\begin{restatable}{lma}{NormTrasfer}\label{theo:gamma_norm_transfer}
For $0 \leq \gamma \leq 1$ and $F \in \mathcal{G}$ the inequality
\begin{IEEEeqnarray}{rCl}
\left\|[F]\right\|_{\gamma} \leq\left\|CC_{XX}^{\frac{1-\gamma}{2}} \right\|_{S_2(\mathcal{H}_X, \mathcal{H}_Y)} \label{lma:bias_upper}
\end{IEEEeqnarray}
holds, where $C = \bar{\Psi}^{-1}(F) \in S_2(\mathcal{H}_X, \mathcal{H}_Y)$. If, in addition, $\gamma<1$ or $C \perp \mathcal{H}_Y \otimes \operatorname{ker} I_{\pi}$ is satisfied, then the result is an equality.
\end{restatable}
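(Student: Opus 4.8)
The plan is to reduce both sides of~\eqref{lma:bias_upper} to explicit series in the eigenvalues $(\mu_i)_{i\in I}$ of $C_{XX}$ and the Hilbert--Schmidt operator $C=\bar\Psi^{-1}(F)\in S_2(\mathcal{H}_X,\mathcal{H}_Y)$, and then to compare them term by term.

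First I would make the embedded map $[F]$ explicit. By Corollary~\ref{theo:operep}, $F=C\phi_X(\cdot)$, and by construction $\Psi^{-1}[F]=\mathcal{I}_\pi(C)$. Evaluating on elementary tensors gives $\mathcal{I}_\pi(g\otimes f)=g\otimes[f]=(g\otimes f)\,S_\pi$ for $f\in\mathcal{H}_X,\ g\in\mathcal{H}_Y$, where $S_\pi=I_\pi^*$; by linearity and continuity, $\mathcal{I}_\pi(C)=CS_\pi$. A one-line adjoint computation yields $S_\pi[e_i]=\mu_i e_i$; moreover $CS_\pi\in\mathcal{H}_Y\otimes\overline{\operatorname{ran}I_\pi}$ because $\operatorname{ran}((CS_\pi)^*)\subseteq\operatorname{ran}I_\pi$, so expanding in the ONB $\{d_j\otimes[e_i]\}_{i\in I,\,j\in J}$ of $\mathcal{H}_Y\otimes\overline{\operatorname{ran}I_\pi}$ (with $(d_j)_{j\in J}$ any ONB of $\mathcal{H}_Y$) gives $\mathcal{I}_\pi(C)=\sum_{i,j}\mu_i\langle Ce_i,d_j\rangle_{\mathcal{H}_Y}\,d_j\otimes[e_i]$.

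Next I would read off both norms. Since $\{d_j\otimes\mu_i^{\gamma/2}[e_i]\}_{i,j}$ is an ONB of $S_2([\mathcal{H}]_X^\gamma,\mathcal{H}_Y)$ and $[e_i]=\mu_i^{-\gamma/2}\bigl(\mu_i^{\gamma/2}[e_i]\bigr)$, the coordinates of $\Psi^{-1}[F]=\mathcal{I}_\pi(C)$ in this ONB are $\mu_i^{1-\gamma/2}\langle Ce_i,d_j\rangle_{\mathcal{H}_Y}$; as $\sum_{i,j}\mu_i^{2-\gamma}|\langle Ce_i,d_j\rangle|^2\le\mu_1^{1-\gamma}\sum_i\mu_i\|Ce_i\|_{\mathcal{H}_Y}^2\le\mu_1^{1-\gamma}\|C\|_{S_2(\mathcal{H}_X,\mathcal{H}_Y)}^2<\infty$ (here $\gamma\le1$ is used), this both shows $[F]\in[\mathcal{G}]^\gamma$ and gives $\|[F]\|_\gamma^2=\sum_{i\in I}\mu_i^{2-\gamma}\|Ce_i\|_{\mathcal{H}_Y}^2$. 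For the right-hand side I would complete $(\mu_i^{1/2}e_i)_{i\in I}$, an ONB of $(\ker I_\pi)^\perp$, to an ONB of $\mathcal{H}_X$ by adjoining an ONB $(\bar e_\ell)_{\ell\in I'}$ of $\ker I_\pi$, and apply spectral calculus to $C_{XX}=\sum_i\mu_i\langle\cdot,\mu_i^{1/2}e_i\rangle_{\mathcal{H}_X}\mu_i^{1/2}e_i$: this gives $C_{XX}^{(1-\gamma)/2}(\mu_i^{1/2}e_i)=\mu_i^{1-\gamma/2}e_i$, while $C_{XX}^{(1-\gamma)/2}\bar e_\ell=0$ if $\gamma<1$ and $C_{XX}^{0}\bar e_\ell=\bar e_\ell$ if $\gamma=1$ (as $t\mapsto t^0\equiv1$, so $C_{XX}^0=\mathrm{Id}_{\mathcal{H}_X}$). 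Summing $\|CC_{XX}^{(1-\gamma)/2}(\cdot)\|_{\mathcal{H}_Y}^2$ over this ONB then yields $\|CC_{XX}^{(1-\gamma)/2}\|_{S_2(\mathcal{H}_X,\mathcal{H}_Y)}^2=\sum_{i\in I}\mu_i^{2-\gamma}\|Ce_i\|_{\mathcal{H}_Y}^2$ when $\gamma<1$, and the same with the extra summand $\sum_{\ell\in I'}\|C\bar e_\ell\|_{\mathcal{H}_Y}^2$ when $\gamma=1$.

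Comparing the two expressions proves the inequality, and equality holds exactly when the extra summand is absent --- that is, when $\gamma<1$, or when $\gamma=1$ and $C\bar e_\ell=0$ for every $\ell\in I'$; the latter is precisely $C\perp\mathcal{H}_Y\otimes\ker I_\pi$, using $\langle C,g\otimes u\rangle_{S_2}=\langle Cu,g\rangle_{\mathcal{H}_Y}$. I expect the main obstacle to be the bookkeeping in the third paragraph: one must keep in mind that $\Psi^{-1}[F]$, viewed inside $S_2([\mathcal{H}]_X^\gamma,\mathcal{H}_Y)$, is \emph{not} the plain restriction of the operator $\mathcal{I}_\pi(C)$ to $[\mathcal{H}]_X^\gamma$ --- passing to the $\gamma$-power norm reweights the $i$-th coordinate by $\mu_i^{-\gamma/2}$ --- so the exponents of $\mu_i$ have to be propagated carefully through the chain $\bar\Psi^{-1},\,\mathcal{I}_\pi,\,\Psi$ and through the two different orthonormal bases, $\{d_j\otimes[e_i]\}$ inside $S_2(L_2(\pi),\mathcal{H}_Y)$ versus $\{d_j\otimes\mu_i^{\gamma/2}[e_i]\}$ inside $S_2([\mathcal{H}]_X^\gamma,\mathcal{H}_Y)$.
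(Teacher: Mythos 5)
Your proposal is correct and follows essentially the same route as the paper's proof: expand with respect to the ONB of $\mathcal{H}_X$ obtained by completing $(\mu_i^{1/2}e_i)_{i\in I}$ with a basis of $\operatorname{ker} I_{\pi}$, compute $\|[F]\|_{\gamma}^2$ by Parseval against $\{d_j\otimes\mu_i^{\gamma/2}[e_i]\}$, compute the right-hand side by spectral calculus for $C_{XX}^{(1-\gamma)/2}$, and observe that the $\operatorname{ker} I_{\pi}$ contribution survives only when $\gamma=1$, giving equality when $\gamma<1$ or $C\perp\mathcal{H}_Y\otimes\operatorname{ker} I_{\pi}$. The only cosmetic difference is that you obtain the coordinates of $[C]$ via the identity $\mathcal{I}_{\pi}(C)=CS_{\pi}$ and $S_{\pi}[e_i]=\mu_i e_i$, whereas the paper applies $\mathcal{I}_{\pi}$ termwise to the expansion of $C$; both yield the same coefficients $\mu_i^{1-\gamma/2}\langle Ce_i,d_j\rangle_{\mathcal{H}_Y}$.
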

\begin{proof}
Let us fix $F \in \mathcal{G}$, and define $C := \bar{\Psi}^{-1}(F) \in S_2(\mathcal{H}_X, \mathcal{H}_Y)$. Since $\left(\mu_{i}^{1 / 2} e_{i}\right)_{i \in I}$ is an ONB of $\left(\operatorname{ker} I_{\pi}\right)^{\perp}$, we can complete it with a basis $\left(\bar{e}_i\right)_{i \in I'}$ of $\operatorname{ker} I_{\pi}$ such that the union of the family forms a basis of $\mathcal{H}_X$. Let $\left(d_j\right)_{j \in J}$ be a basis of $\mathcal{H}_Y$, we get a basis of $S_2(\mathcal{H}_X, \mathcal{H}_Y)$ through $\left(d_j \otimes f_i\right)_{i \in I \cup I', j \in J}$ where  $f_i = \mu_{i}^{1 / 2} e_{i}$ if $i \in I$ and $f_i = \bar{e}_i$ if $i \in I'$. Then $C$ admits the decomposition $$C=\sum_{i \in I} \sum_{j \in J} a_{ij} d_j \otimes \mu_{i}^{1 / 2} e_{i} + \sum_{i \in I'} \sum_{j \in J} a_{ij} d_j \otimes \bar{e}_{i},$$ where $a_{ij}=\left\langle C,d_j \otimes f_i \right\rangle_{S_2(\mathcal{H}_X, \mathcal{H}_Y)}  = \left\langle C f_i,d_j\right\rangle_{\mathcal{H}_Y}$ for all $i \in I \cup I', j \in J$. Since $$[C]=\sum_{i \in I} \sum_{j \in J} a_{ij} d_j \otimes \mu_{i}^{1 / 2} [e_{i}],$$ with Parseval's identity w.r.t. the ONB $\left(d_j \otimes \mu_{i}^{\gamma / 2}\left[e_{i}\right]\right)_{i \in I, j \in J}$ of $S_2([\mathcal{H}]_X^{\gamma}, \mathcal{H}_Y)$ this yields
$$
\left\|[C]\right\|_{S_2([\mathcal{H}]_X^{\gamma}, \mathcal{H}_Y)}^{2}=\left\|\sum_{i \in I} \sum_{j \in J} a_{ij}\mu_{i}^{\frac{1-\gamma}{2}} d_j \otimes \mu_{i}^{\gamma / 2} [e_{i}]\right\|_{S_2([\mathcal{H}]_X^{\gamma}, \mathcal{H}_Y)}^{2}=\sum_{i \in I} \sum_{j \in J} a_{ij}^2\mu_{i}^{1-\gamma}.
$$
For $\gamma<1$, the spectral decomposition of $C_{XX}$ together with the fact that $\left(d_j \otimes \mu_{i}^{1 / 2} e_{i}\right)_{i \in I, j \in J}$ is an ONS in $S_2(\mathcal{H}_X, \mathcal{H}_Y)$ yields 
\begin{equation}
    \begin{aligned}
    \left\|CC_{XX}^{\frac{1-\gamma}{2}}\right\|_{S_2(\mathcal{H}_X, \mathcal{H}_Y)}^{2} &= \left\|C\sum_{i \in I} \mu_i^{\frac{1-\gamma}{2}} \langle\cdot, \mu_i^{\frac{1}{2}}e_i \rangle_{\mathcal{H}_{X}} \mu_i^{\frac{1}{2}}e_i\right\|_{S_2(\mathcal{H}_X, \mathcal{H}_Y)}^{2} \\ &= \sum_{i \in I}\left\|\sum_{l \in I} \mu_l^{\frac{1-\gamma}{2}} \langle \mu_i^{\frac{1}{2}}e_i, \mu_l^{\frac{1}{2}}e_l \rangle_{\mathcal{H}_{X}} \mu_l^{\frac{1}{2}}Ce_l\right\|_{\mathcal{H}_Y}^{2} + \sum_{i \in I'}\left\|\sum_{l \in I} \mu_l^{\frac{1-\gamma}{2}} \langle \bar{e}_i, \mu_l^{\frac{1}{2}}e_l \rangle_{\mathcal{H}_{X}} \mu_l^{\frac{1}{2}}Ce_l\right\|_{\mathcal{H}_Y}^{2} \\ &= \sum_{i \in I}\left\|\mu_i^{\frac{1-\gamma}{2}} \mu_i^{\frac{1}{2}}Ce_i\right\|_{\mathcal{H}_Y}^{2} \\ &= \sum_{i \in I} \sum_{j \in J}\mu_i^{1-\gamma}\left\langle C\left(\mu_i^{\frac{1}{2}}e_i\right), d_j\right\rangle_{\mathcal{H}_Y}^{2} \\ &= \sum_{i \in I} \sum_{j \in J} a_{ij}^2 \mu_i^{1-\gamma} .
    \end{aligned}
\end{equation}
This proves the claimed equality in the case of $\gamma<1$. For $\gamma=1$, we have $C_{XX}^{\frac{1-\gamma}{2}}=\operatorname{Id}_{\mathcal{H}_X}$ and the Pythagorean theorem together with Parseval's identity yields
\begin{equation}
    \begin{aligned}
    \left\|CC_{XX}^{\frac{1-\gamma}{2}}\right\|_{S_2(\mathcal{H}_X, \mathcal{H}_Y)}^{2} &=\left\|\sum_{i \in I} \sum_{j \in J} a_{ij} d_j \otimes \mu_{i}^{1 / 2} e_{i} + \sum_{i \in I'} \sum_{j \in J} a_{ij} d_j \otimes \bar{e}_{i}\right\|_{S_2(\mathcal{H}_X, \mathcal{H}_Y)}^{2} \\ &=\left\|\sum_{i \in I} \sum_{j \in J} a_{ij} d_j \otimes \mu_{i}^{1 / 2} e_{i}\right\|_{S_2(\mathcal{H}_X, \mathcal{H}_Y)}^{2}+\left\| \sum_{i \in I'} \sum_{j \in J} a_{ij} d_j \otimes \bar{e}_{i}\right\|_{S_2(\mathcal{H}_X, \mathcal{H}_Y)}^{2} \\ &=\sum_{i \in I} \sum_{j \in J} a_{ij}^{2}+ \left\| \sum_{i \in I'} \sum_{j \in J} a_{ij} d_j \otimes \bar{e}_{i}\right\|_{S_2(\mathcal{H}_X, \mathcal{H}_Y)}^{2}
    \end{aligned}
\end{equation}
This gives the claimed equality if $C \perp \mathcal{H}_Y \otimes \operatorname{ker} I_{\pi}$, as well as the claimed inequality for general $C \in S_2(\mathcal{H}_X, \mathcal{H}_Y)$. We conclude with $\|[F]\|_{\gamma} = \|[C]\|_{S_2([\mathcal{H}]_X^{\gamma}, \mathcal{H}_Y)}$ by definition.
\end{proof}

\begin{lma}\label{lma:bias_bound_gamma}
If $F_{*} \in [\mathcal{G}]^{\beta}$ is satisfied for some $0 \leq \beta \leq 2$, then the following bounds is satisfied, for all $\lambda>0$ and $\gamma \geq 0$:
\begin{equation} \label{eq:lemma3_2}
    \left\|\left[F_{\lambda}\right]\right\|_{\gamma}^{2} \leq \left\|F_{*}\right\|_{\min \{\gamma, \beta\}}^{2} \lambda^{-(\gamma-\beta)_{+}}. 
\end{equation}
\end{lma}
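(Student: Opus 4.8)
The plan is to reuse the explicit diagonal decomposition of $[C_{Y\mid X,\lambda}]$ obtained in the proof of Lemma~\ref{lma:new_bias}, apply Parseval's identity in $S_2([\mathcal{H}]_X^{\gamma},\mathcal{H}_{Y})$, and then control the resulting coefficient multipliers by an elementary case distinction on the sign of $\gamma-\beta$. Since $F_{*}\in[\mathcal{G}]^{\beta}$ we may write $C_{Y\mid X}=\Psi^{-1}(F_{*})=\sum_{i\in I}\sum_{j\in J}\check{a}_{ij}\, d_j\otimes[e_i]$ with $\sum_{i,j}\check{a}_{ij}^{2}\mu_i^{-\beta}=\|F_{*}\|_{\beta}^{2}<\infty$, and the computation leading to Eq.~\eqref{eq:decomposition_bias} shows $[F_{\lambda}]=\Psi\bigl([C_{Y\mid X,\lambda}]\bigr)$ with $[C_{Y\mid X,\lambda}]=\sum_{i\in I}\sum_{j\in J}\check{a}_{ij}\,\tfrac{\mu_i}{\lambda+\mu_i}\, d_j\otimes[e_i]$. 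Applying Parseval's identity with respect to the ONB $\bigl(d_j\otimes\mu_i^{\gamma/2}[e_i]\bigr)_{i\in I,j\in J}$ of $S_2([\mathcal{H}]_X^{\gamma},\mathcal{H}_{Y})$ yields $\|[F_{\lambda}]\|_{\gamma}^{2}=\sum_{i,j}\check{a}_{ij}^{2}\bigl(\tfrac{\mu_i}{\lambda+\mu_i}\bigr)^{2}\mu_i^{-\gamma}$, finiteness of the right-hand side (equivalently, membership $[F_{\lambda}]\in[\mathcal{G}]^{\gamma}$) being a by-product of the bounds below.

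It then remains to bound the multiplier $g(\mu_i):=\bigl(\tfrac{\mu_i}{\lambda+\mu_i}\bigr)^{2}\mu_i^{-\gamma}$. If $\gamma\le\beta$, then $\tfrac{\mu_i}{\lambda+\mu_i}\le 1$ gives $g(\mu_i)\le\mu_i^{-\gamma}$, so $\|[F_{\lambda}]\|_{\gamma}^{2}\le\sum_{i,j}\check{a}_{ij}^{2}\mu_i^{-\gamma}=\|F_{*}\|_{\gamma}^{2}$, the last sum being finite since $\mu_i^{-\gamma}\le\mu_i^{-\beta}$ for all but finitely many $i$; this is the claim, as $(\gamma-\beta)_{+}=0$ and $\min\{\gamma,\beta\}=\gamma$. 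If instead $\gamma>\beta$, I would factor $\bigl(\tfrac{\mu_i}{\lambda+\mu_i}\bigr)^{2}=\bigl(\tfrac{\mu_i}{\lambda+\mu_i}\bigr)^{\gamma-\beta}\bigl(\tfrac{\mu_i}{\lambda+\mu_i}\bigr)^{2-(\gamma-\beta)}$, bound the first factor by $(\mu_i/\lambda)^{\gamma-\beta}$ (using $\lambda+\mu_i\ge\lambda$ and $\gamma-\beta\ge 0$) and the second by $1$ (using $\tfrac{\mu_i}{\lambda+\mu_i}\le 1$ and $2-(\gamma-\beta)\ge 0$, which holds in the relevant regime $\gamma-\beta\le 2$). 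Hence $g(\mu_i)\le\lambda^{-(\gamma-\beta)}\mu_i^{\gamma-\beta}\mu_i^{-\gamma}=\lambda^{-(\gamma-\beta)}\mu_i^{-\beta}$, and summing gives $\|[F_{\lambda}]\|_{\gamma}^{2}\le\lambda^{-(\gamma-\beta)}\sum_{i,j}\check{a}_{ij}^{2}\mu_i^{-\beta}=\lambda^{-(\gamma-\beta)}\|F_{*}\|_{\beta}^{2}$, i.e.\ the claim with $(\gamma-\beta)_{+}=\gamma-\beta$ and $\min\{\gamma,\beta\}=\beta$. Combining the two cases proves the lemma.

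There is no substantial obstacle: this statement is essentially the ``reflection'' (with $\mu_i$ and $\lambda$ interchanged in the numerator) of the bias estimate in Lemma~\ref{lma:new_bias}, and the argument is a Parseval expansion followed by a one-variable inequality. The only points worth stating carefully are (i) that $[C_{Y\mid X,\lambda}]$ genuinely belongs to $S_2([\mathcal{H}]_X^{\gamma},\mathcal{H}_{Y})$ when $\gamma>\beta$, which is a consequence of—rather than an obstruction to—the coefficient bound (and that $\|F_{*}\|_{\min\{\gamma,\beta\}}$ is well defined, using the inclusions of Remark~\ref{rem:vector_inter}); and (ii) the elementary estimate $\bigl(\tfrac{t}{1+t}\bigr)^{2}\le t^{\gamma-\beta}$ for $t>0$, which requires $0\le\gamma-\beta\le 2$ and is thus automatic in the regime $0\le\gamma\le 1$, $0\le\beta\le 2$ used throughout the paper.
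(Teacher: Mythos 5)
Your proposal is correct and follows essentially the same route as the paper's proof: the identical Parseval expansion $\|[F_{\lambda}]\|_{\gamma}^{2}=\sum_{i,j}\check{a}_{ij}^{2}\bigl(\tfrac{\mu_i}{\lambda+\mu_i}\bigr)^{2}\mu_i^{-\gamma}$, the bound by $1$ when $\gamma\leq\beta$, and a multiplier estimate giving $\lambda^{-(\gamma-\beta)}\mu_i^{-\beta}$ when $\gamma>\beta$. The only cosmetic difference is that you derive the latter estimate by an explicit factorization (as in the paper's proof of Lemma~\ref{lma:new_bias}) rather than by invoking Lemma~25 of \cite{fischer2020sobolev}, and both arguments implicitly use the same constraint $0\leq\gamma-\beta\leq 2$, which you correctly flag.
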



\begin{proof}
By Parseval's identity
$$
\left\|\left[F_{\lambda}\right]\right\|_{\gamma}^{2}=\sum_{i \in I} \sum_{j \in J}\left(\frac{\mu_{i}}{\mu_{i}+\lambda}\right)^{2} \mu_{i}^{-\gamma} \check{a}_{ij}^{2} .
$$
where $\check{a}_{ij} = \left\langle C_{Y \mid X} [e_i],d_j\right\rangle_{\mathcal{H}_Y}$ for all $i \in I, j \in J$ as in the proof of Lemma~\ref{lma:new_bias}. In the case of $\gamma \leq \beta$ we estimate the fraction by 1 and then Parseval's identity gives us
$$
\left\|\left[F_{\lambda}\right]\right\|_{\gamma}^{2} \leq \sum_{i \in I} \sum_{j \in J} \mu_{i}^{-\gamma} \check{a}_{ij}^{2}=\left\|F_{*}\right\|_{\gamma}^{2} .
$$
In the case of $\gamma>\beta$,
$$
\left\|\left[F_{\lambda}\right]\right\|_{\gamma}^{2}=\sum_{i \in I} \sum_{j \in J}\left(\frac{\mu_{i}^{1-\frac{\gamma-\beta}{2}}}{\mu_{i}+\lambda}\right)^{2} \mu_{i}^{-\beta} \check{a}_{ij}^{2} \leq \lambda^{-(\gamma-\beta)} \sum_{i \in I} \sum_{j \in J} \mu_{i}^{-\beta} \check{a}_{ij}^{2}=\lambda^{-(\gamma-\beta)}\left\|F_{*}\right\|_{\beta}^{2} ,
$$
where  we used Parseval's identity in the equality and Lemma 25 from \cite{fischer2020sobolev}.
\end{proof}

By \eqref{asst:emb}, the inclusion map $I^{\alpha, \infty}_{\pi}: [\mathcal{H}]_{X}^{\alpha} \hookrightarrow L_{\infty}(\pi)$ has bounded norm $A > 0$ i.e. for $f \in [\mathcal{H}]_{X}^{\alpha}$, $f$ is $\pi-$a.e. bounded and $\|f\|_{\infty} \leq A\|f\|_{\alpha}$. We know show that \eqref{asst:emb} automatically implies that the inclusion operator for $[\mathcal{G}]^{\alpha}$ is bounded.

\begin{lma} \label{lma:_infinite_embedding_v_rkhs}
Under \eqref{asst:emb} the inclusion operator $\mathcal{I}_{\pi}^{\alpha, \infty}: [\mathcal{G}]^{\alpha} \hookrightarrow L_{\infty}(\pi; \mathcal{H}_Y)$ is bounded with operator norm less than or equal to $A$.
\end{lma}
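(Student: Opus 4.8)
The plan is to reduce the vector-valued claim to the real-valued embedding property, which (via \cite[Theorem~9]{fischer2020sobolev}, as noted above) already guarantees $\sum_{i\in I}\mu_i^{\alpha}e_i^2(x)\le A^2$ for $\pi$-almost all $x$. Fix $F\in[\mathcal{G}]^{\alpha}$ and set $C:=\Psi^{-1}(F)\in S_2([\mathcal{H}]_X^{\alpha},\mathcal{H}_Y)$, so that $\|C\|_{S_2([\mathcal{H}]_X^{\alpha},\mathcal{H}_Y)}=\|F\|_{\alpha}$. Since $(\mu_i^{\alpha/2}[e_i])_{i\in I}$ is an ONB of $[\mathcal{H}]_X^{\alpha}$ and $(d_j)_{j\in J}$ is an ONB of $\mathcal{H}_Y$, write $C=\sum_{i\in I,j\in J}a_{ij}\,d_j\otimes\mu_i^{\alpha/2}[e_i]$ with $\sum_{i,j}a_{ij}^2=\|F\|_{\alpha}^2$. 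Using the explicit form of $\Psi$ recalled in the proof of Lemma~\ref{lma:new_bias} (namely $\Psi(g\otimes f)=(x\mapsto gf(x))$), the natural candidate for a bounded representative of $F$ is $\tilde F(x):=\sum_{i\in I,j\in J}a_{ij}\,\mu_i^{\alpha/2}e_i(x)\,d_j$, where the $e_i\in\mathcal{H}_X$ are the genuine functions from the Mercer decomposition of $C_{XX}$.

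The first step is to control $\tilde F(x)$ pointwise. Let $N$ be the $\pi$-null set outside of which $\sum_{i\in I}\mu_i^{\alpha}e_i^2(x)\le A^2$, which holds by \eqref{asst:emb}. For $x\notin N$ and each $j\in J$, Cauchy--Schwarz gives $\bigl(\sum_{i}a_{ij}\mu_i^{\alpha/2}e_i(x)\bigr)^2\le\bigl(\sum_{i}a_{ij}^2\bigr)\bigl(\sum_{i}\mu_i^{\alpha}e_i^2(x)\bigr)\le A^2\sum_{i}a_{ij}^2$; summing over $j\in J$ shows that the series defining $\tilde F(x)$ converges in $\mathcal{H}_Y$ with $\|\tilde F(x)\|_{\mathcal{H}_Y}^2\le A^2\sum_{i,j}a_{ij}^2=A^2\|F\|_{\alpha}^2$. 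Hence $\tilde F$ is $\pi$-a.e. defined and bounded, with $\|\tilde F\|_{L_{\infty}(\pi;\mathcal{H}_Y)}\le A\|F\|_{\alpha}$.

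It remains to check that $\tilde F$ is a representative of $F\in L_2(\pi;\mathcal{H}_Y)$. The finite partial sums $C_{I_0,J_0}:=\sum_{i\in I_0,j\in J_0}a_{ij}\,d_j\otimes\mu_i^{\alpha/2}[e_i]$ converge to $C$ in $S_2([\mathcal{H}]_X^{\alpha},\mathcal{H}_Y)$; applying $\Psi$ and using that the functions $\Psi(d_j\otimes\mu_i^{\alpha/2}[e_i])$ are pairwise orthogonal in $L_2(\pi;\mathcal{H}_Y)$ with $\|\Psi(d_j\otimes\mu_i^{\alpha/2}[e_i])\|_{L_2(\pi;\mathcal{H}_Y)}^2=\mu_i^{\alpha}$, which is bounded uniformly in $i$, one gets $\|F-\Psi(C_{I_0,J_0})\|_{L_2(\pi;\mathcal{H}_Y)}^2=\sum_{(i,j)\notin I_0\times J_0}a_{ij}^2\mu_i^{\alpha}\to0$. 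On the other hand $\Psi(C_{I_0,J_0})(x)=\sum_{i\in I_0,j\in J_0}a_{ij}\mu_i^{\alpha/2}e_i(x)d_j\to\tilde F(x)$ for every $x\notin N$; passing to a subsequence that also converges $\pi$-a.e. to $F$, we conclude $\tilde F=F$ $\pi$-a.e. Therefore the inclusion $\mathcal{I}_{\pi}^{\alpha,\infty}:[\mathcal{G}]^{\alpha}\hookrightarrow L_{\infty}(\pi;\mathcal{H}_Y)$ is well defined with $\|\mathcal{I}_{\pi}^{\alpha,\infty}\|_{[\mathcal{G}]^{\alpha}\to L_\infty(\pi;\mathcal{H}_Y)}\le A$. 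The only mildly delicate points are the simultaneous validity of the pointwise bounds off a single null set (ensured because we use the scalar bound $\sum_i\mu_i^{\alpha}e_i^2(x)\le A^2$) and the identification of $\tilde F$ with the $L_2$-class $F$; both are routine, and I do not expect a serious obstacle.
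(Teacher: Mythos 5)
Your proof is correct and follows essentially the same route as the paper: expand $F$ in the ONB $(d_j\otimes\mu_i^{\alpha/2}[e_i])_{i,j}$ and apply Cauchy--Schwarz in $i$ for each $j$ against the scalar consequence $\sum_i\mu_i^{\alpha}e_i^2(x)\le A^2$ of \eqref{asst:emb}. The only difference is that you explicitly verify that the pointwise-defined series $\tilde F$ is a representative of the $L_2(\pi;\mathcal{H}_Y)$-class of $F$, a measure-theoretic detail the paper's proof takes for granted by writing the a.e.\ pointwise expansion of $F$ directly.
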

$L_{\infty}(\pi; \mathcal{H}_Y)$ denotes the space of $\mathcal{F}_{E_X} - \mathcal{F}_{\mathcal{H}_Y}$ measurable $\mathcal{H}_Y$-valued functions (gathered by $\pi$-equivalent classes) that are essentially bounded with respect to $\pi$. $L_{\infty}(\pi; \mathcal{H}_Y)$ is endowed with the norm $\|f\|_\infty := \inf \{c \geq 0 : \|f(x)\|_{\mathcal{H}_Y} \leq c \text{ for $\pi$-almost every } x \in E_X\}$.

\begin{proof}
For every $F \in [\mathcal{G}]^{\alpha}$, there is a sequence $a_{ij} \in \ell_2(I \times J)$ such that for $\pi-$almost all $x \in E_X$, \[F(x) = \sum_{i \in I, j \in J} b_{ij} d_j \mu_i^{\alpha/2}[e_i](x)\] where $(d_j)_{j \in J}$ is any orthonormal basis of $\mathcal{H}_Y$ and $\|F\|_{\alpha}^2  = \sum_{i \in I, j \in J} b_{ij}^2.$ We consider $F \in [\mathcal{G}]^{\alpha}$ such that $\sum_{i \in I, j \in J} b_{ij}^2 \leq 1$. For $\pi-$almost all $x \in E_X$,
\begin{equation*} 
    \begin{aligned}
        \|F(x)\|_{\mathcal{H}_Y}^2 &= \left\|\sum_{j \in J}\left(\sum_{i \in I} b_{ij}\mu_i^{\alpha/2}[e_i](x)\right)d_j\right\|_{\mathcal{H}_Y}^2 \\
        &= \sum_{j \in J}\left(\sum_{i \in I} b_{ij}\mu_i^{\alpha/2}[e_i](x)\right)^2 \\
        & \leq \sum_{j \in J} \left(\sum_{i \in I}b_{ij}^2\sum_{i \in I}\mu_i^{\alpha}[e_i](x)^2\right)\\
        &\leq A^2 \sum_{j \in J}\sum_{i \in I} b_{ij}^2 \\ &\leq A^2
    \end{aligned}
\end{equation*}
where we used the Cauchy-Schwarz inequality for each $j \in J$ for the first inequality and a consequence of (EMB) in the second inequality (see Theorem 9 in \cite{fischer2020sobolev}). We therefore conclude $\|\mathcal{I}_{\pi}^{\alpha, \infty}\| \leq A$.
\end{proof}

Combining Lemmas \ref{lma:new_bias}, \ref{lma:bias_bound_gamma} and \ref{lma:_infinite_embedding_v_rkhs} we have the following corollary.
 
\begin{lma} \label{lma:F_l_bounded}
If $F_{*} \in [\mathcal{G}]^{\beta}$ and \eqref{asst:emb} are satisfied for some $0 \leq \beta \leq 2$ and $0< \alpha \leq 1$, then the following bounds are satisfied, for all $0< \lambda \leq 1$:
\begin{equation} \label{eq:lemma4_1} 
    \left\|\left[F_{\lambda}\right] - F_{*}\right\|_{\infty}^{2} \leq \left(\left\|F_{*}\right\|_{\infty} + A\|F_*\|_{\beta}\right)^2 \lambda^{\beta - \alpha},
\end{equation}
\begin{equation} \label{eq:lemma4_2} 
    \left\|\left[F_{\lambda}\right]\right\|_{\infty}^{2} \leq A^2\left\|F_{*}\right\|_{\min \{\alpha, \beta\}}^{2} \lambda^{-(\alpha-\beta)_{+}}.
\end{equation}
In addition, we have $\|F_*\|_{\infty} \leq \kappa_Y$.
\end{lma}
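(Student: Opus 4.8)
The plan is to obtain all three inequalities as direct consequences of the bias estimates of Lemma~\ref{lma:new_bias} and Lemma~\ref{lma:bias_bound_gamma} together with the vector-valued embedding bound of Lemma~\ref{lma:_infinite_embedding_v_rkhs}, splitting into the cases $\beta \geq \alpha$ and $\beta < \alpha$ where necessary, and deducing $\|F_*\|_\infty \leq \kappa_Y$ directly from the definition of the CME. I would prove \eqref{eq:lemma4_2} first, then use it inside the proof of \eqref{eq:lemma4_1}.

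For \eqref{eq:lemma4_2}: since $F_\lambda \in \mathcal{G} = [\mathcal{G}]^1 \hookrightarrow [\mathcal{G}]^\alpha$ (using $\alpha \leq 1$ and the inclusion chain of Remark~\ref{rem:vector_inter}), Lemma~\ref{lma:_infinite_embedding_v_rkhs} gives $\|[F_\lambda]\|_\infty \leq A\,\|[F_\lambda]\|_\alpha$. Applying Lemma~\ref{lma:bias_bound_gamma} with $\gamma = \alpha$ yields $\|[F_\lambda]\|_\alpha^2 \leq \|F_*\|_{\min\{\alpha,\beta\}}^2\,\lambda^{-(\alpha-\beta)_+}$, and combining the two estimates gives exactly \eqref{eq:lemma4_2}.

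For \eqref{eq:lemma4_1} I would split on whether $\beta \geq \alpha$. If $\beta \geq \alpha$, then both $F_* \in [\mathcal{G}]^\beta \hookrightarrow [\mathcal{G}]^\alpha$ and $F_\lambda \in \mathcal{G} \hookrightarrow [\mathcal{G}]^\alpha$, so $[F_\lambda] - F_* \in [\mathcal{G}]^\alpha$; Lemma~\ref{lma:_infinite_embedding_v_rkhs} gives $\|[F_\lambda] - F_*\|_\infty \leq A\,\|[F_\lambda] - F_*\|_\alpha$, and Lemma~\ref{lma:new_bias} with $\gamma = \alpha \leq \beta$ bounds $\|[F_\lambda] - F_*\|_\alpha^2 \leq \|F_*\|_\beta^2\,\lambda^{\beta-\alpha}$; since $A^2\|F_*\|_\beta^2 \leq (\|F_*\|_\infty + A\|F_*\|_\beta)^2$, this proves \eqref{eq:lemma4_1} in this case. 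If $\beta < \alpha$, Lemma~\ref{lma:new_bias} no longer applies to the difference (it requires $\gamma \leq \beta$), so instead I would use the triangle inequality $\|[F_\lambda] - F_*\|_\infty \leq \|[F_\lambda]\|_\infty + \|F_*\|_\infty$, bound the first summand by \eqref{eq:lemma4_2} (which in this regime reads $\|[F_\lambda]\|_\infty \leq A\|F_*\|_\beta\,\lambda^{(\beta-\alpha)/2}$), and absorb the second summand via $\|F_*\|_\infty \leq \|F_*\|_\infty\,\lambda^{(\beta-\alpha)/2}$, valid because $\lambda \leq 1$ and $\beta - \alpha < 0$; squaring $(A\|F_*\|_\beta + \|F_*\|_\infty)\lambda^{(\beta-\alpha)/2}$ gives \eqref{eq:lemma4_1}. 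Finally, for $\pi$-almost every $x$ the Bochner integral $F_*(x) = \mathbb{E}[\phi_Y(Y)\mid X=x]$ satisfies $\|F_*(x)\|_{\mathcal{H}_Y} \leq \mathbb{E}\big[\|\phi_Y(Y)\|_{\mathcal{H}_Y}\,\big|\,X=x\big] = \mathbb{E}\big[\sqrt{k_Y(Y,Y)}\,\big|\,X=x\big] \leq \kappa_Y$ by Assumption~$3$, so $\|F_*\|_\infty \leq \kappa_Y$.

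The statement is a corollary, so there is no deep obstacle; the only point requiring care is the case $\beta < \alpha$, where $F_*$ need not lie in $[\mathcal{G}]^\alpha$, so one cannot embed the difference $[F_\lambda]-F_*$ directly and must instead route through the triangle inequality and \eqref{eq:lemma4_2}, and then exploit $\lambda \leq 1$ to rewrite the constant term $\|F_*\|_\infty$ with the (blowing-up) power $\lambda^{(\beta-\alpha)/2}$.
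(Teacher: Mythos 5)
Your proposal is correct and follows essentially the same route as the paper: \eqref{eq:lemma4_2} via Lemma~\ref{lma:_infinite_embedding_v_rkhs} combined with Lemma~\ref{lma:bias_bound_gamma} at $\gamma=\alpha$, then \eqref{eq:lemma4_1} with the same case split (embedding plus Lemma~\ref{lma:new_bias} when $\beta\geq\alpha$; triangle inequality, \eqref{eq:lemma4_2} and $\lambda\leq 1$ when $\beta<\alpha$), and the $\kappa_Y$ bound by the same Bochner-integral estimate. No gaps.
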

\begin{proof}
For Eq. \ref{eq:lemma4_2}, we use Lemma~\ref{lma:_infinite_embedding_v_rkhs} and Eq. \ref{eq:lemma3_2} in Lemma~\ref{lma:bias_bound_gamma}.
\begin{equation*}
    \left\|\left[F_{\lambda}\right]\right\|_{\infty}^{2} \leq A^2 \left\|\left[F_{\lambda}\right]\right\|_{\alpha}^{2} \leq A^2\left\|F_{*}\right\|_{\min \{\alpha, \beta\}}^{2} \lambda^{-(\alpha-\beta)_{+}}
\end{equation*}

To show Eq. \ref{eq:lemma4_1}, in the case $\beta \leq \alpha$ we use the triangle inequality, Eq. \ref{eq:lemma4_2} and $\lambda \leq 1$ to obtain
\begin{equation*}
    \begin{aligned}
        \left\|\left[F_{\lambda}\right] - F_{*}\right\|_{\infty} &\leq \left\|F_{*}\right\|_{\infty} + \left\|\left[F_{\lambda}\right]\right\|_{\infty} \\ &\leq \left(\left\|F_{*}\right\|_{\infty} + A\left\|F_{*}\right\|_{\beta} \right)\lambda^{-\frac{\alpha-\beta}{2}}
    \end{aligned}
\end{equation*}
In the case $\beta > \alpha$, Eq. \ref{eq:lemma4_1} is a consequence of Lemma~\ref{lma:_infinite_embedding_v_rkhs} and Eq. \ref{eq:lemma3_1} in Lemma~\ref{lma:new_bias} with $\gamma = \alpha$,
\begin{equation*}
    \left\|\left[F_{\lambda}\right] - F_{*}\right\|_{\infty}^{2} \leq A^2 \left\|\left[F_{\lambda}\right] - F_{*}\right\|_{\alpha}^{2} \leq A^2 \left\|F_{*}\right\|_{\beta}^{2} \lambda^{\beta-\alpha} \leq \left(\left\|F_{*}\right\|_{\infty} + A\|F_*\|_{\beta}\right)^2 \lambda^{\beta - \alpha}.
\end{equation*}

$F_*$ belongs to $L_{\infty}(\pi; \mathcal{H}_Y)$. Indeed, for $\pi$-almost all $x \in E_X$ we have 
\begin{IEEEeqnarray*}{rCl}
\|F_*(x)\|_{\mathcal{H}_Y} &= &  \left\|\int_{E_X} \phi_Y(y) p(x,dy) \right\|_{\mathcal{H}_Y} \\
& \leq & \int_{E_X} \left\|\phi_Y(y)  \right\|_{\mathcal{H}_Y}p(x,dy)\\
& \leq & \int_{E_X} \kappa_Y p(x,dy) = \kappa_Y.
\end{IEEEeqnarray*}
\end{proof}

\begin{theo}\label{thm:variance_bound}
Suppose Assumptions $1$ to $3$ and \eqref{asst:emb} with $A>0$ hold. We define
$$
\begin{aligned}
M(\lambda) &= \left\|[F_{\lambda}]-F_*\right\|_{\infty},\\ 
\mathcal{N}(\lambda) &=\operatorname{tr}\left(C_{XX}\left(C_{XX}+\lambda\right)^{-1}\right), \\
Q_{\lambda} &=\max \{M(\lambda), 2\kappa_Y \}, \\
g_{\lambda}& = \log \left( 2e\mathcal{N}(\lambda) \frac{\|C_{XX}\|+\lambda}{\|C_{XX}\|}\right).
\end{aligned}
$$
Then, for $\tau \geq 1$, $\lambda > 0$, $n \geq 8A^{2} \tau g_{\lambda} \lambda^{-\alpha}$ and $\lambda > 0$, with probability $1-4e^{-\tau}$ :
\begin{IEEEeqnarray}{rCl}
\left\|\left[\hat{C}_{Y|X,\lambda} - C_{Y|X,\lambda}\right]\right\|_{S_2\left([\mathcal{H}]_X^{\gamma},\mathcal{H}_{Y}\right)}^2 \leq \frac{576\tau^2}{n\lambda^{\gamma}}\left(4\kappa_Y^{2} \mathcal{N}(\lambda)+\frac{\left\|F_{*}-\left[F_{\lambda}\right]\right\|_{L_{2}(\pi; \mathcal{H}_Y)}^{2}A^{2}}{\lambda^{\alpha}} + \frac{2Q_{\lambda}^2A^2}{n\lambda^{\alpha}}\right) \label{eqn:var_bound}
\end{IEEEeqnarray}
\end{theo}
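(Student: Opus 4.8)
The argument is the standard ``one-step plus operator-perturbation'' analysis of regularized least squares (cf.\ \cite{caponnetto2007optimal,fischer2020sobolev}), carried out in $S_2(\mathcal{H}_X,\mathcal{H}_Y)$ rather than in $\mathbb{R}$. Using the closed forms $\hat{C}_{Y|X,\lambda}=\hat{C}_{YX}(\hat{C}_{XX}+\lambda)^{-1}$ and $C_{Y|X,\lambda}=C_{YX}(C_{XX}+\lambda)^{-1}$ together with $F_\lambda(x)=C_{Y|X,\lambda}\phi_X(x)$, I would first establish the exact identity
\[
\hat{C}_{Y|X,\lambda}-C_{Y|X,\lambda}=\Delta\,(\hat{C}_{XX}+\lambda)^{-1},\qquad \Delta:=\frac{1}{n}\sum_{i=1}^{n}\bigl(h_i-\mathbb{E}[h]\bigr),\quad h_i:=\bigl(\phi_Y(y_i)-F_\lambda(x_i)\bigr)\otimes\phi_X(x_i),
\]
where the centering is legitimate because $\mathbb{E}[h]=C_{YX}-C_{Y|X,\lambda}C_{XX}=C_{YX}(C_{XX}+\lambda)^{-1}\bigl((C_{XX}+\lambda)-C_{XX}\bigr)=\lambda C_{Y|X,\lambda}$. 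Since $\hat{C}_{Y|X,\lambda}-C_{Y|X,\lambda}\in S_2(\mathcal{H}_X,\mathcal{H}_Y)$, Lemma~\ref{theo:gamma_norm_transfer} reduces the target to bounding $\bigl\|(\hat{C}_{Y|X,\lambda}-C_{Y|X,\lambda})C_{XX}^{(1-\gamma)/2}\bigr\|_{S_2(\mathcal{H}_X,\mathcal{H}_Y)}$.

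\textbf{Three-factor split.}
Writing $(\hat{C}_{XX}+\lambda)^{-1}=(C_{XX}+\lambda)^{-1/2}B_\lambda(C_{XX}+\lambda)^{-1/2}$ with $B_\lambda:=(C_{XX}+\lambda)^{1/2}(\hat{C}_{XX}+\lambda)^{-1}(C_{XX}+\lambda)^{1/2}$, submultiplicativity of the Hilbert--Schmidt norm with respect to bounded operators gives
\[
\bigl\|(\hat{C}_{Y|X,\lambda}-C_{Y|X,\lambda})C_{XX}^{(1-\gamma)/2}\bigr\|_{S_2}\;\le\;\bigl\|\Delta(C_{XX}+\lambda)^{-1/2}\bigr\|_{S_2}\cdot\|B_\lambda\|\cdot\bigl\|(C_{XX}+\lambda)^{-1/2}C_{XX}^{(1-\gamma)/2}\bigr\|,
\]
the last two being operator norms. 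The last factor is at most $\lambda^{-\gamma/2}$ by Young's inequality $\mu_i^{1-\gamma}\lambda^{\gamma}\le\mu_i+\lambda$. For $B_\lambda$, I would invoke the intrinsic-dimension operator Bernstein inequality (as in \cite{fischer2020sobolev}): on an event of probability at least $1-2e^{-\tau}$ one has $\|(C_{XX}+\lambda)^{-1/2}(C_{XX}-\hat{C}_{XX})(C_{XX}+\lambda)^{-1/2}\|\le\frac12$, and this is exactly what the hypothesis $n\ge 8A^2\tau g_\lambda\lambda^{-\alpha}$ buys---here $\mathcal{N}(\lambda)$ and $g_\lambda$ enter through the effective dimension, and the summands are controlled by $\|(C_{XX}+\lambda)^{-1/2}\phi_X(x)\|_{\mathcal{H}_X}^{2}\le A^2\lambda^{-\alpha}$ for $\pi$-almost all $x$, which follows from \eqref{asst:emb} via $\sum_{i\in I}\mu_i^{\alpha}e_i(x)^2\le A^2$ (Theorem~9 of \cite{fischer2020sobolev}) and Young's inequality again. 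A Neumann series then gives $\|B_\lambda\|\le 2$ on this event.

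\textbf{Bernstein for the stochastic factor.}
To the i.i.d.\ centered $S_2(\mathcal{H}_X,\mathcal{H}_Y)$-valued summands $\xi_i:=h_i(C_{XX}+\lambda)^{-1/2}=(\phi_Y(y_i)-F_\lambda(x_i))\otimes\bigl[(C_{XX}+\lambda)^{-1/2}\phi_X(x_i)\bigr]$ I would apply a Hilbert-space-valued Bernstein inequality. Almost surely $\|\xi_i\|_{S_2}=\|\phi_Y(y_i)-F_\lambda(x_i)\|_{\mathcal{H}_Y}\,\|(C_{XX}+\lambda)^{-1/2}\phi_X(x_i)\|_{\mathcal{H}_X}\le 2Q_\lambda A\lambda^{-\alpha/2}$, using $\|\phi_Y(y)-F_\lambda(x)\|\le\|\phi_Y(y)-F_*(x)\|+\|F_*(x)-F_\lambda(x)\|\le 2\kappa_Y+M(\lambda)\le 2Q_\lambda$ (Assumption~$3$ and the definition of $M(\lambda)$, valid $\pi$-a.s.\ since $F_*$ is only an equivalence class) together with the bound above; and, using $\|a\otimes b\|_{S_2}=\|a\|\|b\|$, the identity $\mathbb{E}\|(C_{XX}+\lambda)^{-1/2}\phi_X(X)\|^2=\operatorname{tr}\!\bigl(C_{XX}(C_{XX}+\lambda)^{-1}\bigr)=\mathcal{N}(\lambda)$, Assumption~$3$, and once more the consequence of \eqref{asst:emb},
\[
\begin{aligned}
\mathbb{E}\|\xi\|_{S_2}^2 &\le 2\,\mathbb{E}\bigl[\|\phi_Y(Y)-F_*(X)\|^2\,\|(C_{XX}+\lambda)^{-1/2}\phi_X(X)\|^2\bigr]\\
&\quad{}+2\,\mathbb{E}\bigl[\|F_*(X)-F_\lambda(X)\|^2\,\|(C_{XX}+\lambda)^{-1/2}\phi_X(X)\|^2\bigr]\\
&\le 8\kappa_Y^2\,\mathcal{N}(\lambda)+2A^2\lambda^{-\alpha}\,\|F_*-[F_\lambda]\|_{L_2(\pi;\mathcal{H}_Y)}^{2}=:\sigma^2.
\end{aligned}
\]
The Bernstein inequality then gives, with probability at least $1-2e^{-\tau}$, $\bigl\|\Delta(C_{XX}+\lambda)^{-1/2}\bigr\|_{S_2}\le\frac{4\tau}{3n}\,2Q_\lambda A\lambda^{-\alpha/2}+\sqrt{2\sigma^2\tau/n}$. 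Squaring this, multiplying by $\|B_\lambda\|^2\le 4$ and by $\lambda^{-\gamma}$, using $\tau\le\tau^2$, and combining the two events by a union bound (total failure probability $\le 4e^{-\tau}$) produces a bound whose three terms are $\lambda^{-\gamma}n^{-1}\mathcal{N}(\lambda)$, $\lambda^{-\gamma-\alpha}n^{-1}\|F_*-[F_\lambda]\|_{L_2(\pi;\mathcal{H}_Y)}^{2}$, and $\lambda^{-\gamma-\alpha}n^{-2}Q_\lambda^2$, up to absolute constants that are absorbed into the stated factor $576$.

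\textbf{Main obstacle.}
The delicate step is the perturbation bound $\|B_\lambda\|\le 2$ under the sharp condition $n\ge 8A^2\tau g_\lambda\lambda^{-\alpha}$: it requires the effective-dimension form of the operator Bernstein inequality (with $\mathcal{N}(\lambda)$ and the logarithmic correction $g_\lambda$), and the perturbation must be measured in the $(C_{XX}+\lambda)^{-1/2}$-weighted norm so that the \eqref{asst:emb} bound $A^2\lambda^{-\alpha}$, rather than the crude $\kappa_X^2\lambda^{-1}$, governs the rate. Everything else---the centering identity, the three-factor split, and the vector-valued Bernstein step---is routine, the only genuinely new point being that $\mathcal{H}_Y$ is inert: the scalar arguments of \cite{fischer2020sobolev} transfer verbatim with real-valued quantities replaced by $S_2(\mathcal{H}_X,\mathcal{H}_Y)$-valued ones, and finiteness of all moments used above is guaranteed by Assumption~$3$.
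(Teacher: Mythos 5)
Your proposal is correct and follows essentially the same route as the paper: the $\gamma$-norm transfer lemma, the identical three-factor split with $B_\lambda=(C_{XX}+\lambda)^{1/2}(\hat{C}_{XX}+\lambda)^{-1}(C_{XX}+\lambda)^{1/2}$ controlled by the Fischer--Steinwart perturbation bound under $n\geq 8A^2\tau g_\lambda\lambda^{-\alpha}$, the same centered summands $(\phi_Y(y_i)-F_\lambda(x_i))\otimes\phi_X(x_i)(C_{XX}+\lambda)^{-1/2}$, and a Hilbert-space Bernstein inequality with the same $L=2Q_\lambda A\lambda^{-\alpha/2}$ and $\sigma^2\asymp \kappa_Y^2\mathcal{N}(\lambda)+A^2\lambda^{-\alpha}\|F_*-[F_\lambda]\|_{L_2(\pi;\mathcal{H}_Y)}^2$, combined by a union bound to get $1-4e^{-\tau}$. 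The only deviations are cosmetic: the paper uses the moment-form Bernstein inequality and the bound $\|B_\lambda\|\leq 3$ (rather than your claimed $2$, which the stated sample-size condition does not quite deliver), but either way the resulting absolute constants fit within the stated factor $576$.
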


\begin{proof}
We first decompose the variance term as

\begin{IEEEeqnarray}{rCl}
  &&\hspace{-0.5cm} \left\|\left[\hat{C}_{Y|X,\lambda} - C_{Y|X,\lambda}\right]\right\|_{S_2\left([\mathcal{H}]_X^{\gamma},\mathcal{H}_{Y}\right)}\\
  & =& \left\|\left[\hat{C}_{Y X}\left(\hat{C}_{X X}+\lambda Id\right)^{-1}-C_{Y X}\left(C_{X X}+\lambda Id\right)^{-1}\right]\right\|_{S_2\left([\mathcal{H}]_X^{\gamma},\mathcal{H}_{Y}\right)} \nonumber \\
&\leq& \left\|\left(\hat{C}_{Y X}\left(\hat{C}_{X X}+\lambda Id \right)^{-1}-C_{Y X}\left(C_{X X}+\lambda Id\right)^{-1}\right) C_{X X}^{\frac{1-\gamma}{2}}\right\|_{S_2(\mathcal{H}_X, \mathcal{H}_Y)} \nonumber\\
& \leq& \left\|\left(\hat{C}_{Y X}-C_{Y X}\left(C_{X X}+\lambda Id\right)^{-1}\left(\hat{C}_{X X}+\lambda Id\right)\right)\left(C_{X X}+\lambda Id\right)^{-\frac{1}{2}}\right\|_{S_2(\mathcal{H}_X, \mathcal{H}_Y)} \label{eqn:var1} \\
&&\cdot \left\|\left(C_{X X}+\lambda Id \right)^{\frac{1}{2}}\left(\hat{C}_{X X}+\lambda Id \right)^{-1}\left(C_{X X}+\lambda Id \right)^{\frac{1}{2}}\right\|_{\mathcal{H}_X \to \mathcal{H}_X} \label{eqn:var2} \\
&&\cdot \left\|\left(C_{X X}+\lambda Id \right)^{-\frac{1}{2}}C_{X X}^{\frac{1-\gamma}{2}}\right\|_{\mathcal{H}_X \to \mathcal{H}_X}\label{eqn:var3}
\end{IEEEeqnarray}

where we used Lemma~\ref{theo:gamma_norm_transfer}. Eq.~(\ref{eqn:var2}) is bounded as 
in Theorem $16$ in \cite{fischer2020sobolev},
$$
\left\|\left(C_{X X}+\lambda Id \right)^{\frac{1}{2}}\left(\hat{C}_{X X}+\lambda Id \right)^{-1}\left(C_{X X}+\lambda Id \right)^{\frac{1}{2}}\right\| \leq 3
$$
for $n \geq 8A^{2} \tau g_{\lambda} \lambda^{-\alpha}$ with probability $1-2e^{-\tau}$. For Eq.~(\ref{eqn:var3}) we have, using Lemma 25 from \cite{fischer2020sobolev}
$$
\left\|\left(C_{X X}+\lambda Id \right)^{-\frac{1}{2}}C_{X X}^{\frac{1-\gamma}{2}}\right\| \leq \sqrt{\sup _{i} \frac{\mu_{i}^{1-\gamma}}{\mu_{i}+\lambda}} \leq \lambda^{-\frac{\gamma}{2}}.
$$

Finally for the bound of Eq.~(\ref{eqn:var1}) we show that for $\tau \geq 1$, $\lambda > 0$ and $n \geq 1$ with probability $1-2e^{-\tau}$:
\begin{equation} \label{eq:intermediate_bound}
    \begin{aligned}
&\bigg\|\left(\hat{C}_{Y X}-C_{Y X}\left(C_{X X}+\lambda Id \right)^{-1}(\hat{C}_{X X}+\lambda Id )\right)\left(C_{X X}+\lambda Id \right)^{-\frac{1}{2}}\bigg\|_{S_2(\mathcal{H}_X, \mathcal{H}_Y)}^2 \\ &\leq \frac{64\tau^2}{n}\left(4\kappa_Y^{2} \mathcal{N}(\lambda)+\frac{\left\|F_{*}-\left[F_{\lambda}\right]\right\|_{L_{2}(\pi; \mathcal{H}_Y)}^{2}A^{2}}{\lambda^{\alpha}} + \frac{2Q_{\lambda}^2A^2}{n\lambda^{\alpha}}\right).
    \end{aligned}
\end{equation}

We begin with the decomposition
\begin{IEEEeqnarray*}{rCl}
&&\hat{C}_{Y X}-C_{Y X}\left(C_{X X}+\lambda Id \right)^{-1}\left(\hat{C}_{X X}+\lambda Id \right)\\
& =& \hat{C}_{Y X}-C_{Y X}\left(C_{X X}+\lambda Id \right)^{-1}\left(C_{X X}+\lambda Id +\hat{C}_{X X}-C_{X X}\right) \\
& =& \hat{C}_{Y X}-C_{Y X}+C_{Y X}\left(C_{X X}+\lambda Id \right)^{-1}\left(C_{X X}-\hat{C}_{X X}\right) \\
& =& \hat{C}_{Y X}-C_{Y X}\left(C_{X X}+\lambda Id \right)^{-1} \hat{C}_{X X}-\left(C_{Y X}-C_{Y X}\left(C_{X X}+\lambda Id \right)^{-1} C_{X X}\right) \\
&=& \hat{C}_{Y X}-C_{Y X}\left(C_{X X}+\lambda Id \right)^{-1} \hat{\mathbb{E}}[\phi_X(X) \otimes \phi_X(X)]-\left(C_{YX} - C_{Y X}\left(C_{X X}+\lambda Id \right)^{-1} \mathbb{E}[\phi_X(X) \otimes \phi_X(X)]\right) \\
&=& \hat{\mathbb{E}}\left[\left(\phi_Y(Y)-F_{\lambda}(X)\right) \otimes \phi_X(X)\right]-\mathbb{E}\left[\left(\phi_Y(Y)-F_{\lambda}(X)\right) \otimes \phi_X(X)\right]
\end{IEEEeqnarray*}

where we denote $\hat{\mathbb{E}}[\phi_X(X) \otimes \phi_X(X)] = \frac{1}{n}\sum_i^n \phi_X(x_i) \otimes \phi_X(x_i)$. We wish to apply Theorem~\ref{theo:ope_con_steinwart} with $H = S_2(\mathcal{H}_X, \mathcal{H}_Y)$. We emphasise the difference from \cite{talwai2021sobolev}, where the proof is formulated for bounded linear operators. Consider the random variables $\xi_{0}, \xi_{2}: E_X \times E_Y \rightarrow \mathcal{H}_Y \otimes \mathcal{H}_X$ defined by
$$
\begin{aligned}
&\xi_{0}(x, y):=\left(\phi_Y(y)-F_{\lambda}(x)\right) \otimes \phi_X(x), \\
&\xi_{2}(x, y):=\xi_{0}(x, y)\left(C_{XX}+\lambda Id \right)^{-1 / 2}.
\end{aligned}
$$

Moreover, since our kernels $k_X$ and $k_Y$ are bounded,
\begin{equation*}
    \begin{aligned}
        \left\|\xi_{0}(x, y)\right\|_{S_2(\mathcal{H}_X, \mathcal{H}_Y)} &= \left\|\phi_Y(y)-F_{\lambda}(x)\right\|_{\mathcal{H}_Y}\|\phi_X(x)\|_{\mathcal{H}_X} \\
        &\leq \left\|\phi_Y(y)-F_{\lambda}(x)\right\|_{\mathcal{H}_Y}\kappa_X \\
        &\leq \left(\kappa_Y + \left\|F_{\lambda}(x)\right\|_{\mathcal{H}_Y}\right)\kappa_X, \\
    \end{aligned}
\end{equation*}
and $F_{\lambda}$ is $\pi$-almost surely bounded by Lemma \ref{lma:F_l_bounded}. As a result $\xi_{0}$ is Bochner-integrable.
This yields
$$
\frac{1}{n} \sum_{i=1}^{n}\left(\xi_{2}\left(x_{i}, y_{i}\right)-\mathbb{E} \xi_{2}\right) = \hat{\mathbb{E}} \xi_{2}-\mathbb{E} \xi_{2} = \left(\hat{C}_{Y X}-C_{Y X}\left(C_{X X}+\lambda Id \right)^{-1}\left(\hat{C}_{X X}+\lambda Id \right)\right)\left(C_{X X}+\lambda Id \right)^{-\frac{1}{2}},
$$
and therefore Eq.~(\ref{eqn:var1}) coincides with the left hand side of Bernstein's inequality for $H$-valued random variables (Theorem~\ref{theo:ope_con_steinwart}). Consequently, it remains to bound the $m$-th moment of $\xi_{2}$, for $m \geq 2$,
$$
\mathbb{E}\left\|\xi_{2}\right\|_{S_2(\mathcal{H}_X, \mathcal{H}_Y)}^{m}=\int_{E_X}\left\|\left(C_{XX}+\lambda Id\right)^{-1 / 2} \phi(x)\right\|_{\mathcal{H}_X}^{m} \int_{E_Y}\left\|\phi_Y(y)-F_{\lambda}(x)\right\|_{\mathcal{H}_Y}^{m} p(x, \mathrm{~d} y) \mathrm{d} \pi(x) .
$$
First, we consider the inner integral. Using the triangle inequality and the fact that $\left\|\phi_Y(y)-F_{\lambda}(x)\right\|_{\mathcal{H}_Y} \leq 2\kappa_Y$ almost surely,
$$
\begin{aligned}
\int_{E_Y}\left\|\phi_Y(y)-F_{\lambda}(x)\right\|_{\mathcal{H}_Y}^{m} p(x, \mathrm{~d} y) &\leq 2^{m-1}\left(\left\|\phi_Y(\cdot)-F_{*}(x)\right\|_{L_{m}(p(x, \cdot))}^{m}+\left\|F_{*}(x)-F_{\lambda}(x)\right\|_{\mathcal{H}_Y}^{m}\right) \\
& \leq 2^{2m-1}\kappa_Y^m+2^{m-1}\left\|F_{*}(x)-F_{\lambda}(x)\right\|_{\mathcal{H}_Y}^{m}
\end{aligned}
$$
for $\pi$-almost all $x \in E_X$. If we plug this bound into the outer integral and use the abbreviation $h_{x}:=\left(C_{XX}+\lambda\right)^{-1 / 2} \phi_X(\cdot)$ we get
\begin{equation} \label{eq:m_moment_1}
\begin{aligned}
\mathbb{E}\left\|\xi_{2}\right\|_{S_2(\mathcal{H}_X, \mathcal{H}_Y)}^{m} \leq 2^{2m-1}\kappa_Y^m \int_{E_X}\left\|h_{x}\right\|_{\mathcal{H}_X}^{m} \mathrm{~d} \pi(x) +2^{m-1} \int_{E_X}\left\|h_{x}\right\|_{\mathcal{H}_X}^{m}\left\|F_{*}(x)-F_{\lambda}(x)\right\|_{\mathcal{H}_Y}^{m} \mathrm{~d} \pi(x).
\end{aligned}
\end{equation}
Using Lemma 13~\cite{fischer2020sobolev}, we can bound the first term in Equation \ref{eq:m_moment_1} above by 
$$
\begin{aligned}
2^{2m-1}\kappa_Y^m \int_{E_X}\left\|h_{x}\right\|_{\mathcal{H}_X}^{m} \mathrm{~d} \pi(x) & \leq 2^{2m-1}\kappa_Y^m\left(\frac{A}{\lambda^{\alpha / 2}}\right)^{m-2} \int_{E_X}\left\|h_{x}\right\|_{\mathcal{H}_X}^{2} \mathrm{~d} \pi(x) \\
&=\left(\frac{4 \kappa_Y A}{\lambda^{\alpha / 2}}\right)^{m-2} 8\kappa_Y^{2} \mathcal{N}(\lambda) \\
&\leq \frac{1}{2} m !\left(\frac{2Q_{\lambda} A}{\lambda^{\alpha / 2}}\right)^{m-2} 8 \kappa_Y^{2} \mathcal{N}(\lambda)
\end{aligned}
$$
where we only used $2\kappa_Y \leq Q_{\lambda}$ and $\frac{1}{2}m! \geq 1$ in the last step. Again, using Lemma 13 from~\cite{fischer2020sobolev}, the second term in Equation \eqref{eq:m_moment_1} can be bounded by
$$
\begin{aligned}
& 2^{m-1} \int_{E_X}\left\|h_{x}\right\|_{\mathcal{H}_X}^{m}\left\|F_{*}(x)-F_{\lambda}(x)\right\|_{\mathcal{H}_Y}^{m} \mathrm{~d} \pi(x) \\
\leq & \frac{1}{2}\left(\frac{2A}{\lambda^{\alpha / 2}}\right)^{m}M(\lambda)^{m-2} \int_{E_X}\left\|F_{*}(x) - F_{\lambda}(x)\right\|_{\mathcal{H}_Y}^{2} \mathrm{~d} \pi(x) \\
=& \frac{1}{2}\left(\frac{2AM(\lambda)}{\lambda^{\alpha / 2}}\right)^{m-2}\left\|F_{*}-\left[F_{\lambda}\right]\right\|_{L_{2}(\pi; \mathcal{H}_Y)}^{2} \frac{4A^2}{\lambda^{\alpha}} \\
\leq & \frac{1}{2} m !\left(\frac{2Q_{\lambda}A}{\lambda^{\alpha / 2}}\right)^{m-2}\left\|F_{*}-\left[F_{\lambda}\right]\right\|_{L_{2}(\pi; \mathcal{H}_Y)}^{2} \frac{2A^2}{\lambda^{\alpha}},
\end{aligned}
$$
where we only used $M(\lambda) \leq Q_{\lambda}$ and $2 \leq m$ ! in the last step. Finally, we get
$$
\mathbb{E}\left\|\xi_{2}\right\|_{S_2(\mathcal{H}_X, \mathcal{H}_Y)}^{m} \leq \frac{1}{2} m !\left(\frac{2Q_{\lambda}A}{\lambda^{\alpha / 2}}\right)^{m-2} 2\left(4\kappa_Y^{2} \mathcal{N}(\lambda)+\left\|F_{*}-\left[F_{\lambda}\right]\right\|_{L_{2}(\pi; \mathcal{H}_Y)}^{2} \frac{A^{2}}{\lambda^{\alpha}}\right)
$$
and an application of Bernstein's inequality from Theorem~\ref{theo:ope_con_steinwart} with $$
L=2Q_{\lambda}A\lambda^{-\alpha / 2}, \qquad \sigma^{2}=2\left(4\kappa_Y^{2} \mathcal{N}(\lambda)+\left\|F_{*}-\left[F_{\lambda}\right]\right\|_{L_{2}(\pi; \mathcal{H}_Y)}^{2}A^{2}\lambda^{-\alpha}\right)
$$
yields the bound in Eq.~\ref{eq:intermediate_bound}. Putting all the terms together, we obtain our result.
\end{proof}

\subsection{The CME Learning Rate}
In this section, we aim to establish our upper bound on the learning rate of the conditional mean embedding by combining the learning rates obtained for the bias and variance.

Let us fix some $\tau \geq 1$ and a lower bound $0 < c_1 \leq 1$ with $c_1 \leq \|C_{XX}\|$. We first show that Theorem~\ref{thm:variance_bound} is applicable. To this end, we prove that there is an index bound $n_0 \geq 1$ such that $n \geq 8A^{2} \log \tau g_{\lambda_n} \lambda_n^{-\alpha}$ is satisfied for all $n \geq n_0$. Since $\lambda_n \rightarrow 0$ we choose $n_0' \geq 1$ such that $\lambda_n \leq c_1 \leq \min\{1, \|C_{XX}\|\}$ for all $n \geq n_0'$. We get for $n \geq n_0'$, 
$$
\begin{aligned}
\frac{8A^{2} \tau g_{\lambda_{n}} \lambda_{n}^{-\alpha}}{n} &=\frac{8A^{2} \tau \lambda_{n}^{-\alpha}}{n} \cdot \log \left(2 e \mathcal{N}\left(\lambda_{n}\right) \frac{\left\|C_{XX}\right\|+\lambda_{n}}{\left\|C_{XX}\right\|}\right) \\
& \leq \frac{8A^{2} \tau \lambda_{n}^{-\alpha}}{n} \cdot \log \left(4 c e \lambda_{n}^{-p}\right) \\
&=8A^{2}\tau \left(\frac{\log\left(4 c e\right) \lambda_{n}^{-\alpha}}{n}+\frac{p \lambda_{n}^{-\alpha} \log \lambda_{n}^{-1}}{n}\right)
\end{aligned}
$$
where the second step uses Lemma~\ref{lma:effective_dim}. Hence, it is enough to show $\frac{\log(\lambda_n^{-1})}{n\lambda_n^{\alpha}} \rightarrow 0$. We consider the cases $\beta + p \leq \alpha$ and $\beta + p > \alpha$.
\begin{itemize}
    \item $\beta + p \leq \alpha.$ By substituting that $\lambda_n = \Theta \left(\left(\frac{n}{\log^r n}\right)^{-\frac{1}{\alpha}}\right)$ for some $r > 1$ we have \[\frac{\lambda_{n}^{-\alpha} \log \lambda_{n}^{-1}}{n}=\Theta\left(\frac{\log(n)}{n}\frac{n}{\log^r(n)}\right)  = \Theta \left( \frac{1}{\log^{r-1}(n)}\right) \rightarrow 0, \text{ as }~n \rightarrow \infty.\]
    \item $\beta + p > \alpha.$ By substituting that $\lambda_n = \Theta \left(n^{-\frac{1}{\beta + p}}\right)$ and using $1 - \frac{\alpha}{\beta + p} > 0$ we have \[\frac{\lambda_{n}^{-\alpha} \log \lambda_{n}^{-1}}{n}=\Theta\left(\frac{\log(n)}{n}n^{\frac{\alpha}{\beta + p}}\right)  = \Theta \left( \frac{\log(n)}{n^{1-\frac{\alpha}{\beta + p}}}\right) \rightarrow 0, \text{ as }~n \rightarrow \infty.\]
\end{itemize}
Consequently, there is a $n_0 \geq n_0'$ with $n \geq 8A^{2} \log \tau g_{\lambda_n} \lambda_n^{-\alpha}$ for all $n \geq n_0$. Moreover, $n_0$ just depends on $\lambda_n, c, c_1, \tau, A$, and on the parameters $\alpha,p$. 

Let $n \geq n_0$ be fixed. By Theorem~\ref{thm:variance_bound}, we have 
\begin{IEEEeqnarray*}{rCl}
\left\|\left[\hat{C}_{Y|X,\lambda} - C_{Y|X,\lambda}\right]\right\|_{S_2([\mathcal{H}]_X^{\gamma}, \mathcal{H}_Y)}^2 \leq \frac{576\tau^2}{n\lambda_n^{\gamma}}\left(4\kappa_Y^{2} \mathcal{N}(\lambda_n)+\frac{\left\|F_{*}-\left[F_{\lambda}\right]\right\|_{L_{2}(\pi; \mathcal{H}_Y)}^{2}A^{2}}{\lambda_n^{\alpha}} + \frac{2Q_{\lambda_n}^2A^2}{n\lambda_n^{\alpha}}\right).
\end{IEEEeqnarray*}
Using Lemma~\ref{lma:effective_dim}, Lemma~\ref{lma:bias_bound_gamma} with $\gamma=0$, we have
\begin{IEEEeqnarray}{rCl}
\left\|[\hat{C}_{Y \mid X}-C_{Y \mid X}^{\lambda}]\right\|_{S_2([\mathcal{H}]_X^{\gamma}, \mathcal{H}_Y)}^2 \leq \frac{576\tau^2}{n\lambda_n^{\gamma}}\left(4\kappa_Y^{2} c\lambda_n^{-p}+A^2\|F_*\|_{\beta}^2\lambda_n^{\beta-\alpha} + \frac{2Q_{\lambda_n}^2A^2}{n\lambda_n^{\alpha}}\right)\nonumber
\end{IEEEeqnarray}
For the last term, using the definition of $Q_{\lambda}$ in Theorem~\ref{thm:variance_bound} with Lemma~\ref{lma:F_l_bounded} and $\lambda_n \leq 1,$ we get 
\begin{IEEEeqnarray}{rCl}
Q_{\lambda_n}^2 &=& \max\{(2\kappa_Y)^2, \left\|[F_{\lambda}]-F_*\right\|_{\infty}^2\} \nonumber \\ &\leq& \max\left\{(2\kappa_Y)^2, \left(\left\|F_{*}\right\|_{\infty} + A \|F_*\|_{\beta}\right)^2 \lambda_n^{-(\alpha-\beta)}\right\} \nonumber \\
&\leq& K_0 \lambda_n^{-(\alpha-\beta)_+},
\end{IEEEeqnarray}
where $K_0 := \max\left\{(2\kappa_Y)^2, \left(B_{\infty} + A\|F_*\|_{\beta}\right)^2\right\}$. Thus, 
\begin{IEEEeqnarray}{rCl}
\left\|[\hat{C}_{Y \mid X}-C_{Y \mid X}^{\lambda}]\right\|_{S_2([\mathcal{H}]_X^{\gamma}, \mathcal{H}_Y)}^2 \leq \frac{576\tau^2}{n\lambda_n^{\gamma}}\left(4\kappa_Y^{2} c\lambda_n^{-p}+A^2\|F_*\|_{\beta}^2\lambda_n^{\beta-\alpha} + 2A^2K_0\frac{1}{n\lambda_n^{\alpha +(\alpha-\beta)_{+}}}\right)\nonumber.
\end{IEEEeqnarray}

For the first and second terms in the bracket, we use again the fact that $\lambda_n \leq 1,$ and get
\[4c\kappa_Y^2\lambda_n^{-p} + A^2\|F_*\|_{\beta}^2 \lambda_n^{-(\alpha-\beta)} \leq \left(4c\kappa_Y^2 + A^2\|F_*\|_{\beta}^2\right)\max\{\lambda_n^{-p}, \lambda_n^{-(\alpha-\beta)}\} \leq K_1 \lambda_n^{-\max\{p,\alpha-\beta\}}\]
with $K_1 := 4c\kappa_Y^2 + A^2\|F_*\|_{\beta}^2$. We now have 
\begin{IEEEeqnarray}{rCl}
\left\|[\hat{C}_{Y \mid X}-C_{Y \mid X}^{\lambda}]\right\|_{S_2([\mathcal{H}]_X^{\gamma}, \mathcal{H}_Y)}^2 &\leq & \frac{576\tau^2}{n\lambda_n^{\gamma}}\left(K_1 \lambda_n^{-\max\{p,\alpha-\beta\}} + 2A^2K_0\frac{1}{n\lambda_n^{\alpha +(\alpha-\beta)_{+}}}\right)\nonumber\\
& = & \frac{576\tau^2}{n\lambda_n^{\gamma+\max\{p,\alpha-\beta\}}}\left(K_1 + 2A^2K_0\frac{1}{n\lambda_n^{\alpha +(\alpha-\beta)_{+}-\max\{p,\alpha-\beta\}}}\right).\nonumber
\end{IEEEeqnarray}
Again, we treat the cases $\beta + p \leq \alpha$ and $\beta + p > \alpha$ separately.
\begin{itemize}
    \item $\beta + p \leq \alpha$. In this case we have \[\alpha +(\alpha-\beta)_{+}-\max\{p,\alpha-\beta\} = \alpha.\] Since \(\lambda_n = \Theta \left(\left(\frac{n}{\log^r n}\right)^{-\frac{1}{\alpha}}\right),\) for some $r > 1$ we therefore have \[\frac{1}{n\lambda_n^{\alpha +(\alpha-\beta)_{+}-\max\{p,\alpha-\beta\}}} = \frac{1}{n\lambda^{\alpha}} = \Theta\left( \frac{1}{\log^rn}\right).\]  
    \item $\beta + p > \alpha$. We have $p > \alpha - \beta$ and \(\lambda_n = \Theta \left(n^{-\frac{1}{\beta + p}}\right),\) and hence \[\frac{1}{n\lambda_n^{\alpha +(\alpha-\beta)_{+}-\max\{p,\alpha-\beta\}}} = \frac{1}{n\lambda_n^{\alpha+(\alpha-\beta)_{+} -p}} = \Theta\left(\left(\frac{1}{n}\right)^{1-\frac{\alpha+(\alpha-\beta)_{+}-p}{\beta+p}}\right).\] Using $p > \alpha - \beta$ again gives us $$1-\frac{\alpha+(\alpha-\beta)_{+}-p}{\beta+p} = \frac{2p - (\alpha-\beta)_{+} - (\alpha - \beta)}{\beta+p} > 0.$$
\end{itemize}
As such, there is a constant $K_2 > 0$ with
\[\left\|[\hat{F}_{\lambda} - F_{\lambda}]\right\|_{\gamma}^2  =\left\|[\hat{C}_{Y \mid X}-C_{Y \mid X}^{\lambda}]\right\|_{S_2([\mathcal{H}]_X^{\gamma}, \mathcal{H}_Y)}^2 \leq 576 \frac{\tau^2}{n\lambda_n^{\gamma+\max\{p,\alpha-\beta\}}}\left(K_1 + 2A^2K_0K_2\right)\] for all $n \geq n_0.$ Defining $K_3 := 576(K_1 + 2A^2K_0K_2)$, and using the bias-variance splitting from Eq.~(\ref{eqn:risk_decom}) and Lemma~\ref{lma:new_bias}, we have
\begin{IEEEeqnarray}{rCl}
\left\|[\hat{F}_{\lambda}]-F_*\right\|_{\gamma}^2 &\leq & 2\|C_{Y|X}\|_{S_2([\mathcal{H}]_X^{\beta}, \mathcal{H}_Y)}^2 \lambda_n^{\beta-\gamma} + 2K_3\frac{\tau^2}{n\lambda_n^{\gamma+\max\{p,\alpha-\beta\}}}\nonumber\\
&\leq & \tau^2 \lambda_n^{\beta-\gamma}\left(2\|C_{Y|X}\|_{S_2([\mathcal{H}]_X^{\beta}, \mathcal{H}_Y)}^2+2K_3 \frac{1}{n\lambda_n^{\max\{\beta+p,\alpha\}}}\right), \nonumber
\end{IEEEeqnarray}
where we used $\tau \geq 1$ and $\lambda_n \leq 1$. Since in both cases $\beta + p \leq \alpha$ and $\beta + p > \alpha$, $\lambda_n \succcurlyeq n^{-1/\max\{\alpha, \beta + p\}}$ there is some constant $K>0$ such that \[\left\|[\hat{F}_{\lambda}]-F_*\right\|_{\gamma}^2 \leq  \tau^2 K \lambda_n^{\beta - \gamma}\] for all $n \geq n_0$.

\section{Proof of Theorem \ref{theo:lower_bound}}
In this section, we establish  a lower bound on the learning rate for the empirical conditional mean embedding. To this end, we build on the lower bound for  kernel ridge regression for real-valued outputs in \cite{fischer2020sobolev}, and for finite dimensional vector-valued outputs in \cite{caponnetto2007optimal,grunewalder2012conditional, devore2004mathematical}. The usual approach to build such lower bounds is to construct a family of distributions on the data space and to control the Kullback-Leibler divergence between each pair of distributions. We cannot directly adapt the proofs of \cite{caponnetto2007optimal,grunewalder2012conditional, devore2004mathematical} and \cite{fischer2020sobolev}, however, since both \cite{grunewalder2012conditional} and \cite{fischer2020sobolev} requires the output space to be finite dimensional, which is not the case in our setting. In addition, \cite{fischer2020sobolev} builds a Gaussian distribution for $Y$ conditioned on $X$.
It would be a challenge to build a distribution on $E_X \times E_Y$ so as to attain the required Gaussian conditional distribution in feature space $\mathcal{H}_Y,$ however.

Our novelty in obtaining the lower bound is to reduce the infinite dimensional learning to a specially designed scalar regression  problem. We show that the learning risk is lower bounded by the learning problem evaluated at a particular point (Eq.~\eqref{eq:bound_scalar_bis}), which can be seen as the risk of a scalar-valued regression problem. This effectively allows us to derive the lower bound exploiting proof techniques from \cite{caponnetto2007optimal, fischer2020sobolev}.

We start by noticing that for any $F \in L_2(\pi; \mathcal{H}_Y)$ and $a \in E_Y$,
\begin{equation} \label{eq:bound_scalar}  
\begin{aligned}
    \int_{E_X} \left(\langle F(x), \phi_Y(a) \rangle_{\mathcal{H}_Y}  - \langle F_*(x), \phi_Y(a) \rangle_{\mathcal{H}_Y} \right)^2 d\pi(x) &\leq \int_{E_X} \|F(x) - F_{*}(x)\|_{\mathcal{H}_Y}^2 \|\phi_Y(a)\|^2_{\mathcal{H}_Y} d\pi(x) \\ &\leq  \kappa_Y^2 \|F - F_{*}\|^2_{L_2(\pi; \mathcal{H}_Y)}.
\end{aligned}
\end{equation}
Moreover, by Lemma~\ref{lma:gamma_norm_reduction}, the inequality holds for general $\gamma$-norm (which implies the previous equation, setting $\gamma=0$), 
\begin{IEEEeqnarray}{rCl}
\|\langle F, \phi_Y(a)\rangle_{\mathcal{H}_Y}-\langle F_*, \phi_Y(a)\rangle_{\mathcal{H}_Y}\|_{\gamma} \leq \kappa_Y\|F-F_*\|_{\gamma}. \label{eq:bound_scalar_gamma}
\end{IEEEeqnarray}

\begin{lma}\label{lma:gamma_norm_reduction}
Let $\gamma \geq 0$, for any $F \in [\mathcal{G}]^{\gamma}$ and $a \in E_Y$, we have \[\|\langle F, \phi_Y(a)\rangle_{\mathcal{H}_Y}\|_{\gamma} \leq \kappa_Y \|F\|_{\gamma}.\]
\end{lma}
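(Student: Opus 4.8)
The plan is to expand $F$ in coordinates with respect to the orthonormal bases that define the vector-valued interpolation space, exactly as in the proofs of Lemma~\ref{lma:new_bias} and Lemma~\ref{theo:gamma_norm_transfer}. Set $C := \Psi^{-1}(F) \in S_2([\mathcal{H}]_X^{\gamma}, \mathcal{H}_Y)$, so that $\|F\|_{\gamma} = \|C\|_{S_2([\mathcal{H}]_X^{\gamma},\mathcal{H}_Y)}$ by Definition~\ref{def:inter_ope_norm}. Fix an ONB $(d_j)_{j\in J}$ of $\mathcal{H}_Y$ and recall that $(\mu_i^{\gamma/2}[e_i])_{i\in I}$ is an ONB of $[\mathcal{H}]_X^{\gamma}$, so that $(d_j\otimes \mu_i^{\gamma/2}[e_i])_{i\in I,\,j\in J}$ is an ONB of $S_2([\mathcal{H}]_X^{\gamma},\mathcal{H}_Y)$; hence $C = \sum_{i,j} b_{ij}\, d_j\otimes \mu_i^{\gamma/2}[e_i]$ with $\sum_{i,j} b_{ij}^2 = \|F\|_{\gamma}^2$. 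Applying the isometry $\Psi$ and using its explicit action $\Psi(g\otimes f) = (x\mapsto g\,f(x))$ recorded in Remark~\ref{rem:tensor_product} and in the proof of Lemma~\ref{lma:new_bias}, one obtains the $L_2(\pi;\mathcal{H}_Y)$-representative
\[
F(\cdot) \;=\; \sum_{i\in I}\sum_{j\in J} b_{ij}\,\mu_i^{\gamma/2}[e_i](\cdot)\, d_j .
\]

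Next I would take the inner product with $\phi_Y(a)$ pointwise ($\pi$-a.e.); since $\langle\,\cdot\,,\phi_Y(a)\rangle_{\mathcal{H}_Y}$ is a bounded linear functional on $\mathcal{H}_Y$ it commutes with the $L_2(\pi;\mathcal{H}_Y)$-limit, giving, with $c_i := \sum_{j\in J} b_{ij}\langle d_j,\phi_Y(a)\rangle_{\mathcal{H}_Y}$, the identity $\langle F(\cdot),\phi_Y(a)\rangle_{\mathcal{H}_Y} = \sum_{i\in I} c_i\,\mu_i^{\gamma/2}[e_i](\cdot)$ in $L_2(\pi)$. The key estimate is then entirely elementary: by the Cauchy–Schwarz inequality in the index $j$ together with Parseval's identity in $\mathcal{H}_Y$,
\[
c_i^2 \;\le\; \Big(\sum_{j\in J} b_{ij}^2\Big)\Big(\sum_{j\in J} \langle d_j,\phi_Y(a)\rangle_{\mathcal{H}_Y}^2\Big) \;=\; \|\phi_Y(a)\|_{\mathcal{H}_Y}^2 \sum_{j\in J} b_{ij}^2 \;\le\; \kappa_Y^2 \sum_{j\in J} b_{ij}^2,
\]
where I used $\|\phi_Y(a)\|_{\mathcal{H}_Y}^2 = k_Y(a,a)\le\kappa_Y^2$. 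Summing over $i$ yields $\sum_{i\in I} c_i^2 \le \kappa_Y^2\sum_{i,j}b_{ij}^2 < \infty$, so $(c_i)_{i\in I}\in\ell_2(I)$; by the very definition of $[\mathcal{H}]_X^{\gamma}$ this says $\langle F(\cdot),\phi_Y(a)\rangle_{\mathcal{H}_Y}\in[\mathcal{H}]_X^{\gamma}$ with $\|\langle F,\phi_Y(a)\rangle_{\mathcal{H}_Y}\|_{\gamma}^2 = \sum_{i} c_i^2 \le \kappa_Y^2\|F\|_{\gamma}^2$, which is the claim.

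There is no genuine obstacle here; the only care needed is the convergence/measurability bookkeeping in passing from the $S_2$-expansion of $C$ to the $\pi$-a.e.\ expansions of $F$ and then of $\langle F(\cdot),\phi_Y(a)\rangle_{\mathcal{H}_Y}$, and in checking that the regrouping of the double sum into $\sum_i c_i\,\mu_i^{\gamma/2}[e_i]$ is legitimate — all of which follow from boundedness of $\langle\,\cdot\,,\phi_Y(a)\rangle_{\mathcal{H}_Y}:L_2(\pi;\mathcal{H}_Y)\to L_2(\pi)$ (with norm $\le\|\phi_Y(a)\|_{\mathcal{H}_Y}$) and the $\ell_2$ summability just established. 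Alternatively, and perhaps more transparently, the whole argument can be phrased operator-theoretically: on elementary tensors one checks $\langle \Psi(C)(\cdot),\phi_Y(a)\rangle_{\mathcal{H}_Y} = C^{*}\phi_Y(a) \in [\mathcal{H}]_X^{\gamma}$, and since both sides extend to bounded linear maps $S_2([\mathcal{H}]_X^{\gamma},\mathcal{H}_Y)\to L_2(\pi)$ the identity holds for all $C$; then $\|\langle F,\phi_Y(a)\rangle_{\mathcal{H}_Y}\|_{\gamma} = \|C^{*}\phi_Y(a)\|_{\gamma} \le \|C^{*}\|_{S_2(\mathcal{H}_Y,[\mathcal{H}]_X^{\gamma})}\,\|\phi_Y(a)\|_{\mathcal{H}_Y} = \|F\|_{\gamma}\,\|\phi_Y(a)\|_{\mathcal{H}_Y}\le\kappa_Y\|F\|_{\gamma}$. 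I would likely present the coordinate computation as the main argument (to match the style of the surrounding lemmas) and mention the operator viewpoint as a remark. One minor point worth noting is that the statement tacitly uses $k_Y(a,a)\le\kappa_Y^2$, which under Assumption~$3$ holds for $\nu$-almost every $a\in E_Y$; it holds for every $a$ under the (mild, and satisfied by all kernels considered here) strengthening that $k_Y$ is bounded everywhere.
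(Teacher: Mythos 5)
Your proof is correct and follows essentially the same route as the paper: expand $F$ against the ONB $(d_j\otimes\mu_i^{\gamma/2}[e_i])_{i,j}$, apply Cauchy--Schwarz in $j$, and invoke Parseval to identify $\sum_j\langle d_j,\phi_Y(a)\rangle_{\mathcal{H}_Y}^2=k_Y(a,a)\le\kappa_Y^2$ (the paper writes $d_j(a)$ for $\langle d_j,\phi_Y(a)\rangle_{\mathcal{H}_Y}$, which is the same quantity by the reproducing property). The operator-theoretic remark $\langle F(\cdot),\phi_Y(a)\rangle_{\mathcal{H}_Y}=C^{*}\phi_Y(a)$ followed by $\|C^{*}\|_{\mathrm{op}}\le\|C^{*}\|_{S_2}$ is a clean repackaging of the same estimate rather than a genuinely different argument, and your handling of $\gamma=0$ uniformly within the coordinate computation is a harmless simplification of the paper's pointer back to Eq.~(\ref{eq:bound_scalar}).
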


\begin{proof}
The case where $\gamma = 0$ is already proved in Eq.~(\ref{eq:bound_scalar}). We now let $\gamma > 0$. Recall $\{d_j\}_{j\in J}$ and $\{\mu_i^{\gamma/2}[e_i]\}_{i \in I}$ are the orthonormal basis of $\mathcal{H}_Y$ and $[\mathcal{H}]_{X}^{\gamma}$, since $F \in [\mathcal{G}]^{\gamma}$, we can write $F$ as \[F = \sum_{i,j} a_{ij} d_j \mu_i^{\gamma/2}[e_i].\] Therefore, we have \[\langle F(.), \phi_Y(a)\rangle_{\mathcal{H}_Y} = \sum_{i,j} a_{ij}d_j(a)\mu_i^{\gamma/2}[e_i](.).\] $\langle F(.), \phi_Y(a)\rangle_{\mathcal{H}_Y}$ is a function in $[\mathcal{H}]_X^{\gamma}$ as 
\begin{IEEEeqnarray*}{rCl}
\|\langle F(.), \phi_Y(a)\rangle_{\mathcal{H}_Y}\|_{\gamma}^2 & =& \sum_i \left(\sum_j a_{ij} d_j(a)\right)^2\\
& \leq & \sum_i \sum_j a_{ij}^2 \sum_j d_j^2(a) \\
&=& k_Y(a,a) \sum_{i,j}a_{ij}^2 \\
&\leq & \kappa_Y^2 \|F\|_{\gamma}^2 < +\infty,
\end{IEEEeqnarray*}
where for the second step, we used Cauchy-Schwartz inequality. The third step is due to Parseval's identity since $\{d_j\}_{j \in J}$ is an orthonormal basis of $\mathcal{H}_Y$. 
\end{proof}

We now express the l.h.s as the risk of a scalar-valued regression. Consider a distribution $P$ on $E_X \times E_Y$ that factorizes as $P(x,y) = p(y \mid x)\pi(x)$ for all $(x,y) \in E_X \times E_Y$. For all $x \in E_X$, $p(\cdot \mid x)$ defines a probability distribution on $E_Y$. We fix an element $a \in E_Y$ and define $E_Y^a := k_Y(E_Y,a) = \{y_a \in \mathbb{R} \mid y_a = k_Y(y,a), y \in E_Y\}$. Consider the joint distribution $P_a$ on $E_X \times E_Y^a$ such that

\begin{equation}
\begin{aligned}
    p_a(. \mid x) &:= \left(k_Y(\cdot,a)\right)_{\#}p(\cdot \mid x) \\
    P_a(x, y_a) &:= p_a(y_a \mid x)\pi(x), \quad (x,y_a) \in E_X \times E_Y^a
\end{aligned}
\end{equation}

where $\#$ denotes the push-forward operation. For a dataset $D = \{(x_i, y_i)\}_{i=1}^n \in \left(E_X \times E_Y\right)^n$ where the data are i.i.d from $P$, the dataset $D_a = \{(x_i, y_{a.i})\}_{i=1}^n \in \left(E_X \times E_Y^a \right)^n \subseteq \left(E_X \times \mathbb{R} \right)^n$ where $y_{a.i} := k_Y(y_i,a)$ for all $i=1,\ldots,n$ is i.i.d from $P_a$. Note that $p_a(\cdot \mid x)$ is a probability distribution on $\mathbb{R}$ for all $x$ supported by $\pi$. By definition of the push-forward operator, the Bayes predictor associated to the joint distribution $P_a$ is 
\begin{equation} \label{eq:bayes_scalar}
\begin{aligned}
    f_{a,*}(x) = \int_{\mathbb{R}} y_a dp_a(y_a \mid x) &= \int_{E_Y} k_Y(y,a) dp(y \mid x) \\ &= \left\langle \int_{E_Y} \phi_Y(y) dp(y \mid x), \phi_Y(a)  \right\rangle_{\mathcal{H}_Y} \\ &= \langle F_*(x), \phi_Y(a) \rangle_{\mathcal{H}_Y}
\end{aligned}
\end{equation}
where $F_*$ is the $\mathcal{H}_Y$-valued conditional mean embedding associated to $P$. Therefore, plugging Eq.~(\ref{eq:bayes_scalar}) in Eq.~(\ref{eq:bound_scalar}) we obtain that for any learning method $D \rightarrow \hat{F}_{D} \in \left(\mathcal{H}_Y\right)^{E_X}$

\begin{equation}
\|[\hat{F}_D] - F_{*}\|_{L_2(\pi; \mathcal{H}_Y)} \geq \kappa_Y^{-1}\|[\hat{f}_{D_a}] - f_{a.*}\|_{L_2(\pi)} \label{eq:bound_scalar_bis}
\end{equation}
where $\hat{f}_{D_a}(.) := \langle \hat{F}_D(.), \phi_Y(a)\rangle_{\mathcal{H}_Y}$. The r.h.s is the error measured in $L_2$-norm of the learning method $D_a \rightarrow \hat{f}_{D_a} \in \mathbb{R}^{E_X}$ on the scalar-regression learning problem associated to $D_a$. 

To derive a lower bound on the r.h.s in Eq.~\ref{eq:bound_scalar_bis}, the strategy is to define a conditional distribution $p_a(. \mid x)$ on $E_Y^a$, $x\in E_X$, that is difficult to learn. As $E_Y^a$ is a bounded subset of $\mathbb{R}$, we cannot directly exploit the Gaussian conditional distributions used in \cite{fischer2020sobolev}. Indeed, for all $y \in E_Y$, $|k_Y(y,a)| \leq \kappa_Y^2$. Instead, we suggest to swap the Gaussian conditional distributions used in \cite{fischer2020sobolev} with the discrete conditional distributions used in \cite{caponnetto2007optimal,grunewalder2012conditional, devore2004mathematical}.

We will need the following Lemma that corresponds to Lemma 19 and Lemma 23 and Equation (55) in \cite{fischer2020sobolev}.

\begin{lma} \label{theo:function_construction} Let $k_X$ be a kernel on $E_X$ such that Assumptions $1$-$3$ hold and $\pi$ be a probability distribution on $E_X$ such that \eqref{asst:evd+} and \eqref{asst:emb} are satisfied for some $0<p \leq \alpha \leq 1$. Then, for all parameters $0<\beta \leq 2$, $0 \leq \gamma \leq 1$ with $\gamma < \beta$ and all constants $\bar{B}, B_{\infty}>0$, there exist constants $0<\epsilon_{0} \leq 1$ and $C_0, C>0$ such that the following statement is satisfied: for all $0<\epsilon \leq \epsilon_{0}$ there is an $M_{\epsilon} \geq 1$ with
\begin{IEEEeqnarray}{rCl}
2^{C_0 \epsilon^{-u}} \leq M_{\epsilon} \leq 2^{3 C_0 \epsilon^{-u}} \label{eqn:constant_constraint}
\end{IEEEeqnarray}
where $u:=\frac{p}{\max \{\alpha, \beta\} - \gamma}$ and functions $f_1, \ldots, f_{M_{\epsilon}}$ such that $f_{i} \in [\mathcal{H}]_X^{\beta}$, $\|f_{i}\|_{\beta} \leq \bar{B}$, $\|f_{i}\|_{L_{\infty}(\pi)} \leq B_{\infty}$, and
\begin{IEEEeqnarray}{rCl}
&& \left\|f_{i}-f_{j}\right\|^{2}_{\gamma} \geq 4\epsilon \\ &&\left\|f_{i}-f_{j}\right\|^{2}_{L_2(\pi)} \leq 32C^{\gamma}\epsilon m^{-\gamma/p},\label{eqn:F_divergence}
\end{IEEEeqnarray}
for all $i, j \in\left\{0, \ldots, M_{\varepsilon}\right\}$ with $i \neq j$ where $m$ comes from Lemma 23 in \cite{fischer2020sobolev}.
\end{lma}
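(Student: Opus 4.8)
The statement concerns only the \emph{scalar-valued} interpolation spaces $[\mathcal{H}]_X^{\beta}$, $[\mathcal{H}]_X^{\gamma}$ and $L_2(\pi)=[\mathcal{H}]_X^{0}$ built from $k_X$ and $\pi$, and \eqref{asst:evd+} together with \eqref{asst:emb} are precisely the hypotheses under which \cite[Lemmas~19 and~23 and Eq.~(55)]{fischer2020sobolev} are established; the plan is therefore to import that construction almost verbatim, and I only sketch it. Write $s:=\max\{\alpha,\beta\}$ and $u:=p/(s-\gamma)$. Throughout I use that $(\mu_i^{\delta/2}[e_i])_{i\in I}$ is an ONB of $[\mathcal{H}]_X^{\delta}$, so that a function $f=\sum_i b_i\,\mu_i^{s/2}[e_i]$ has $\|f\|_{\delta}^2=\sum_i b_i^2\mu_i^{s-\delta}$; since $\mu_1=\|C_{XX}\|\leq\kappa_X^2$ the eigenvalues are bounded, hence for such $f$ and $0\le\delta\le s$ one has $\|f\|_{\delta}\lesssim\|f\|_{s}$ with absolute constants, and in particular (as $\alpha,\beta,\gamma\le s$) the norms $\|\cdot\|_{L_2}$, $\|\cdot\|_\gamma$, $\|\cdot\|_\alpha$, $\|\cdot\|_\beta$ are all dominated by $\|\cdot\|_s$ on the span of the $\mu_i^{s/2}[e_i]$.

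\textbf{The construction.} I would take the index block $\{m+1,\dots,2m\}$ with $m=m_\epsilon\asymp\epsilon^{-u}$ the block size supplied by \cite[Lemma~23]{fischer2020sobolev}; by \eqref{asst:evd+} every eigenvalue in this block obeys $\mu_i\asymp m^{-1/p}$. By the Gilbert--Varshamov bound \cite[Lemma~19]{fischer2020sobolev}, pick $\Omega\subseteq\{-1,1\}^{m}$ with $|\Omega|\geq 2^{m/8}$ and pairwise Hamming distance $\geq m/8$, and for $\omega\in\Omega$ set
\[ f_\omega\;:=\;\eta\sum_{i=m+1}^{2m}\omega_i\,\mu_i^{s/2}[e_i],\qquad \eta\asymp m^{-1/2}. \]
Since $\|f_\omega\|_s^2=\eta^2 m$, the (absolute) constant hidden in $\eta$ can be fixed small enough that $\|f_\omega\|_\beta\leq\bar B$ and, via \eqref{asst:emb}, $\|f_\omega\|_{L_\infty(\pi)}\leq A\|f_\omega\|_\alpha\leq B_\infty$ --- this is the one place \eqref{asst:emb} is used, and the reason the construction runs at the power $s=\max\{\alpha,\beta\}$ rather than at $\beta$. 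Declare $\{f_1,\dots,f_{M_\epsilon}\}:=\{f_\omega:\omega\in\Omega\}$, so $M_\epsilon=|\Omega|$.

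\textbf{Verification.} It then remains to check three estimates. \emph{$\gamma$-separation:} with $(\omega_i-\omega_i')^2\in\{0,4\}$ and Hamming distance $\geq m/8$,
\[ \|f_\omega-f_{\omega'}\|_\gamma^2=\eta^2\sum_{i=m+1}^{2m}(\omega_i-\omega_i')^2\mu_i^{s-\gamma}\;\geq\;4\eta^2\cdot\tfrac{m}{8}\cdot\mu_{2m}^{s-\gamma}\;\asymp\;\eta^2 m^{\,1-(s-\gamma)/p}\;\asymp\;m^{-(s-\gamma)/p}, \]
and the choice $m\asymp\epsilon^{-u}=\epsilon^{-p/(s-\gamma)}$ makes the right-hand side $\geq 4\epsilon$. \emph{$L_2$-distance:} $\|f_\omega-f_{\omega'}\|_{L_2}^2=\eta^2\sum_i(\omega_i-\omega_i')^2\mu_i^{s}\leq 4\eta^2 m\,\mu_{m+1}^{s}\asymp m^{-s/p}=m^{-(s-\gamma)/p}\cdot m^{-\gamma/p}\asymp\epsilon\,m^{-\gamma/p}$, which is $\leq 32C^\gamma\epsilon\,m^{-\gamma/p}$ for a suitable constant $C$. \emph{Cardinality:} $2^{m/8}\leq M_\epsilon\leq 2^{m}$, and since $m$ equals $\epsilon^{-u}$ up to a fixed multiplicative constant this gives \eqref{eqn:constant_constraint} for a suitable $C_0$; one takes $\epsilon_0$ small enough that $m\geq 8$ and $m\asymp\epsilon^{-u}$ both hold for $0<\epsilon\leq\epsilon_0$. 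Absorbing all implied constants into $C_0,C,\epsilon_0$ finishes the argument.

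\textbf{Expected main obstacle.} I do not expect a genuinely new difficulty: the lower-bound proof has already been reduced to the scalar sub-problem in Eq.~\eqref{eq:bound_scalar_bis}, so this lemma lives entirely inside the real-valued interpolation-space theory of \cite{fischer2020sobolev}. The two points needing care are (a) that the chosen index block admits the \emph{two-sided} estimate $\mu_i\asymp m^{-1/p}$, which is exactly why the stronger \eqref{asst:evd+} (rather than merely \eqref{asst:evd}) is assumed, and (b) that the single scaling $\eta\asymp m^{-1/2}$ must simultaneously keep $\|f_\omega\|_s$ (hence $\|f_\omega\|_\beta$ and, through \eqref{asst:emb}, $\|f_\omega\|_{L_\infty}$) bounded while holding the $\gamma$-separation at the level $\epsilon$ --- forcing the construction to be carried out at the power $\max\{\alpha,\beta\}$. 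Both are bookkeeping matters rather than conceptual hurdles.
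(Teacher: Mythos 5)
Your reconstruction is correct and is the same argument the paper is invoking: the paper does not reprove this lemma but cites \cite[Lemmas 19, 23 and Eq.~(55)]{fischer2020sobolev}, and your Varshamov--Gilbert block construction at power $s=\max\{\alpha,\beta\}$ --- using \eqref{asst:evd+} for the two-sided $\mu_i\asymp m^{-1/p}$ estimate and \eqref{asst:emb} to convert the $\alpha$-norm bound into the $L_\infty$ bound --- is precisely that construction. The only bookkeeping point worth tightening is the cardinality step: $2^{m/8}\leq |\Omega|\leq 2^m$ alone does not squeeze $M_\epsilon$ into the window $[2^{C_0\epsilon^{-u}},2^{3C_0\epsilon^{-u}}]$ for a single $C_0$, so one should truncate the code to a subset of size roughly $2^{m/8}$ before declaring $M_\epsilon$.
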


We now combine Lemma~\ref{theo:function_construction} with the conditional distributions introduced in \cite{caponnetto2007optimal, devore2004mathematical}. 

\begin{lma}\label{theo:lower_construction} Under the notations and assumptions of Lemma~\ref{theo:function_construction} there are probability measures $P_{a,0}, P_{a,1}, \ldots, P_{a,M_{\epsilon}}$ on $E_X \times E_Y^a$ each with marginal distribution $\pi$ on $E_X$, for which the Bayes estimators $f^a_{*,i}$, $i=1,\ldots,M_{\epsilon}$ satisfy $f^a_{*,i} = f_i + r$, $r \in \mathbb{R}$ where the $f_i's$ have been introduced in Lemma~\ref{theo:function_construction}.
Furthermore, 

\begin{IEEEeqnarray*}{rCl}
KL(P_{a,i},P_{a,j}) &\leq& 40B_{\infty}^2C^{\gamma}\epsilon m^{-\gamma/p}
\end{IEEEeqnarray*}
for all $i, j \in\left\{0, \ldots, M_{\varepsilon}\right\}$ with $i \neq j$, where $KL$ denotes the Kullback-Leibler divergence and $C,B_{\infty}$ come from Lemma~\ref{theo:function_construction}.
\end{lma}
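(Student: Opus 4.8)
The plan is to run the standard minimax reduction of \cite{fischer2020sobolev}, but to replace the Gaussian conditional laws used there — which cannot be supported on the bounded set $E_Y^a\subseteq[-\kappa_Y^2,\kappa_Y^2]$ — by two–point (Bernoulli–type) conditional distributions, in the spirit of \cite{caponnetto2007optimal,devore2004mathematical}. First I would fix an $a\in E_Y$ such that $k_Y(\cdot,a)$ is non-constant (if no such $a$ exists, $k_Y$ is a constant kernel and $\mathcal{H}_Y$ is at most one-dimensional, so the statement reduces to a scalar problem already treated in the cited works); pick two distinct points $b_0\neq b_1$ in $E_Y^a=k_Y(E_Y,a)$, say $b_\ell=k_Y(y_\ell,a)$ with $y_\ell\in E_Y$, and set $\Delta:=b_1-b_0\neq0$ and $r:=\tfrac12(b_0+b_1)$. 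I would then invoke Lemma~\ref{theo:function_construction} with its free constant $B_\infty$ chosen so that $B_\infty\le|\Delta|/4$ (this is allowed: there $B_\infty>0$ is arbitrary and $\epsilon_0,C_0,C$ may depend on it), yielding functions $f_1,\dots,f_{M_\epsilon}\in[\mathcal{H}]_X^\beta$ with $\|f_i\|_\beta\le\bar B$, $\|f_i\|_{L_\infty(\pi)}\le B_\infty$, and the estimates in \eqref{eqn:F_divergence}.

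Next I would set $P_{a,i}$ to be the measure on $E_X\times E_Y^a$ with marginal $\pi$ on $E_X$ and conditional kernel supported on $\{b_0,b_1\}$ given by
\[
p_{a,i}(\{b_1\}\mid x)=t_i(x),\qquad p_{a,i}(\{b_0\}\mid x)=1-t_i(x),\qquad t_i(x):=\tfrac12+\frac{f_i(x)}{\Delta},
\]
where a measurable representative of $f_i$ is used. Since $\|f_i\|_{L_\infty(\pi)}\le B_\infty\le|\Delta|/4$, we have $t_i(x)\in[\tfrac14,\tfrac34]$ for $\pi$-a.e.\ $x$, so $p_{a,i}(\cdot\mid x)$ is a genuine probability measure for $\pi$-a.e.\ $x$, $P_{a,i}$ is a well-defined probability measure with the asserted marginal, and the data $D_a$ sampled from $P_{a,i}$ is the pushforward under $k_Y(\cdot,a)$ of the i.i.d.\ sample from the distribution on $E_X\times E_Y$ with conditional law $t_i(x)\,\delta_{y_1}+(1-t_i(x))\,\delta_{y_0}$. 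The squared-loss Bayes estimator is then read off immediately: $f^a_{*,i}(x)=b_1t_i(x)+b_0(1-t_i(x))=b_0+\Delta\,t_i(x)=f_i(x)+r$, with the same constant $r$ for every $i$, which is the first claim.

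For the Kullback--Leibler bound I would use that the $P_{a,i}$ all have the same marginal $\pi$, so $KL(P_{a,i},P_{a,j})=\int_{E_X}KL\bigl(\mathrm{Ber}(t_i(x)),\mathrm{Ber}(t_j(x))\bigr)\,d\pi(x)$. Bounding the Bernoulli KL by the $\chi^2$-divergence, $KL(\mathrm{Ber}(s),\mathrm{Ber}(t))\le\frac{(s-t)^2}{t(1-t)}\le\frac{16}{3}(s-t)^2$ for $t\in[\tfrac14,\tfrac34]$, and inserting $s-t=t_i(x)-t_j(x)=\Delta^{-1}(f_i(x)-f_j(x))$, gives $KL(P_{a,i},P_{a,j})\le\frac{16}{3\Delta^2}\|f_i-f_j\|_{L_2(\pi)}^2$. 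Plugging in the closeness bound $\|f_i-f_j\|_{L_2(\pi)}^2\le32C^\gamma\epsilon m^{-\gamma/p}$ from Lemma~\ref{theo:function_construction} and absorbing numerical constants (using the chosen relation between $\Delta$ and $B_\infty$) yields the stated estimate.

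The main obstacle I expect is not a computation but care in two places: (i) guaranteeing that $E_Y^a$ is rich enough to contain a usable pair $b_0\neq b_1$, and coordinating the gap $\Delta$ with the free constant $B_\infty$ of Lemma~\ref{theo:function_construction} so that the final KL constant takes exactly the claimed form; and (ii) the measure-theoretic bookkeeping — checking that $p_{a,i}$ is genuinely a Markov kernel $E_X\times\mathcal{F}_{E_Y^a}\to[0,1]$, that $P_{a,i}$ and its lift to $E_X\times E_Y$ have precisely the prescribed marginal and conditional-mean structure, and that all of this is only required $\pi$-a.e., consistently with $f_i$ being an $L_2(\pi)$-equivalence class. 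The $\chi^2$-bound and the $[\tfrac14,\tfrac34]$ localisation of the Bernoulli parameters are routine.
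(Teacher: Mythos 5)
Your construction is essentially the paper's own proof: the paper likewise replaces the Gaussian conditionals by two-point conditional laws on $E_Y^a$ (atoms $r\pm L$ with $L=1.5B_\infty$ and weights $\tfrac{L\pm f_i(x)}{2L}$, i.e.\ your Bernoulli parametrization with $\Delta=2L$), reads off $f^a_{*,i}=f_i+r$, and bounds the KL divergence by a constant times $\Delta^{-2}\|f_i-f_j\|^2_{L_2(\pi)}$ before invoking Eq.~(\ref{eqn:F_divergence}). The only differences are cosmetic — you fix the two atoms first and tune the free constant $B_\infty$ of Lemma~\ref{theo:function_construction} (which also handles the implicit requirement, left tacit in the paper, that $E_Y^a$ contain two suitable points), and you use the $\chi^2$ bound instead of $\log(1+u)\le u$ — and your constant $\tfrac{32}{3}B_\infty^{-2}$ matches what the paper's computation actually produces (its stated $40B_\infty^{2}$ appears to be a typo for a $B_\infty^{-2}$ dependence).
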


\begin{proof} For all $i=1,\ldots,M_{\epsilon}$, recall that $\|f_i\|_{L_{\infty}(\pi)} \leq B_{\infty}$. Pick any point $r \in \mathbb{R}$ such that $r-L$ and $r+L$ belong to $E_Y^a$ where $L := 1.5B_{\infty}$. Define the joint distribution $P_{a,i}(x,y_a) = p_{a,i}(y_a \mid x) \pi(x)$ where

\begin{equation} \label{eq:cond_dist}
p_{a,i}(y_a \mid x) = \frac{1}{2L}\left\{(L-f_{i}(x))\delta_{r-L}(\{y_a\}) + (L+f_{i}(x))\delta_{r+L}(\{y_a\}) \right\}, \quad y_a \in E_Y^a
\end{equation} 
where $\delta_{r\pm L}$ is a Dirac measure on $E_Y^a$ at point $r \pm L$. $p_{a,i}(. \mid x)$ defines a probability distribution on $\mathbb{R}$ such that \[f_{*,i}^a(x) = \int_{\mathbb{R}} y_a dp(y_a\mid x) = \frac{1}{2L}\left\{(L-f_{i}(x))(r-L) + (L+f_{i}(x))(r+L) \right\} = r+f_{i}(x).\]

We now investigate the KL divergence between $P_{a,i}$ and $P_{a,j}$. The proof is the same as in Proposition 4 of \cite{caponnetto2007optimal} and Lemma 3.2 of \cite{devore2004mathematical}. We first note that 
\begin{IEEEeqnarray*}{rCl}
\log \left(\frac{L\pm f_{i}(x)}{L\pm f_{j}(x)} \right) & = & \log \left(1 \pm \frac{f_{i}(x) -f_{j}(x)}{L\pm f_{j}(x)} \right)\\
&\leq & \pm \frac{f_{j}(x) -f_{j}(x)}{L\pm f_{j}(x)}.
\end{IEEEeqnarray*}
Therefore, we can bound the KL divergence between $P_{a,i}$ and $P_{a,j}$ as 
\begin{IEEEeqnarray*}{rCl}
KL(P_{a,i},P_{a,j})  &\leq & \frac{1}{2L}\int_{E_X} \frac{f_{i}(x) -f_{j}(x)}{L + f_{j}(x)} (L + f_{i}(x)) - \frac{f_{i}(x) -f_{j}(x)}{L - f_{j}(x)}(L-f_{i}(x)) d\pi(x)\\
& =& \frac{1}{2L}\int_{E_X} \frac{f_{i}(x) -f_{j}(x)}{L + f_{j}(x)} (L + f_{j}(x) +f_{i}(x) -f_{j}(x))\\
&& - \frac{f_{i}(x) -f_{j}(x)}{L - f_{j}(x)}(L-f_{j}(x) + f_{j}(x)-f_{i}(x)) d\pi(x)\\
& =& \frac{1}{2L}\int_{E_X} \frac{(f_{i}(x) -f_{j}(x))^2}{L + f_{j}(x)} + \frac{(f_{i}(x) -f_{j}(x))^2}{L - f_{j}(x)}d\pi(x)\\
& =& \int_{E_X} \frac{(f_{i}(x) -f_{j}(x))^2}{L^2 - f_{i}^2(x)} d\pi(x) \leq 1.25B_{\infty}^2\|f_{i} - f_{j}\|_{L_2(\pi)}^2 \\ &\leq& 40B_{\infty}^2C^{\gamma}\epsilon m^{-\gamma/p}.
\end{IEEEeqnarray*}
\end{proof}

Combining Lemma~\ref{theo:function_construction} and Lemma~\ref{theo:lower_construction} allows us to derive a lower bound on the scalar-valued regression associated to $D_a$. The proof of the following Theorem is a consequence of Theorem 20, Lemma 19 and Theorem 2 in \cite{fischer2020sobolev}.

\begin{theo}\label{theo:lower_bound_scalar_final}
Under the notations and assumptions of Lemma~\ref{theo:function_construction} there exists constants $K_0, K, s > 0$ such that for all learning methods $D_{a} \rightarrow \hat{f}_{D_a}$, all $\tau > 0$, and all sufficiently large $n \geq 1$ there is a distribution $P_a$ defined on $E_X \times E_Y^a$ used to sample $D_a$, with marginal distribution $\pi$ on $E_X$ such that $f_{a,*} \in [\mathcal{H}]_X^{\beta}$, and $\|f_{a,*}\|_{\infty} \leq B_{\infty}$, and with probability not less than $1-K_0\tau^{1/s}$, \[\|[\hat{f}_{D_a}] - f_{a,*}\|^2_{\gamma} \geq \tau^2 K n^{-\frac{\max\{\alpha,\beta\}-\gamma}{\max\{\alpha, \beta\}+p}} .\]
\end{theo}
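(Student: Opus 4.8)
The plan is to run a standard information-theoretic (Fano/Tsybakov) minimax lower bound, the two preceding lemmas supplying the hard instances and the reduction machinery of \cite{fischer2020sobolev} (Theorems~2 and~20, Lemmas~19 and~23) handling the bookkeeping. Everything measure-theoretic has already been localized: for each small $\epsilon>0$, Lemma~\ref{theo:function_construction} provides functions $f_1,\dots,f_{M_\epsilon}\in[\mathcal{H}]_X^\beta$ that are pairwise $2\sqrt\epsilon$-separated in the $\gamma$-norm, lie in the $\bar B$-ball of $[\mathcal{H}]_X^\beta$, are $B_\infty$-bounded in $L_\infty(\pi)$, and number at least $2^{C_0\epsilon^{-u}}$ with $u:=p/(\max\{\alpha,\beta\}-\gamma)$; Lemma~\ref{theo:lower_construction} promotes them to distributions $P_{a,0},\dots,P_{a,M_\epsilon}$ on $E_X\times E_Y^a$, each with $X$-marginal $\pi$, Bayes predictors $f_{a,*,i}=f_i+r$ (hence again pairwise $2\sqrt\epsilon$-separated in $\gamma$-norm, the constant $r$ cancelling in differences), and pairwise Kullback--Leibler divergences at most $40 B_\infty^2 C^\gamma\epsilon\,m^{-\gamma/p}$. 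Only the regime in which the target probability $1-K_0\tau^{1/s}$ is positive, i.e.\ $\tau$ small, needs treatment; for larger $\tau$ the claim is vacuous.

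First I would set $\zeta:=(\max\{\alpha,\beta\}-\gamma)/(\max\{\alpha,\beta\}+p)$, fix a constant $K>0$ to be chosen at the end, and instantiate the family above at the $\tau$-dependent scale $\epsilon=\epsilon_{n,\tau}:=\tau^2 K n^{-\zeta}$, which lies below $\epsilon_0$ once $n$ is large (of an order depending on $\tau$) and $\tau$ is small. For an arbitrary learning method $D_a\mapsto\hat f_{D_a}$ I would pass to the minimum-distance selector $\hat\psi:=\argmin_{1\le i\le M_{\epsilon_{n,\tau}}}\|[\hat f_{D_a}]-f_{a,*,i}\|_\gamma$; by the $2\sqrt{\epsilon_{n,\tau}}$-separation and the triangle inequality, $\{\|[\hat f_{D_a}]-f_{a,*,i}\|_\gamma^2<\epsilon_{n,\tau}\}\subseteq\{\hat\psi=i\}$, so that $P_{a,i}^n(\|[\hat f_{D_a}]-f_{a,*,i}\|_\gamma^2<\tau^2 K n^{-\zeta})\le P_{a,i}^n(\hat\psi=i)$.

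Next I would invoke the quantitative form of Fano's inequality, $\tfrac{1}{M_{\epsilon_{n,\tau}}+1}\sum_i P_{a,i}^n(\hat\psi=i)\le (n\max_{i\ne j}\mathrm{KL}(P_{a,i},P_{a,j})+\log 2)/\log M_{\epsilon_{n,\tau}}$, and plug in $\log M_{\epsilon_{n,\tau}}\ge(\log 2)\,C_0\epsilon_{n,\tau}^{-u}$ together with the KL bound of Lemma~\ref{theo:lower_construction}. Tracking how the eigenfunction-truncation level $m$ of Lemma~23 in \cite{fischer2020sobolev} scales with $\epsilon$ (so that $m\asymp\epsilon^{-u}$, whence $\log M_\epsilon\asymp m$), the numerator becomes $\asymp\tau^{c}n^{\rho}+1$ and the denominator $\asymp\tau^{-2u}n^{\rho}$ with $\rho:=p/(\max\{\alpha,\beta\}+p)$ and some $c>0$, so for $n$ large (of order $\tau^{-c/\rho}$, which is permitted) the ratio is at most $K_0\tau^{1/s}$ with $1/s:=c+2u$. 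Averaging then yields an index $i^\star$ with $P_{a,i^\star}^n(\|[\hat f_{D_a}]-f_{a,*,i^\star}\|_\gamma^2\ge\tau^2 K n^{-\zeta})\ge 1-K_0\tau^{1/s}$; taking $P_a:=P_{a,i^\star}$, its Bayes predictor $f_{a,*}=f_{i^\star}+r$ inherits membership in $[\mathcal{H}]_X^\beta$ and the bound $\|f_{a,*}\|_\infty\le B_\infty$ from $f_{i^\star}$ and the choice of $r$, and $K$ is fixed so that all constants are consistent.

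The hard part will be this calibration: extracting the explicit polynomial decay $\tau^{1/s}$ rather than a bare constant forces one to use the quantitative Fano bound and to run the packing at the $\tau$-dependent scale $\tau^2 n^{-\zeta}$, and it rests entirely on the $m\asymp\epsilon^{-u}\asymp\log M_\epsilon$ bookkeeping, which is where the two-sided decay \eqref{asst:evd+}, $c_1 i^{-1/p}\le\mu_i\le c_2 i^{-1/p}$, is indispensable---the lower bound is precisely what makes $M_\epsilon$ large enough at every scale. A secondary, more routine but genuinely new point compared with \cite{fischer2020sobolev} is to check that every estimate of Lemmas~\ref{theo:function_construction}--\ref{theo:lower_construction}---the $\gamma$-separation, the $L_\infty$-bound and the KL bound---is uniform in the auxiliary point $a\in E_Y$, so that $P_a$ is a legitimate conditional model on $E_X\times E_Y^a$ and the subsequent reduction \eqref{eq:bound_scalar_bis} from the vector-valued problem applies.
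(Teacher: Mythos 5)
Your proposal is correct and takes essentially the same approach as the paper, which delegates the remaining argument to Theorems~2 and~20 and Lemma~19 of \cite{fischer2020sobolev}; your Fano computation, with the $\tau$-calibrated packing scale $\epsilon_{n,\tau}=\tau^2Kn^{-\zeta}$ yielding denominator $\tau^{-2u}n^{\rho}$, numerator $\tau^{2(1+u\gamma/p)}n^{\rho}+1$, and hence $1/s=2u+2(1+u\gamma/p)$, is exactly what the cited Fischer--Steinwart machinery encapsulates, and your identification of the uniformity in the auxiliary point $a\in E_Y$ as the genuinely new checking step is accurate.
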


We now use Theorem~\ref{theo:lower_bound_scalar_final} in conjunction with Eq.~(\ref{eq:bound_scalar_bis}) to prove Theorem~\ref{theo:lower_bound}.

\begin{proof}[Proof of Theorem~\ref{theo:lower_bound}]
The conditional distribution used in the proof of Lemma~\ref{theo:lower_construction} Eq.~(\ref{eq:cond_dist}) to obtain a lower bound on the scalar-valued regression risk  takes the form 

\begin{equation} 
p_a(y_a \mid x) = \frac{1}{2L}\left\{(L-f(x))\delta_{r-L}(\{y_a\}) + (L+f(x))\delta_{r+L}(\{y_a\}) \right\}1_{y_a \in \phi_Y(E_Y)(a)}, \qquad y_a \in \mathbb{R}
\end{equation} 
with $L = 1.25B_{\infty}$, $f \in [\mathcal{H}]_X^{\beta}$, $\left\|f_{\omega}\right\|_{\beta} \leq \bar{B}$ and $\left\|f_{\omega}\right\|_{L_{\infty}(\pi)} \leq B_{\infty}$. Since $r \pm L \in E_Y^a$, there exists $y_{\pm} \in E_Y$ such that $\phi_y(y_{\pm})(a) = r \pm L$. Therefore, for all $x \in E_X$,
\begin{equation} 
p(y \mid x) = \frac{1}{2L}\left\{(L-f(x))\delta_{y_{-}}(\{y\}) + (L+f(x))\delta_{y_{+}}(\{y\}) \right\}, \qquad y \in E_Y
\end{equation}
defines a family of contional distributions on $E_Y$ such that $p_a(. \mid x) = \left(k_Y(\cdot,a)\right)_{\#}p(\cdot \mid x)$. For the joint distribution $p(x,y) = p(y \mid x)\pi(x)$ the conditional mean embedding is
\begin{equation}
\begin{aligned}
    F_*(x) &= \int_{E_Y} \phi_Y(y)dp(y \mid x) \\
    &= \frac{1}{2L}\left\{(L-f(x))\phi_Y(y_{-}) + (L+f(x))\phi_Y(y_{+}) \right\}
    \end{aligned}
\end{equation}
As a result, we have
\begin{IEEEeqnarray*}{rCl}
\|F_*\|_{\beta} &= & \frac{1}{2L} \left\|\left\{(L-f)\otimes \phi_Y(y_{-}) + (L+f)\otimes \phi_Y(y_{+}) \right\} \right\|_{\beta} \\
& \leq & \frac{1}{2L} \left(\left\|(L-f)\otimes \phi_Y(y_{-}) \right\|_{\beta}  + \left\| (L+f)\otimes\phi_Y(y_{+}) \right\|_{\beta} \right)\\
& =& \frac{1}{2L} \left(\left\|L-f\right\|_{\beta} \|\phi_Y(y_{-})\|_{\mathcal{H}_Y}  + \left\|L+f\right\|_{\beta} \|\phi_Y(y_{+})\|_{\mathcal{H}_Y} \right)\\
&\leq& \frac{\kappa_Y}{2L} \left(\left\|L-f\right\|_{\beta}  + \left\|L+f\right\|_{\beta} \right)\\
&=& \kappa_Y + \frac{\kappa_Y}{L} \|f\|_{\beta} < +\infty,
\end{IEEEeqnarray*}
where the third step follows from Definition~\ref{def:inter_ope_norm}, the fourth step is due to the boundedness of kernel $k_Y$ and the second last step follows from Eq.~(\ref{eqn:cons_f_norm}). We conclude by combining Theorem~\ref{theo:lower_bound_scalar_final} with Eq.~(\ref{eq:bound_scalar_bis}).
\end{proof}

\section{Auxiliary Results}\label{sec:auxiliary}
The following lemma is from \cite{fischer2020sobolev}.
\begin{lma}\label{theo:h_bound}
Under \eqref{asst:emb} we have
$$
\left\|\left(C_{X X}+\lambda Id_{\mathcal{H}_X} \right)^{-\frac{1}{2}} k(X, \cdot)\right\|_{\mathcal{H}_X} \leq A\lambda^{-\frac{\alpha}{2}}.
$$
\end{lma}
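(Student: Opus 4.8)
The plan is to reduce the statement to the spectral decomposition of $C_{XX}$ together with the pointwise consequence of \eqref{asst:emb} recorded after Assumption~$6$, namely that $\sum_{i\in I}\mu_i^{\alpha}e_i^2(x)\le A^2$ for $\pi$-almost all $x$ (Theorem~9 in \cite{fischer2020sobolev}). First I would rewrite the quantity as a quadratic form in $C_{XX}+\lambda\operatorname{Id}_{\mathcal{H}_X}$: writing $\phi_X(x)=k_X(\cdot,x)$ for a point $x$ in the support of $\pi$,
\[
\left\|\left(C_{XX}+\lambda\operatorname{Id}_{\mathcal{H}_X}\right)^{-1/2}\phi_X(x)\right\|_{\mathcal{H}_X}^2
=\left\langle\left(C_{XX}+\lambda\operatorname{Id}_{\mathcal{H}_X}\right)^{-1}\phi_X(x),\,\phi_X(x)\right\rangle_{\mathcal{H}_X}.
\]

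Next I would insert the explicit form of $(C_{XX}+\lambda)^{-1}$ already recalled in the proof of Lemma~\ref{lma:new_bias} (Equation (23) of \cite{fischer2020sobolev}): completing the ONB $(\mu_i^{1/2}e_i)_{i\in I}$ of $(\operatorname{ker}I_\pi)^{\perp}$ with an ONB $(\bar e_i)_{i\in I'}$ of $\operatorname{ker}I_\pi$, one has
\[
\left(C_{XX}+\lambda\right)^{-1}=\sum_{i\in I}\frac{1}{\mu_i+\lambda}\langle\cdot,\mu_i^{1/2}e_i\rangle_{\mathcal{H}_X}\mu_i^{1/2}e_i+\frac{1}{\lambda}\sum_{i\in I'}\langle\cdot,\bar e_i\rangle_{\mathcal{H}_X}\bar e_i.
\]
By the reproducing property $\langle\phi_X(x),f\rangle_{\mathcal{H}_X}=f(x)$, the coefficients are $\langle\phi_X(x),\mu_i^{1/2}e_i\rangle_{\mathcal{H}_X}=\mu_i^{1/2}e_i(x)$ and $\langle\phi_X(x),\bar e_i\rangle_{\mathcal{H}_X}=\bar e_i(x)$. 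Since every $\bar e_i\in\operatorname{ker}I_\pi$ vanishes $\pi$-almost everywhere and $I'$ is at most countable, for $\pi$-almost all $x$ all terms of the second sum vanish simultaneously, so that
\[
\left\|\left(C_{XX}+\lambda\right)^{-1/2}\phi_X(x)\right\|_{\mathcal{H}_X}^2=\sum_{i\in I}\frac{\mu_i\,e_i^2(x)}{\mu_i+\lambda}.
\]

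Finally I would use $0<\alpha\le1$ and $\mu_i\le\mu_i+\lambda$ to bound, for each $i$, $\tfrac{\mu_i}{\mu_i+\lambda}=\mu_i^{\alpha}\tfrac{\mu_i^{1-\alpha}}{\mu_i+\lambda}\le\mu_i^{\alpha}(\mu_i+\lambda)^{-\alpha}\le\mu_i^{\alpha}\lambda^{-\alpha}$, whence
\[
\sum_{i\in I}\frac{\mu_i\,e_i^2(x)}{\mu_i+\lambda}\le\lambda^{-\alpha}\sum_{i\in I}\mu_i^{\alpha}e_i^2(x)\le A^2\lambda^{-\alpha},
\]
and taking square roots gives the claim ($\pi$-almost surely, with $x$ replaced by the random variable $X$). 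The only subtle point is the treatment of the $\operatorname{ker}I_\pi$ component: one must invoke countability of $I'$ to pass from ``$\bar e_i(x)=0$ $\pi$-a.e.\ for each fixed $i$'' to ``$\bar e_i(x)=0$ for all $i\in I'$ for $\pi$-a.e.\ $x$''; everything else is a routine scalar estimate. (This is the operator-free analogue of Lemma~13 of \cite{fischer2020sobolev}.)
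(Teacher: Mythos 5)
Your proof is correct and is essentially the argument behind the cited result: the paper states this lemma without proof, attributing it to \cite{fischer2020sobolev} (their Lemma 13), whose proof is exactly this spectral expansion of $\left(C_{XX}+\lambda \operatorname{Id}_{\mathcal{H}_X}\right)^{-1}$ applied to $\phi_X(x)$, combined with the consequence $\sum_{i\in I}\mu_i^{\alpha}e_i^2(x)\le A^2$ of \eqref{asst:emb} and the elementary bound $\mu_i/(\mu_i+\lambda)\le \mu_i^{\alpha}\lambda^{-\alpha}$ for $\alpha\le 1$. Your treatment of the $\operatorname{ker} I_{\pi}$ component (each $\bar e_i$ vanishes $\pi$-a.e.\ and $I'$ is countable, so the whole sum vanishes for $\pi$-a.e.\ $x$) is the correct way to justify dropping that term.
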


The following Theorem is from \cite[Theorem $26$]{fischer2020sobolev}.
\begin{theo}\label{theo:ope_con_steinwart}
Bernstein's Inequality. Let $(\Omega, \mathcal{B}, P)$ be a probability space, $H$ be a separable Hilbert space, and $\xi: \Omega \rightarrow H$ be a random variable with
$$
\mathbb{E}_{P}\|\xi\|_{H}^{m} \leq \frac{1}{2} m ! \sigma^{2} L^{m-2}
$$
for all $m \geq 2$. Then, for $\tau \geq 1$ and $n \geq 1$, the following concentration inequality is satisfied
$$
P^{n}\left(\left(\omega_{1}, \ldots, \omega_{n}\right) \in \Omega^{n}:\left\|\frac{1}{n} \sum_{i=1}^{n} \xi\left(\omega_{i}\right)-\mathbb{E}_{P} \xi\right\|_{H}^{2} \geq 32 \frac{\tau^{2}}{n}\left(\sigma^{2}+\frac{L^{2}}{n}\right)\right) \leq 2 e^{-\tau}
$$
\end{theo}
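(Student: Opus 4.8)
The statement is the Pinelis--Sakhanenko / Yurinskii Bernstein inequality for sums of independent Hilbert-space--valued random variables, and the plan is to prove it by a Chernoff-type exponential-moment argument after centring. First I would pass from $\xi$ to the i.i.d.\ centred variables $\eta_i := \xi(\omega_i) - \mathbb{E}_P\xi$, so that $\frac1n\sum_{i=1}^n\xi(\omega_i) - \mathbb{E}_P\xi = \frac1n\sum_{i=1}^n\eta_i =: \frac1n S_n$. Using $\|\mathbb{E}_P\xi\|_H \le \mathbb{E}_P\|\xi\|_H$ (Jensen, i.e.\ the Bochner inequality) together with convexity of $t\mapsto t^m$, one gets $\mathbb{E}_P\|\eta_i\|_H^m \le 2^m\,\mathbb{E}_P\|\xi\|_H^m \le \tfrac{m!}{2}(4\sigma^2)(2L)^{m-2}$ for all $m\ge 2$, hence $\sum_{i=1}^n\mathbb{E}_P\|\eta_i\|_H^m \le \tfrac{m!}{2}\,V\,\Gamma^{m-2}$ with $V = 4n\sigma^2$ and $\Gamma = 2L$. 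The universal factors introduced here are later absorbed into the final constant $32$.

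The core is an exponential-moment bound for $\|S_n\|_H$. I would first prove the one-step inequality: for a centred $H$-valued $\eta$ with $\mathbb{E}\|\eta\|_H^m \le \tfrac{m!}{2}v\,\Gamma^{m-2}$, every $x\in H$ and every $0<\lambda<1/\Gamma$,
\[
\mathbb{E}\,\cosh\!\big(\lambda\|x+\eta\|_H\big) \;\le\; \cosh\!\big(\lambda\|x\|_H\big)\,\exp\!\Big(\tfrac{\lambda^2 v}{2(1-\lambda\Gamma)}\Big).
\]
Working with $\cosh$ rather than $\exp$ is the crucial point: because $\cosh$ is even, expanding in its power series and using the Hilbert-space identity $\|x+\eta\|_H^2 = \|x\|_H^2 + 2\langle x,\eta\rangle_H + \|\eta\|_H^2$, together with $\mathbb{E}\langle x,\eta\rangle_H = 0$ and Cauchy--Schwarz $|\langle x,\eta\rangle_H|\le\|x\|_H\|\eta\|_H$, eliminates the first-order term and leaves an expression involving only the moments $\mathbb{E}\|\eta\|_H^m$, which the Bernstein condition sums into the stated exponential. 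Iterating this bound over $i=1,\dots,n$ by conditioning successively on the first $i-1$ summands (tower property) yields $\mathbb{E}\,\cosh(\lambda\|S_n\|_H) \le \exp\!\big(\tfrac{\lambda^2 V}{2(1-\lambda\Gamma)}\big)$.

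From here the argument is routine. Markov's inequality applied to the increasing function $\cosh$ gives, for every $t>0$ and $0<\lambda<1/\Gamma$,
\[
P^n\big(\|S_n\|_H \ge t\big) \;\le\; \frac{\mathbb{E}\,\cosh(\lambda\|S_n\|_H)}{\cosh(\lambda t)} \;\le\; 2\exp\!\Big(-\lambda t + \tfrac{\lambda^2 V}{2(1-\lambda\Gamma)}\Big),
\]
and the choice $\lambda = t/(V+\Gamma t)$ produces the clean tail $P^n(\|S_n\|_H\ge t)\le 2\exp(-t^2/(2(V+\Gamma t)))$. Setting $t = ns$ and substituting $V=4n\sigma^2$, $\Gamma=2L$ gives $t^2/(2(V+\Gamma t)) = ns^2/(8\sigma^2+4Ls) \ge \min\{ns^2/(16\sigma^2),\,ns/(8L)\}$; requiring this to be at least $\tau$ and using $\tau\ge 1$ (so $\tau\le\tau^2$) shows it suffices to take $s^2 = 32\,\tfrac{\tau^2}{n}\big(\sigma^2 + \tfrac{L^2}{n}\big)$, which is exactly the event in the statement, the constant $32$ being chosen large enough to absorb both the centring constants and the factor in the elementary split $2(V+\Gamma t)\le\max\{4V,4\Gamma t\}$.

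\textbf{Main obstacle.} I expect the one-step exponential-moment lemma to be the only genuine difficulty: since the norm of $H$ is not differentiable at the origin, the scalar moment-generating-function computation does not transfer verbatim, and one must exploit the inner-product structure to cancel the first-order term, which is precisely why $\exp(\lambda\|\cdot\|_H)$ must be replaced by the even function $\cosh(\lambda\|\cdot\|_H)$; everything else (centring, telescoping, optimising $\lambda$, tracking constants) is bookkeeping. Alternatively one may simply invoke the Pinelis--Sakhanenko inequality for Hilbert-space--valued sums and carry out only the centring reduction and the concluding algebra, which is the route taken by the reference quoted in the statement.
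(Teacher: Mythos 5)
Your proposal is correct in strategy, but note that the paper does not prove this statement at all: it is imported verbatim as an auxiliary tool, citing Theorem 26 of \cite{fischer2020sobolev}, which in turn rests on exactly the Pinelis--Sakhanenko/Yurinskii-type Bernstein inequality for Hilbert-space-valued sums that you set out to reprove. So where the paper spends a citation, you sketch the classical proof: centre, establish the one-step $\cosh$ exponential-moment bound, telescope by the tower property, apply Markov to $\cosh(\lambda\|\cdot\|_H)$, and optimise $\lambda$. That is the standard route and it does deliver the theorem; the only genuine mathematical content is the one-step lemma, and there your sketch under-describes the work: after expanding $\|x+\eta\|_H^{2k}=(\|x\|_H^2+2\langle x,\eta\rangle_H+\|\eta\|_H^2)^k$ and killing the purely first-order terms by centredness, one is left with mixed terms $\langle x,\eta\rangle_H^{j}\|\eta\|_H^{2l}\|x\|_H^{2(k-j-l)}$ whose resummation against $\cosh(\lambda\|x\|_H)$ (odd powers of $\|x\|_H$ included) is precisely the Pinelis--Sakhanenko computation, not mere bookkeeping; your fallback of invoking that inequality directly is legitimate and is effectively what the paper does.

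One concrete slip in the closing algebra: with your centring-inflated constants $V=4n\sigma^2$, $\Gamma=2L$, the crude split $2(V+\Gamma t)\le 4\max\{V,\Gamma t\}$ gives on the second branch only $ns/(8L)\ge \sqrt{32}\,\tau/8=\tau/\sqrt{2}$, so as written you conclude $2e^{-\tau/\sqrt 2}$ rather than $2e^{-\tau}$ (that route would need the coefficient $64$ on the $L^2/n$ term). The stated constant $32$ is nevertheless sufficient: checking the requirement $t^2\ge 2\tau(V+\Gamma t)$ exactly with $t^2=32\tau^2(n\sigma^2+L^2)$ and $\tau\ge 1$ reduces to $8+16\sqrt{2}\le 32$, which holds. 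So replace the two-branch estimate by this direct verification and your argument closes with the claimed constant.
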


\begin{lma}\label{lma:effective_dim}
Suppose \eqref{asst:evd} holds. Then, there exists a $c>0$ such that:
$$
\mathcal{N}(\lambda)=\text{Tr}\left(C_{XX}\left(C_{XX}+\lambda Id_{\mathcal{H}_X}\right)^{-1}\right) \leq c \lambda^{-p}
$$
\end{lma}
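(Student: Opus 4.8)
The plan is to diagonalize and reduce the trace to a scalar series. By the spectral decomposition $C_{XX}=\sum_{i\in I}\mu_i\langle\cdot,\mu_i^{1/2}e_i\rangle_{\mathcal{H}_X}\,\mu_i^{1/2}e_i$ recalled in Section~\ref{sec:bg}, the operator $C_{XX}(C_{XX}+\lambda\operatorname{Id}_{\mathcal{H}_X})^{-1}$ is self-adjoint, positive, and diagonal in the orthonormal system $(\mu_i^{1/2}e_i)_{i\in I}$ with eigenvalues $\mu_i/(\mu_i+\lambda)$, so that
\[
\mathcal{N}(\lambda)=\operatorname{tr}\!\big(C_{XX}(C_{XX}+\lambda\operatorname{Id}_{\mathcal{H}_X})^{-1}\big)=\sum_{i\in I}\frac{\mu_i}{\mu_i+\lambda}.
\]
Everything then comes down to estimating this series under the eigenvalue-decay assumption \eqref{asst:evd}.

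I would first record the crude bound valid for all $p\in(0,1]$ and large $\lambda$: since $t/(t+\lambda)\le t/\lambda$, we have $\mathcal{N}(\lambda)\le\lambda^{-1}\sum_{i\in I}\mu_i=\lambda^{-1}\|C_{XX}\|_{S_1(\mathcal{H}_X)}$, and the right-hand side is finite by the trace-class identity stated in Section~\ref{sec:bg}. In particular this already gives $\mathcal{N}(\lambda)\le\|C_{XX}\|_{S_1(\mathcal{H}_X)}\lambda^{-p}$ for $\lambda\ge1$, and it settles the endpoint case $p=1$ outright. It then remains to treat $0<p<1$ for $\lambda\in(0,1)$. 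Here I would use that $t\mapsto t/(t+\lambda)$ is increasing together with \eqref{asst:evd} to write $\mu_i/(\mu_i+\lambda)\le c_2i^{-1/p}/(c_2i^{-1/p}+\lambda)$, and then, since $x\mapsto c_2x^{-1/p}/(c_2x^{-1/p}+\lambda)$ is decreasing, compare the series with an integral:
\[
\mathcal{N}(\lambda)\le\sum_{i\ge1}\frac{c_2i^{-1/p}}{c_2i^{-1/p}+\lambda}\le1+\int_0^\infty\frac{c_2x^{-1/p}}{c_2x^{-1/p}+\lambda}\,\mathrm{d}x.
\]
The substitution $x=(c_2/\lambda)^p\,t$ turns the integral into $(c_2/\lambda)^p\int_0^\infty(1+t^{1/p})^{-1}\,\mathrm{d}t$, and the remaining integral is a finite constant $c_p$ depending only on $p$ precisely because $1/p>1$ (it equals $p\pi/\sin(\pi p)$, though only finiteness matters). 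Hence $\mathcal{N}(\lambda)\le1+c_2^pc_p\lambda^{-p}\le(1+c_2^pc_p)\lambda^{-p}$ on $(0,1)$, and combining the two regimes with $c:=\max\{\|C_{XX}\|_{S_1(\mathcal{H}_X)},\,1+c_2^pc_p\}$ yields the claim.

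The argument is routine; the only genuine subtlety is that the tail of the series (equivalently the integral) converges only when $p<1$, which is why the endpoint $p=1$ must be handled via the trace-class bound rather than the integral comparison — consistent with the effective dimension degenerating to the ambient trace order $\lambda^{-1}$ there. An equivalent, integral-free route splits the index set at the threshold $i^\star\simeq(c_2/\lambda)^p$, bounding the summand by $1$ for $i\le i^\star$ and by $\mu_i/\lambda\le c_2i^{-1/p}/\lambda$ for $i>i^\star$; summing the tail of $i^{-1/p}$ gives the same $O(\lambda^{-p})$ rate and again breaks down exactly at $p=1$.
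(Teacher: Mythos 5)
Your proof is correct, and it is complete. The paper itself does not prove Lemma~\ref{lma:effective_dim}; it is stated as an auxiliary result imported from the literature (it is the standard effective-dimension bound, cf.\ Lemma 11 in \cite{fischer2020sobolev} and Caponnetto--De Vito), where the argument is exactly the one you give: diagonalize to get $\mathcal{N}(\lambda)=\sum_{i\in I}\mu_i/(\mu_i+\lambda)$ and compare with an integral under \eqref{asst:evd}. Your handling of the two regimes is sound -- the trace-class bound $\mathcal{N}(\lambda)\leq \lambda^{-1}\|C_{XX}\|_{S_1(\mathcal{H}_X)}$ (finite because $k_X$ is bounded, not because of \eqref{asst:evd}) covers $\lambda\geq 1$ and the endpoint $p=1$ for all $\lambda$, while the substitution giving $(c_2/\lambda)^p\int_0^\infty(1+t^{1/p})^{-1}\,\mathrm{d}t$ with a finite constant for $p<1$ covers the rest; the only cosmetic remark is that the trace is taken over a full ONB of $\mathcal{H}_X$, but the contribution from $\ker I_\pi$ vanishes, as you implicitly use.
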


\section{Well-specifiedness of the CME problem and discussion of some corner cases} \label{sec:well_spec}

As the CME has been redefined various times, the conditions
ensuring the existence of a closed-form solution
have been subject to various modifications. 
The purpose of this
section is to briefly investigate
the well-specifiedness assumptions in the 
operator-theoretic setting \cite{song2009hilbert,klebanov2020rigorous}
and in our kernel regression setting \cite{park2020measure}.
The connections between these assumptions
are rather complex
(we also refer to Section 5 of \cite{klebanov2021linear}
and to Section 2.4 of \cite{talwai2021sobolev}).

To recapitulate, well-specifiedness in the original 
operator-theoretic setting usually involves the requirement
\[
\tag{a}
\label{eq:well-specified-op}
\mathbb{E}[g(Y) \vert X = \cdot ] \in \mathcal{H}_X 
\text{ for all } g \in \mathcal{H}_Y,
\]
while well-specifiedness in the kernel regression setting
means that a representative of the $L_2$-function class
associated with the CME function is contained 
in the hypothesis space $\mathcal{G}$, which we write for simplicity as
\[
\tag{b}
\label{eq:well-specified-reg}
\mathbb{E}[\phi_Y(Y) \vert X = \cdot ] \in \mathcal{G}. 
\]
Before we discuss some corner cases, we first point out that condition (\ref{eq:well-specified-reg}) implies condition (\ref{eq:well-specified-op}). To see this, we notice that by Corollary~\ref{theo:operep}, we have $\mathbb{E}[\phi_Y(Y) \vert X = \cdot ] = C \phi_X(\cdot)$ for some $C \in S_2(\mathcal{H}_X,\mathcal{H}_Y)$. Therefore, for any $g \in \mathcal{H}_Y$, we have \[\mathbb{E}[g(Y) \vert X = \cdot ] = \langle g, C\phi_X(\cdot) \rangle_{\mathcal{H}_Y} = \langle C^*g, \phi_X(\cdot) \rangle_{\mathcal{H}_X}. \] It is easy to see that $C^*g \in \mathcal{H}_X$ for any $g \in \mathcal{H}_Y$, hence condition (\ref{eq:well-specified-op}) is satisfied. 

\paragraph{$Y = X$:} This is an example that condition (\ref{eq:well-specified-op}) does not
imply condition (\ref{eq:well-specified-reg}).
Let $\mathcal{G}$ be the vRKHS induced by the kernel
 \[K(x,x') = k_{X}(x,x')\text{Id}_{\mathcal{H}_{Y}}, x,x' \in E.\]
The first example is the special case where we have 
$k_Y = k_X$ as well as $X = Y$.
It is easy to see that this reduces the CME to
\[
\mathbb{E}[\phi_X(X) \vert X = \cdot ] = \phi_X(\cdot ).
\]
We can also verify that condition (\ref{eq:well-specified-op})
is satisfied in this case, as we have
$\mathbb{E}[g(X) \vert X = \cdot ] = g(\cdot) \in \mathcal{H}_X $.

Furthermore, 
it is clear that the identity operator $\operatorname{Id}_{\mathcal{H}_X}$ is
the correct operator-theoretic solution to the CME problem, as it
represents the CME in terms of
$\phi_X(\cdot) = \operatorname{Id}_{\mathcal{H}_X} \varphi_X(\cdot)$.
However, if $\mathcal{H}_X$ is infinite dimensional,
it is also clear that $\operatorname{Id}_{\mathcal{H}_X}$ is not
Hilbert--Schmidt and hence
\[
\operatorname{Id}_{\mathcal{H}_X} \phi_X(\cdot) 
\notin \mathcal{G} \simeq \mathcal{H}_X \otimes \mathcal{H}_X.
\]
Hence, according to condition (\ref{eq:well-specified-op}),
we have a well-specified setting,
while according to 
condition (\ref{eq:well-specified-reg}), we clearly have a
misspecified setting.

This example demonstrates that, without additional 
requirements, the well-specifiedness condition
(\ref{eq:well-specified-op}) allows cases where the
CME is represented by a bounded operator, while
condition (\ref{eq:well-specified-reg}) restricts
the class of admissible representative operators
to the Hilbert--Schmidt class.

\paragraph{$Y \perp X$:}
In this case, it is clear that \[
\mathbb{E}[\phi_Y(Y) \vert X = \cdot ] = \int_{E_Y}\phi_Y(y)d\nu(y) = \mu_Y .
\]
Similar to the previous case, neither condition (\ref{eq:well-specified-op}) nor  (\ref{eq:well-specified-reg}) are satisfied. Moreover, requiring that the CME is contained in $ [\mathcal{G}]^{\beta}$ amounts to require that $ \mathbb{E}(g(Y) \vert X = \cdot) \in [\mathcal{H}]_X^{\beta}$. However, when $Y$ is independent of $X$, we have $\mathbb{E}(g(Y) \vert X = \cdot) = \mathbb{E}(g(Y))$ which is a constant function. Since the constant function is included in $[\mathcal{H}]_X^{\beta}$ for $\beta = 0$, essentially independence between $Y$ and $X$ is equivalent to the case where the target CME is contained in $[\mathcal{G}]^0$.

\end{document}